\numberwithin{equation}{section}
\theoremstyle{plain}
\newtheorem{theorem}{Theorem}[section]
\newtheorem{corollary}{Corollary}[section]
\newtheorem{lemma}{Lemma}[section]
\newtheorem{proposition}{Proposition}[section]
\newtheorem{remark}{Remark}[section]
\newtheorem{property}{Property}[section]
\newtheorem{assumption}{Assumption}[section]
\tikzstyle{disent_latent} = [circle,pattern=north east lines, pattern color=black!20,draw=black,inner sep=1pt,
\DeclareMathOperator{\expect}{\mathbb{E}}
\DeclarePairedDelimiterX\MeijerM[3]{\lparen}{\rparen}%
{\begin{smallmatrix}#1 \\ #2\end{smallmatrix}\delimsize\Vert\,#3}
\newtheorem{definition}{Definition}[section]
\newcommand{\vecto}[1]{\boldsymbol{\mathbf{#1}}}
\renewcommand{\v}{\vecto}
\theoremstyle{plain}
\newcommand{\beq}{\begin{eqnarray}}
\newcommand{\eeq}{\end{eqnarray}}
\newcommand{\beqs}{\begin{eqnarray*}}
\newcommand{\eeqs}{\end{eqnarray*}}
\DeclareMathOperator*{\argmin}{arg\min}
\DeclareMathOperator{\sign}{sign}
\icmltitlerunning{Asymmetric Heavy Tails and Implicit Bias in Gaussian Noise Injections}
\begin{document}

\doparttoc
\faketableofcontents

\twocolumn[
\icmltitle{Asymmetric Heavy Tails and Implicit Bias in Gaussian Noise Injections}



\icmlsetsymbol{equal}{*}

\begin{icmlauthorlist}
\icmlauthor{Alexander Camuto}{equal,ox}
\icmlauthor{Xiaoyu Wang}{equal,fsu}
\icmlauthor{Lingjiong Zhu}{fsu}
\icmlauthor{Mert G\"{u}rb\"{u}zbalaban}{ru}
\icmlauthor{Chris Holmes}{ox}
\icmlauthor{Umut \c{S}im\c{s}ekli}{tpt}
\end{icmlauthorlist}

\icmlaffiliation{tpt}{
INRIA - D\'{e}partement d'Informatique de l'\'{E}cole Normale Sup\'{e}rieure - PSL Research University, Paris, France}
\icmlaffiliation{ox}{Alan Turing Institute, University of Oxford, Oxford, UK}
\icmlaffiliation{fsu}{Department of Mathematics, Florida State University, Tallahassee, USA}
\icmlaffiliation{ru}{Department of Management Science and Information Systems, Rutgers Business School, Piscataway, USA}

\icmlcorrespondingauthor{Alexander Camuto}{acamuto@turing.ac.uk}


\vskip 0.3in
]



\printAffiliationsAndNotice{\icmlEqualContribution}

\begin{abstract}
Gaussian noise injections (GNIs) are a family of simple and widely-used regularisation methods for training neural networks where one injects additive or multiplicative Gaussian noise to the network activations at every iteration of the optimisation algorithm, which is typically chosen as stochastic gradient descent (SGD).
In this paper we focus on the so-called `implicit effect' of GNIs, which is the effect of the injected noise on the dynamics of SGD.
We show that this effect induces an \emph{asymmetric heavy-tailed noise} on SGD gradient updates.
In order to model this modified dynamics, we first develop a Langevin-like stochastic differential equation that is driven by a general family of \emph{asymmetric} heavy-tailed noise.
Using this model we then formally prove that GNIs induce an `implicit bias', which varies depending on the heaviness of the tails and the level of asymmetry.
Our empirical results confirm that different types of neural networks trained with GNIs are well-modelled by the proposed dynamics and that the implicit effect of these injections induces a bias that degrades the performance of networks. 
\end{abstract}


\section{Introduction}
\label{intro}

 \begin{figure}[t!]
    \centering
    \includegraphics[width=0.45\textwidth]{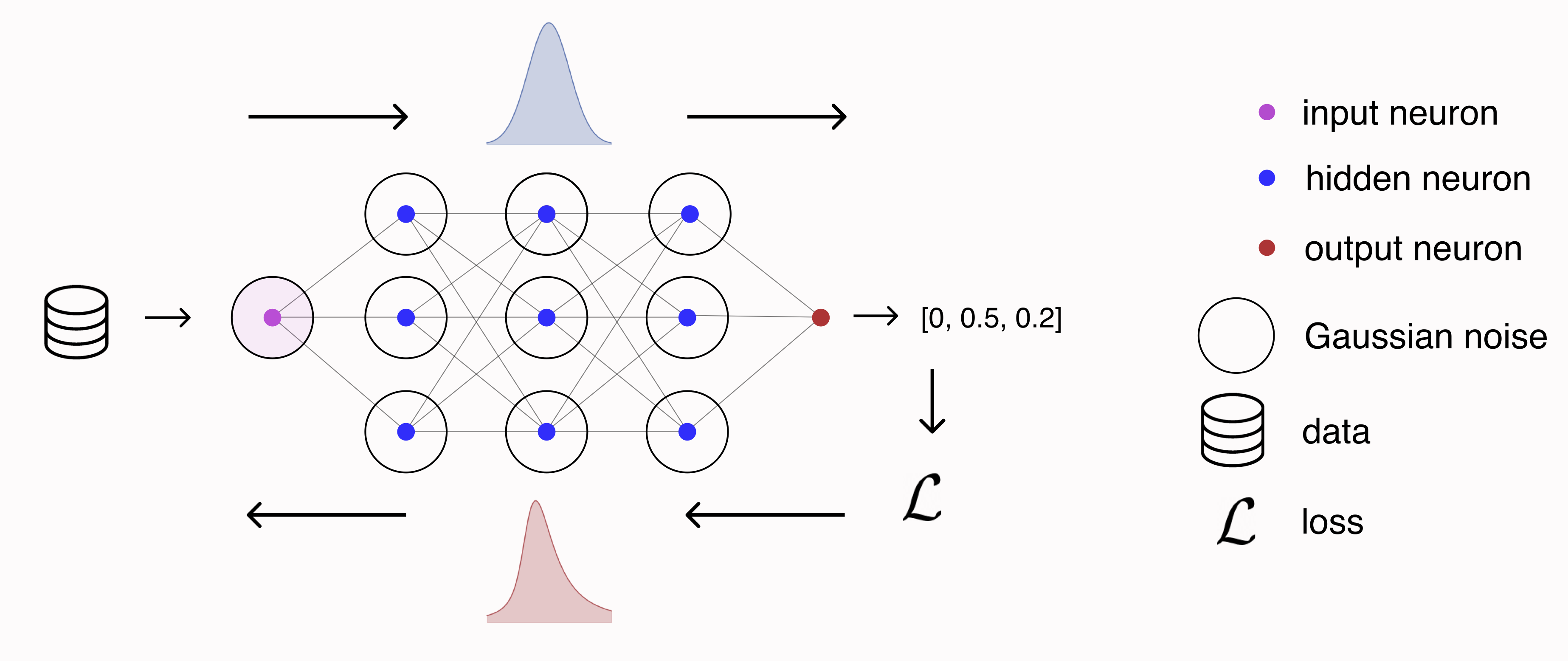}
   
    \caption{
    Illustration of the effect of GNIs \textit{added} to a network's activations. Each colored dot represents a neuron's activations.
    We add GNIs, represented as circles, to each layer's activations bar the output layer. 
    Perhaps counter-intuitively, though the forward pass experiences Gaussian noise, gradient updates in the backward pass experience heavy-tailed asymmetric noise.
    }
    \vspace{-10pt}
    \label{fig:illustration}
\end{figure}

Noise injections are a family of methods that involve adding or multiplying samples from a noise distribution to the weights and activations of a neural network during training.
The most commonly used distributions are Bernoulli distributions and Gaussian distributions \citep{Srivastava2014, Poole2014} and the noise is most often inserted at the level of network activations.

Though the regularisation conferred by Gaussian noise injections (GNIs) can be observed empirically, and there have been many studies on the benefits of noising data \citep{Bishop1995, Cohen2019, Webb1994}, the mechanisms by which these injections operate are not fully understood. 
Recently, the \emph{explicit} effect of GNIs, which is the added term to the loss function obtained when marginalising out the injected noise, has been characterised analytically \citep{Camuto2020_2}: it corresponds to a penalisation in the Fourier domain which improves model generalisation.

Here we extend this analysis and focus on the \textit{implicit} effect of GNIs. 
This is the effect of the remaining noise that has been marginalised out when studying the explicit effect. 
In particular we focus on the manner in which such noise alters the dynamics of Stochastic Gradient Descent (SGD) \citep{Wei2020, Zhang2019}. 
We show that the implicit effect is driven by an \emph{asymmetric heavy-tailed noise} on the SGD gradient updates, as illustrated in Figure~\ref{fig:illustration}.

To study the effect of this gradient noise, we model the dynamics of SGD for a network experiencing GNIs by a stochastic differential equation (SDE) driven by an \emph{asymmetric heavy-tailed} $\alpha$-stable noise.
We demonstrate that this model captures the dynamics of networks trained with GNIs
and we show that the stationary distribution of this process becomes arbitrarily distant from the so-called \textit{Gibbs measure}, whose modes exactly match the local minima of the loss function, as the gradient becomes increasingly heavy-tailed and asymmetric.
Heavy-tailed and asymmetric gradient noise thus degrades network performance and this suggests that models trained with the full effect of GNIs will underperform networks trained solely with the explicit effect. 
We confirm this experimentally for a variety of dense and convolutional networks.\footnote{See \url{https://github.com/alexander-camuto/asym-heavy-tails-bias-GNI} for all code.}


%

\section{Background}
\label{sec:background}


\textbf{Stable Distributions.} 
The Generalised Central Limit Theorem (GCLT) \citep{gnedenko1968limit} states that for a sequence of independent and identically distributed (i.i.d.) random variables whose distribution
has a power-law tail with index $0< \alpha <2$, 
the normalised sum converges to a heavy-tailed distribution called the $\alpha$-stable distribution ($\mathcal{S}_{\alpha}$) as the number of summands grows. 
An $\alpha$-stable distributed random variable $X$ is 
denoted by 
$X \sim \mathcal{S}_{\alpha}(\sigma,\theta,\mu)$, where $\alpha\in(0,2]$ is the \emph{tail-index}, $\theta\in[-1,1]$ is the \emph{skewness} parameter, $\sigma\geq 0$ is the \emph{scale} parameter, and $\mu\in\mathbb{R}$ is called the \emph{location} parameter.
The mean of $X$ coincides with $\mu$ if $\alpha>1$,
and otherwise the mean of $X$ is undefined.
In this work, we always assume $\mu=0$.
The parameter $\theta$ is a measure of asymmetry. We say that $X$ follows a \emph{symmetric} $\alpha$-stable distribution
denoted as
$\mathcal{S}\alpha\mathcal{S}(\sigma)=\mathcal{S}_{\alpha}(\sigma,0,0)$ if $\theta=0$ (and $\mu=0$).
The parameter $\alpha\in(0,2]$ determines
the tail thickness of the distribution, 
and $\sigma>0$ measures the spread
of $X$ around its mode. 
Note that when $\alpha<2$, $\alpha$-stable distributions 
have heavy tails such that their moments
are finite only up to the order $\alpha$.  

The probability density function (p.d.f.) of an $\alpha$-stable random variable, $\alpha\in(0,2]$,
does not have a closed-form expression except for a few special cases.
When $\alpha=1$ and $\alpha=2$, the symmetric $\alpha$-stable distribution
reduces to the Cauchy and the Gaussian distributions, respectively,
(cf.\ Section~1.1. in \cite{ST1994}).
By their flexibility, such distributions can model many complex stochastic phenomena for which exact analytic forms are intractable \citep{Sarafrazi2019, Fiche2013}.

\textbf{L\'{e}vy Processes.}
A L\'{e}vy process (motion) is a stochastic process with independent, stationary increments. 
Formally, $\v{L}_{t}$ is L\'{e}vy process if
\begin{enumerate}[label=(\roman*),noitemsep,topsep=0pt,leftmargin=*,align=left]
\item $\v{L}_{0}=0$ almost surely;
\item  For any $t_{0}<t_{1}<\cdots<t_{N}$, the increments $\v{L}_{t_{n}}-\v{L}_{t_{n-1}}$
are independent, $n=1,2,\ldots,N$;
\item  The difference $\v{L}_{t}-\v{L}_{s}$ and $\v{L}_{t-s}$
have the same distribution;
\item  $\v{L}_{t}$ is continuous in probability, i.e.
for any $\delta>0$ and $s\geq 0$, $\mathbb{P}(|\v{L}_{t}-\v{L}_{s}|>\delta)\rightarrow 0$
as $t\rightarrow s$.
\end{enumerate}
The $\alpha$-stable L\'{e}vy process is an important class of L\'{e}vy processes.
In particular, for $\alpha \in (0, 2]$, let $\v{L}_{t}^{\alpha,\theta}$ denote the $d$-dimensional $\alpha$-stable L\'{e}vy process with independent components, i.e. each component is an independent scalar $\alpha$-stable Levy motion \citep{duan2015}
such that $\v{L}_{t-s}^{\alpha,\theta}$ has the distribution 
$\mathcal{S}_{\alpha}((t-s)^{1/\alpha},\theta,0)$ for any $s<t$.


\textbf{Stochastic Gradient Descent and Differential Equations. }
Let $\mathcal{D}$ be a training dataset composed of data-label pairs of the form $(\v{x}, \v{y})$, and let $\v{w} \equiv \{\v{W}_1,\ldots,\v{W}_L\} \in \mathbb{R}^d$  be the $d$ parameters of an $L$ layer neural network in vector form. 
When a neural network operates on input data $\v{x}$, we obtain the activations ${\v{h}} \equiv \{{\v{h}}_0, \ldots , {\v{h}}_{L-1} \}$, where $\v{h}_{0}=\v{x}$ and ${\v{h}} \in \mathbb{R}^{n_0 + \dots + n_L}$ where $n_i$ is the number of neurons in the $i^{\mathrm{th}}$ layer: we consider a non-linearity $\kappa: \mathbb{R} \to \mathbb{R}$, 
$\v{h}_{i}(\v{x})= \kappa(\v{W}_i \v{h}_{i-1}(\v{x}))$,
where $\kappa$ is applied element-wise to each coordinate of its argument.
In supervised settings, our objective is to find the optimal parameters $\v{w}_*$ that minimise the negative log-likelihood $- \log p_{\v{w}}(\v{y}|\v{x})$ of the labels $\v{y}$, given the parameters $\v{w}$ and data $\v{x}$: 
\begin{align}
 & \v{w}_* = \argmin_{\v{w}} \mathcal{L}( \mathcal{D}; \v{w}),\nonumber
\\
& \mathcal{L}( \mathcal{D}; \v{w}):=- \mathbb{E}_{\v{x},\v{y} \sim \mathcal{D}}\left[ \log p_{\v{w}}(\v{y}|\v{x})\right]\,.
    \label{eq:marginal_likelihood}
\end{align}
SGD and its variants are the most prevalent optimisation routines that underpin the training of very large neural networks. 
Under SGD, we estimate equation~\eqref{eq:marginal_likelihood} by sampling a \emph{random} mini-batch of data-label pairs $\mathcal{B} \subset \mathcal{D}$,
\begin{align}\label{eq:marginal_likelihood_batched}
\mathcal{L}(\mathcal{B}; \v{w}) = -\mathbb{E}_{\v{x},\v{y} \sim \mathcal{B}} \left[\log p_{\v{w}}(\v{y}|\v{x})\right] \approx \mathcal{L}( \mathcal{D}; \v{w}).
\end{align}
SGD optimises this equation and approximates $\v{w}_*$ using iterative parameter updates. 
At training step $k$
\begin{align}\label{eq:sgd_updates}
\v{w}_{k+1} = \v{w}_{k} - \eta \nabla \mathcal{L}(\mathcal{B}_{k+1}; \v{w}_k),
\end{align}
where $\eta$ is the step-size for updates (the network's learning rate) \citep{Robbins1951, Ruder2016}. 

Studying the dynamics of SGD allows us to understand the subtle effects that batching may have on neural network training. 
The similarities between the SGD algorithm and Langevin diffusions \cite{roberts2002langevin} have inspired many studies modelling the dynamics of SGD using stochastic differential equations (SDEs) under different noise conditions \citep{Welling2011, Simsekli2019a,  Raginsky,Gao2018, Gao2020,Jastrzebski2017,pmlr-v70-li17f}. 
In this approach, one models the discrete SGD updates \eqref{eq:sgd_updates}, as the discretisation of a continuous-time stochastic process, making assumptions about the properties of the `noise' that drives this process \citep{Mandt2016, Jastrzebski2017}.
This noise stems from the stochasticity in approximating the `true' gradient over the dataset $\nabla \mathcal{L}(\mathcal{D}; \v{w}_k)$ with that of a mini-batch $\mathcal{B}$, $\nabla \mathcal{L}(\mathcal{B}; \v{w}_k)$. 
We denote this noise as, 
\begin{align}
U_{k+1}(\v{w}) :=  \nabla \mathcal{L}(\mathcal{D}; \v{w}_k)-\nabla_{\v{w}_{k}} \mathcal{L}(\mathcal{B}_{k+1}; \v{w}_k).
\label{eq:sgd_noise}
\end{align}
The most prevalent assumption is that the gradient noise admits a multi-variate Gaussian noise: 
$U_k(\v{w}) \sim \mathcal{N}(0, \sigma^2\v{I})$. 
This is rationalised by the central limit theorem, as the sum of estimation errors in equation~\eqref{eq:sgd_noise} is approximately Gaussian for sufficiently large batches. 
Under this assumption, we can rewrite the SGD parameter update as:
\begin{align}
\v{w}_{k+1} = \v{w}_{k} - \eta
\nabla\mathcal{L}(\mathcal{D}; \v{w}_k) + \eta U_{k+1}(\v{w}).
\end{align}
Then, one obtains the following continuous-time SDE to approximate the gradient updates \citep{WellingTeh, Mandt2016, Jastrzebski2017}: 
\begin{align}\label{eq-brownian-sde}
d\v{w}_t = -\nabla \mathcal{L}(\mathcal{D};\v{w}_t)dt + \sqrt{\eta \sigma^2}d\v{B}_{t},
\end{align}
where $\v{B}_t$ is the Brownian motion 
in $\mathbb{R}^{d}$ and $\sigma$ is the assumed noise variance for $U$. 
However, recent work suggests that the Gaussian assumption might not be always appropriate \cite{gsz2020heavy,hodgkinson2020multiplicative}, 
and connectedly, the gradient noise is observed to be heavy-tailed in different settings \citep{pmlr-v97-simsekli19a, zhou2020towards}.
Under this noise assumption, 
the corresponding SDE is such that $\v{B}_t$ in \eqref{eq-brownian-sde} is replaced with the \emph{symmetric} stable process $\v{L}_t^{\alpha,0}$ where
$\theta=0$:
 \begin{align}
 \label{eqn:sde_levysym}
d\v{w}_t = -\nabla \mathcal{L}(\mathcal{D};\v{w}_t)dt + \eta^{(\alpha-1)/\alpha} \sigma d\v{L}_t^{\alpha,0}. 
\end{align}
\textbf{Gaussian Noise Injections.} 
GNIs are regularisation methods that consist of injecting Gaussian noise to the network activations. More precisely, let $\v{\epsilon}$ be a collection of `noise vectors' 
injected to the network activations at each layer \textit{except the final layer}:  $\v{\epsilon} \equiv \{{\v{\epsilon}}_0, \ldots , {\v{\epsilon}}_{L-1} \}$, where $\v{\epsilon}_i \in \mathbb{R}^{n_i}$, $\v{\epsilon} \in \mathbb{R}^{n_0 + \dots + n_{L-1}}$ and
$n_i$ is the number of neurons in the $i^{\mathrm{th}}$ layer.
We have two values for an activation: the soon-to-be noised value $\widehat{\v{h}}_{i}$,  and the subsequently noised value $\widetilde{\v{h}}_{i}$. 
For a multi-layer perceptron (MLP),
\begin{align}
\widehat{\v{h}}_{i}(\v{x})= \kappa\left(\v{W}_{i}\widetilde{\v{h}}_{i-1}(\v{x})\right)\,, \qquad  
\widetilde{\v{h}}_{i} = \widehat{\v{h}}_{i} \circ \v{\epsilon}_{i}\,, 
\label{eq:noise_recursion}
\end{align}
where $\circ$ is some element-wise operation (e.g., addition or multiplication).
For additive GNIs typically, $\v{\epsilon}_i \sim \mathcal{N}\left(0,\sigma_i^2 \v{I}\right)$, and for multiplicative GNIs we often use $\v{\epsilon}_i \sim \mathcal{N}\left(1,\sigma_i^2 \v{I}\right)$, 
where $\mathcal{N}$ is the Gaussian distribution. The `accumulated' noise at each layer, induced by the noise injected to the layer \textit{and} in previous layers is
\begin{align} \label{eq:accum_noise}
    \mathcal{E}_i(\v{x}; \v{w}, \v{\epsilon}) = \widetilde{\v{h}}_i(\v{x}) - \v{h}_i(\v{x}).
\end{align}
Given that GNIs are commonly used as regularisation methods \cite{Camuto2020_2, dieng2018noisin, Srivastava2014, Poole2014, Kingma, Bishop1995}, our goal is to understand better the mechanisms by which they affect neural networks.



\section{The Implicit Effect of GNIs}\label{sec:impliciteffect}


Recently, \citet{Camuto2020_2} showed that the effect of GNIs on the cost function can be expressed as a term $\Delta\mathcal{L}(\mathcal{B};\v{w},\v{\epsilon})$ that is added to the loss, i.e.,
\begin{align}
\widetilde{\mathcal{L}}(\mathcal{B};\v{w}, \v{\epsilon}) :=  \mathcal{L}(\mathcal{B}; \v{w}) + \Delta\mathcal{L}(\mathcal{B};\v{w},\v{\epsilon})\,,
\end{align}
where $\widetilde{\mathcal{L}}$ is the modified loss that SGD ultimately aims to minimise. The term $\Delta \mathcal{L}$ can be further broken down into explicit and implicit effects, as described in Section~\ref{intro}. 

We build on the approach of \citet{Camuto2020_2} and define the explicit effect as the additional term obtained on the loss when we marginalise out the noise we have injected.
It offers a consistently positive objective for gradient descent to optimise and we denote it as 
$\expect_{\v{\epsilon}} (\Delta \mathcal{L}(\mathcal{B};\v{w},\v{\epsilon}))$.
The implicit effect is then the remainder of the terms marginalised out in the explicit effect:
\begin{align}\label{defn:implicit}
E_\mathcal{L}(\mathcal{B};\v{w},\v{\epsilon}) := \Delta\mathcal{L}(\mathcal{B};\v{w},\v{\epsilon}) - \expect_{\v{\epsilon}} (\Delta \mathcal{L}(\mathcal{B};\v{w},\v{\epsilon})).
\end{align}
While the explicit effect focuses on the consistent and \textit{non-stochastic} regularisation induced by GNIs, the implicit effect instead studies the effect of the \textit{inherent stochasticity} of GNIs. 
This term does not offer a consistent objective for SGD to minimise. 
Rather we show that it affects neural network training by way of the  \textit{heavy-tailed and skewed} noise it induces on gradient updates. 

Subsequently, we first characterise the tail properties of the noise accumulated during the forward pass. 
As this accumulated noise defines the implicit effect we can then use this result to show that gradients induced by the implicit effect are heavy-tailed and skewed and apt to be modelled by heavy-tailed and asymmetric $\alpha$-stable noise. 

\textbf{The Properties of the Accumulated Noise.} 
Before considering the properties of the implicit effect gradients, we first need to study the noise that is accumulated during the \textit{forward pass} of a neural network experiencing GNIs. 
Here we use asymptotic analysis to bound the moments of this noise, allowing us to characterise the tail properties of gradients using the broad class of \textit{sub-Weibull} distributions that include a range of heavy-tailed distributions  \citep{Vladimirova2019UnderstandingLevel, Vladimirova2020, Kuchibhotla2018}. 
\begin{definition}{(Asymptotic order)}
A positive sequence $a_m$ is of the same order as another positive sequence $b_m$ ($a_m \lesssim b_m$) if $\exists C>0$ such that $\frac{a_m}{b_m} \leq C ~ \forall m \in \mathbb{N}$.
$a_m$ is of the same order of magnitude as $b_m$ ($a_m \asymp b_m$, i.e. `asymptotically equivalent') if there exist some $c,C>0$ such that: $c \leq \frac{a_m}{b_m} \leq C$ for any $m \in \mathbb{N}$.
\end{definition}

\begin{definition}{(Sub-Weibull distributions),
:}
We say that a random variable $X$ is sub-Weibull \citep{Vladimirova2020, Kuchibhotla2018} with tail parameter $r$
if $\|X\|_m \lesssim m^r, r >0$, where $\|X\|_m := \expect[|X|^m]^{\frac{1}{m}}$. In this case, we write $X \sim \mathrm{subW}(r)$. Such distributions satisfy the tail bound
    $\mathbb{P}(|X| > x) \leq 2 e^{-( \frac{x}{C})^\frac{1}{r}}$,
where $C>0 $ is some constant.  
Note that for $r=\frac{1}{2}$ and $r=1$ we recover the sub-Gaussian and sub-exponential families and that if $X \sim \mathrm{subW}(r')$, then $X \sim \mathrm{subW}(r)$ for $r>r'$.
\end{definition}

As $r$ increases the tail distribution becomes heavier-tailed.
To simplify our analysis we consider activation functions $\phi$ that obey the extended envelope property. 

\begin{definition}[Extended Envelope Property \citep{Vladimirova2019UnderstandingLevel}:] \label{defn:extended-envelope}
A non-linear function $\kappa:\mathbb{R} \to \mathbb{R}$ is said to obey the \textit{extended envelope property} if $\exists~c_1,c_2 \geq 0, ~ d_1, d_2 \geq 0 $ such that: 
\begin{itemize}
    \item $|\kappa(x)| \geq c_1 +d_1|x|$, for any $x\in \mathbb{R}^+$ or $x\in \mathbb{R}^-$;
    \item $|\kappa(x)| \leq c_2 +d_2|x|$, for any $x\in \mathbb{R}$.
\end{itemize}
\end{definition}

Activation functions that obey this property, such as $\mathrm{ReLU}$, are broadly moment preserving \citep{Vladimirova2020}. 
Using this property, we can characterise the moments for the  $l^\mathrm{th}$ noised activation in a layer $i$, $\widetilde{h}_{i,l}(\v{x})$. 

\begin{lemma}\label{prop:accum_tail_properties}
    For feed-forward neural networks with an activation function $\phi$ that obeys the extended envelope property, the noised activations at each layer $i < L-1$, resulting from additive-GNIs $\v{\epsilon}$ obey
    \begin{align}
        \left\Vert\widetilde{h}_{i,l}(\v{x})\right\Vert_m \lesssim \sqrt{m},  \quad \text{for any $m \geq 1$}; \ l= 1,\dots,n_i\,. \nonumber
    \end{align}
    For multiplicative-GNIs we have,
    \begin{align}
        \left\Vert\widetilde{h}_{i,l}(\v{x})\right\Vert_m \lesssim m^{\frac{i+1}{2}},  \quad \text{for any $m \geq 1$}; \ l= 1,\dots,n_i\,, \nonumber
    \end{align}
where $n_i$ is the dimensionality of the $i^\mathrm{th}$ layer.
\end{lemma}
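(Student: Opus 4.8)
The plan is to prove both bounds by induction on the layer index $i$, pushing $L^m$-norm estimates through the recursion \eqref{eq:noise_recursion}. The ingredients are all elementary: Minkowski's inequality $\|X+Y\|_m \le \|X\|_m + \|Y\|_m$ (for the additive noise, and for the linear pre-activations $\v W_i\widetilde{\v h}_{i-1}$); the multiplicativity $\|XY\|_m = \|X\|_m\,\|Y\|_m$ for \emph{independent} $X,Y$ (for the multiplicative noise $\v\epsilon_i$, which is independent of the activation it multiplies since $\widehat{\v h}_i$ is a function of $\v\epsilon_0,\dots,\v\epsilon_{i-1}$ only); the envelope bound $|\kappa(x)| \le c_2 + d_2|x|$, which with Minkowski gives $\|\kappa(Y)\|_m \le c_2 + d_2\|Y\|_m$; and the Gaussian moment estimate $\|Z\|_m \lesssim \sqrt m$ for $Z\sim\mathcal N(\nu,\sigma_i^2)$ with $\nu\in\{0,1\}$ (any Gaussian is sub-Gaussian, or directly via Stirling applied to $\mathbb E|Z|^m$). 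Since $L$, the widths $n_i$ and the weight matrices $\v W_i$ are fixed, any finite product of these is an absolute constant absorbed into $\lesssim$. Note only the \emph{upper} half of the extended envelope property is needed here.

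The single recursive step is this: for one pre-activation coordinate, $(\v W_i\widetilde{\v h}_{i-1})_l = \sum_{j} (\v W_i)_{lj}\,\widetilde h_{i-1,j}$, so Minkowski gives $\|(\v W_i\widetilde{\v h}_{i-1})_l\|_m \le \big(\sum_j |(\v W_i)_{lj}|\big)\,\max_j\|\widetilde h_{i-1,j}\|_m \lesssim \max_j\|\widetilde h_{i-1,j}\|_m$, and then the envelope bound yields $\|\widehat h_{i,l}\|_m \lesssim 1 + \max_j\|\widetilde h_{i-1,j}\|_m$. The base case is the first noised layer, where $\widetilde h_{0,l} = \widehat h_{0,l}\circ\epsilon_{0,l}$ with $\widehat h_{0,l}$ a deterministic function of the fixed input $\v x$, so $\|\widehat h_{0,l}\|_m$ is a constant.

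For additive GNIs, the base case gives $\|\widetilde h_{0,l}\|_m \le \|\widehat h_{0,l}\|_m + \|\epsilon_{0,l}\|_m \lesssim 1 + \sqrt m \lesssim \sqrt m$ for $m\ge 1$. For the inductive step, $\widetilde h_{i,l} = \widehat h_{i,l} + \epsilon_{i,l}$, so $\|\widetilde h_{i,l}\|_m \le \|\widehat h_{i,l}\|_m + \|\epsilon_{i,l}\|_m \lesssim 1 + \max_j\|\widetilde h_{i-1,j}\|_m + \sqrt m \lesssim \sqrt m$, using the hypothesis $\|\widetilde h_{i-1,j}\|_m \lesssim \sqrt m$ and $1\lesssim\sqrt m$; the $\lesssim$-constant grows with $i$ but stays finite since there are only $L$ layers. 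For multiplicative GNIs, the base case gives $\|\widetilde h_{0,l}\|_m = \|\widehat h_{0,l}\|_m\,\|\epsilon_{0,l}\|_m \lesssim \sqrt m = m^{(0+1)/2}$, and the inductive step gives $\|\widetilde h_{i,l}\|_m = \|\widehat h_{i,l}\|_m\,\|\epsilon_{i,l}\|_m \lesssim \big(1+\max_j\|\widetilde h_{i-1,j}\|_m\big)\sqrt m \lesssim m^{i/2}\cdot m^{1/2} = m^{(i+1)/2}$, using $\|\widetilde h_{i-1,j}\|_m\lesssim m^{i/2}$ and $1\lesssim m^{i/2}$ for $m\ge 1$. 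The same argument applies to convolutional layers, a convolution being a linear map with fixed coefficients.

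The main obstacle is not any single estimate but keeping the bookkeeping airtight: verifying that the $\lesssim$-constants, which depend on $c_2,d_2$, the weights, the widths and (via the base case) on $\v x$, remain finite after $L$ compositions; confirming the independence used in the multiplicative step (the noise at layer $i$ is drawn fresh, hence independent of every earlier activation); and handling small $m$, where the additive constant $c_2$ and the $\|\widehat h_{0,l}\|_m$ from the base case must be absorbed through $1\lesssim\sqrt m$ (resp. $1\lesssim m^{(i+1)/2}$) for $m\ge 1$.
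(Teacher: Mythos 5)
Your proof is correct, and it follows the same overall strategy as the paper: a layer-by-layer induction that propagates $L^m$-norm bounds through the recursion, using Minkowski for the linear pre-activations and the additive noise, the Gaussian moment estimate $\|Z\|_m \lesssim \sqrt{m}$ for the injected noise, and a product estimate for the multiplicative case, with the induction gaining a factor $\sqrt{m}$ per multiplicatively-noised layer. The two places where you deviate are worth noting. First, at the non-linearity you use only the upper envelope bound, $\|\kappa(Y)\|_m \le c_2 + d_2\|Y\|_m$ via Minkowski, whereas the paper invokes the moment-preservation lemma of Vladimirova et al.\ ($\|\kappa(X)\|_m \asymp \|X\|_m$), which needs both halves of the envelope property and a tail-comparability condition $\|X_+\|_m \asymp \|X_-\|_m$ on the pre-activations; since the lemma only asserts an upper bound ($\lesssim$), your route needs strictly weaker hypotheses at this step and sidesteps verifying that comparability condition (the two-sided machinery is only needed for the tightness claim in the paper's Remark, not for the lemma itself). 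Second, for the multiplicative noise you use exact multiplicativity $\|XY\|_m = \|X\|_m\|Y\|_m$, justified by the independence of the freshly drawn $\v{\epsilon}_{i}$ from $\widehat{h}_{i,l}$ (a function of $\v{\epsilon}_0,\dots,\v{\epsilon}_{i-1}$ only), whereas the paper applies H\"older's inequality, $\|XY\|_m \le \|X\|_{2m}\|Y\|_{2m}$; both give the same asymptotic order $m^{r+\frac12}$, your version with marginally better constants and the paper's without needing independence. Your bookkeeping of the constants (dependence on $c_2,d_2$, the weights, the widths, $\v{x}$, and finiteness after $L$ compositions) and the handling of small $m$ are sound.
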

Lemma \ref{prop:accum_tail_properties} shows that when the injected noise is additive,
the noised activations at each layer will have \textit{(sub)-Gaussian tails}. 
By equation~\eqref{eq:accum_noise}, the accumulated noise $\mathcal{E}_{i}(\v{x}; \v{w}, \v{\epsilon})$ will also have sub-Gaussian tails as the non-noised activations $\v{h}(\v{x})$ are deterministic and do not affect the asymptotic relationships of moments. 
See Figure~\ref{fig:forward_pass} of the Appendix for a demonstration  that the activations experience Gaussian-like noise for additive-GNIs.
For multiplicative-GNIs the noise at each layer, except the input layer which experiences Gaussian noise, behaves with a sub-Weibull tail. 
We demonstrate this behaviour in Figure~\ref{fig:forward_pass_mult} of the Appendix. 

We can now study the properties of the gradient noise induced by the implicit effect by taking the gradients of this forward pass noise. 

\textbf{Kurtosis of The Gradient Noise.}
We characterise the gradient noise corresponding to $W_{i,l,j}$, the weight that maps from neuron $l$ in layer $i-1$ to neuron $j$ in layer $i$.
\begin{theorem}\label{thm:gn_tail_properties}
    Consider a feed-forward neural network with an activation function $\phi$ that obeys the extended envelope property and a cross-entropy or mean-squared-error cost (see  Appendix~\ref{sec:costfunctions}). The gradient noise from additive-GNIs $\v{\epsilon}$, has zero mean and has moments that obey for a pair $(\v{x}, \v{y})$:
    \begin{align*}
         &\left\Vert\frac{\partial E_{\mathcal{L}}((\v{x}, \v{y}); \v{w}, \v{\epsilon})}{\partial W_{i,l,j}}\right\Vert_m
        \lesssim m,  \quad \text{for any $m \geq 1$},
    \end{align*}
    where $E_{\mathcal{L}}((\v{x}, \v{y}); \v{w}, \v{\epsilon})$ is defined in \eqref{defn:implicit}.
    For multiplicative GNIs, we have 
    \begin{align*}
         &\left\Vert\frac{\partial E_{\mathcal{L}}((\v{x}, \v{y}); \v{w}, \v{\epsilon})}{\partial W_{i,l,j}}\right\Vert_m
        \lesssim m^\frac{L+i}{2},  \quad\text{for any $m \geq 1$}, \\
        &i = 1, \dots, L;\ l=1,\dots, n_{i-1}; \ j=1,\dots,n_i. 
        \nonumber
    \end{align*}
\end{theorem}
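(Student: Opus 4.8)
The plan is to reduce the statement to a moment estimate for the weight gradient of the \emph{noised} loss $\widetilde{\mathcal{L}}$ and then to propagate the sub-Weibull control of Lemma~\ref{prop:accum_tail_properties} through backpropagation. Since $\widetilde{\mathcal{L}} = \mathcal{L} + \Delta\mathcal{L}$ and $\mathcal{L}$ does not depend on $\v{\epsilon}$, the implicit effect collapses to $E_\mathcal{L} = \widetilde{\mathcal{L}} - \expect_{\v{\epsilon}}[\widetilde{\mathcal{L}}]$, so, after a routine dominated-convergence check allowing differentiation under $\expect_{\v{\epsilon}}$,
\[
\frac{\partial E_\mathcal{L}}{\partial W_{i,l,j}} \;=\; \frac{\partial \widetilde{\mathcal{L}}}{\partial W_{i,l,j}} \;-\; \expect_{\v{\epsilon}}\!\left[\frac{\partial \widetilde{\mathcal{L}}}{\partial W_{i,l,j}}\right].
\]
This has zero $\v{\epsilon}$-mean by construction (giving the zero-mean part of the claim), and Minkowski's inequality gives $\|\partial_{W_{i,l,j}} E_\mathcal{L}\|_m \le 2\,\|\partial_{W_{i,l,j}} \widetilde{\mathcal{L}}\|_m$, so it suffices to bound the $m$-th moment of the noised-network gradient.

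Backpropagation through the noised recursion~\eqref{eq:noise_recursion} factorises this gradient as $\partial_{W_{i,l,j}}\widetilde{\mathcal{L}} = \delta_{i,j}\,\widetilde h_{i-1,l}(\v{x})$, where $\delta_{i,j}$ is the backpropagated error at the pre-activation of neuron $j$ in layer $i$. Lemma~\ref{prop:accum_tail_properties} controls the forward factor directly: $\|\widetilde h_{i-1,l}\|_m \lesssim \sqrt m$ for additive GNIs and $\|\widetilde h_{i-1,l}\|_m \lesssim m^{i/2}$ for multiplicative GNIs. For $\delta_{i,j}$ I would unroll the backward recursion $\delta_{k,j} = \kappa'(\cdot)\,\epsilon_{k,j}\sum_{j'} W_{k+1,j,j'}\,\delta_{k+1,j'}$ (the factor $\epsilon_{k,j}$ appearing only for multiplicative GNIs) down to the output layer, where the base term is the cost gradient at the output: bounded by a constant for cross-entropy with a softmax readout, and affine in the noised last-hidden activations for mean-squared error, the latter being $\lesssim\sqrt m$ (additive) resp.\ $\lesssim m^{L/2}$ (multiplicative) by the same estimate as Lemma~\ref{prop:accum_tail_properties} --- this is where the two cost functions are treated separately (see Appendix~\ref{sec:costfunctions}). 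The intervening weight matrices are deterministic, $\kappa'$ is bounded, and for multiplicative GNIs each traversed layer contributes one sub-Gaussian factor $\epsilon_{k,\cdot}$. Combining these with the sub-Weibull product rule ($X\sim\mathrm{subW}(r_1),\,Y\sim\mathrm{subW}(r_2)\Rightarrow XY\sim\mathrm{subW}(r_1+r_2)$, from $\|XY\|_m\le\|X\|_{2m}\|Y\|_{2m}$), the fact that finite sums of $\mathrm{subW}(r)$ variables stay $\mathrm{subW}(r)$, and the invariance of the index under multiplication by bounded or deterministic factors, one gets $\delta_{i,j}\sim\mathrm{subW}(1/2)$ in the additive case and $\delta_{i,j}\sim\mathrm{subW}(L/2)$ in the multiplicative case. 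Multiplying by the forward factor then yields $\partial_{W_{i,l,j}}\widetilde{\mathcal{L}}\sim\mathrm{subW}(1)$, i.e.\ $\lesssim m$, for additive GNIs, and $\partial_{W_{i,l,j}}\widetilde{\mathcal{L}}\sim\mathrm{subW}\!\big(\tfrac{i}{2}+\tfrac{L}{2}\big)=\mathrm{subW}\!\big(\tfrac{L+i}{2}\big)$, i.e.\ $\lesssim m^{(L+i)/2}$, for multiplicative GNIs; with the reduction above this proves the theorem.

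The main obstacle is the multiplicative case in this last step. Applying Hölder naively to every factor of the unrolled backprop over-counts: the injected noise vectors $\v{\epsilon}_0,\dots,\v{\epsilon}_{L-1}$ enter both the noised activations (hence the output cost-gradient term) and, as explicit factors, the backward Jacobian, so a blanket bound inflates the exponent far beyond $(L+i)/2$ (to the order of $m^{L}$). The crux is therefore a careful accounting of which \emph{independent} noise vectors genuinely appear in $\delta_{i,j}$ as opposed to in $\widetilde h_{i-1,l}$, and applying Hölder only across those, so that the tail of $\delta_{i,j}$ is pinned to that of the accumulated \emph{output} noise $\mathcal{E}_L\sim\mathrm{subW}(L/2)$ rather than compounding layer by layer. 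The remaining ingredients --- integrability for the interchange of $\partial_{W_{i,l,j}}$ and $\expect_{\v{\epsilon}}$, and using the extended-envelope upper bound together with boundedness of $\kappa'$ to pass $\kappa$ and $\kappa'$ through the moment estimates --- are routine.
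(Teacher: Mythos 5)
Your overall strategy is the same as the paper's: reduce to the chain-rule factorisation of the weight gradient into a backpropagated error times a local factor involving $\widetilde{h}_{i-1,l}$, treat the output-layer base case separately for cross-entropy (bounded, hence $\mathrm{subW}(0)$, cf.\ Lemma~\ref{prop:boundedmoments}) and mean-squared error (tails tied to the accumulated output noise via Lemma~\ref{prop:accum_tail_properties}), push the bound backwards using boundedness of $\kappa'$ from the extended envelope property, Minkowski for the sums (Lemma~\ref{lem:expmomentssum}) and H\"older for the products, and finally multiply the two factors. Your preliminary reduction ($E_\mathcal{L}=\widetilde{\mathcal{L}}-\expect_{\v{\epsilon}}[\widetilde{\mathcal{L}}]$, zero mean by construction, and $\|\partial E_\mathcal{L}\|_m\le 2\|\partial\widetilde{\mathcal{L}}\|_m$) is just a repackaging of the paper's observation that centering does not change the asymptotics of $\|\cdot\|_m$. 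For additive GNIs your argument is complete and matches the paper's proof essentially line by line, landing on $\mathrm{subW}(1/2)\times\mathrm{subW}(1/2)\Rightarrow\lesssim m$.

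The gap is the multiplicative case, and you name it yourself: you assert $\delta_{i,j}\sim\mathrm{subW}(L/2)$ for every layer $i$, acknowledge that a blanket H\"older over the unrolled backward pass instead compounds the exponent (the injected $\v{\epsilon}_k$ enter both the output error through $\mathcal{E}_L$ and, as explicit factors $\epsilon_{k,j}$, the backward Jacobians and the local derivative $\partial\widetilde{h}_{i,j}/\partial W_{i,l,j}=\epsilon_{i,j}\,\kappa'(\cdot)\,\widetilde{h}_{i-1,l}$), and then defer the resolution to ``a careful accounting of which independent noise vectors genuinely appear'' without carrying it out. As written, the multiplicative bound $\lesssim m^{(L+i)/2}$ is therefore claimed, not derived: the only estimate you actually justify is the compounded one of order $m^{L}$-ish, which is strictly weaker than the theorem. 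You should know that the paper's own proof does not close this step either: after establishing $\|\partial E_\mathcal{L}/\partial\widetilde{h}_{L,j}\|_m\lesssim m^{L/2}$ it simply says ``repeating the analysis done for the additive case,'' i.e.\ it treats the multiplicative backward Jacobian as if it were a bounded-times-deterministic factor (and writes the local derivative without the $\epsilon_{i,j}$ factor), so the per-layer noise factors you worry about are silently dropped rather than accounted for. So your attempt follows the intended route and is more explicit than the source about where the difficulty lies, but it does not constitute a proof of the multiplicative half of Theorem~\ref{thm:gn_tail_properties}; to complete it you would have to actually perform the dependence bookkeeping you sketch (or accept the paper's implicit assumption that the multiplicative Jacobian factors are moment-preserving), and it is not obvious that this bookkeeping can recover the exponent $(L+i)/2$ rather than something larger.
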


For the additive noise, these bounds infer that the gradient noise at each layer will have \textit{sub-exponential tails}.
For the multiplicative case, gradient noise will be \textit{sub-Weibull}, with a tail parameter that increases with $i$ the layer index.  
Unlike the forward pass, which experienced noise bounded in its tails by a Gaussian, the backward pass experiences noise that is bounded in its tails by heavy-tailed Weibull distributions with tail parameter $r\geq 1$. 

\begin{remark}
The bounds defined by Lemma~\ref{prop:accum_tail_properties} and Theorem~\ref{thm:gn_tail_properties} become an asymptotic equivalence ($\asymp$) in the case of 1-D data and 1-neuron-wide neural networks, i.e. the bounds are maximally tight in this case.
\end{remark}




Our result applies to the gradient for a single pair $(\v{x}, \v{y})$. 
During SGD, we take the mean gradient across a batch $\mathcal{B}$ of size $B$.
To study the tail distribution of this mean gradient, we restate in a simplified manner \citet{Kuchibhotla2018}'s generalisation of the Bernstein inequality for zero-mean sub-Weibull random variables in Theorem~\ref{thm:bersteinweibull}.

\begin{theorem}[Theorem~3.1 in \citet{Kuchibhotla2018}]\label{thm:bersteinweibull}
Let $X_1,\dots,X_B$ be independent mean-zero sub-Weibull random variables with a shared tail parameter $p \geq 1$. 
Then, for every $x\geq0$, we have
\begin{align*}
&\mathbb{P}\left\{ \left|(1/B)\sum\nolimits_{i=1}^B X_i \right| \geq x \right\} \\ 
&\leq 2\exp\left[-\min \left(\frac{Bx^2}{C \sum^B_{i=1}\|X_i\|^2_{\psi 2}}, \frac{ Bx^\frac{1}{\theta}}{L\max_i \|X_i\|_{\psi \frac{1}{p}}} \right) \right],
\end{align*}
where $C, L>0$ are constants that depend on the tail parameter and where 
\begin{equation}
\|X\|_{\psi \frac{1}{p}}= \inf \left\{\nu\geq 0: \expect \left[(|X|/\nu)^\frac{1}{p}
\leq 1\right]\right\} 
\end{equation}
is the `sub-Weibull norm'. 
\end{theorem}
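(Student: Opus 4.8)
The statement is a two–regime (Bernstein–type) concentration inequality for sums of heavy–tailed but light–enough–for–moments random variables, and the plan is the classical high–moment route. The one observation that makes it go through is that a sub-Weibull$(p)$ variable has \emph{all} moments, growing only polynomially: the defining condition $\|X_i\|_m\lesssim m^p$ can be written as $\|X_i\|_m\lesssim m^{\,p}\,\|X_i\|_{\psi\frac{1}{p}}$, with the sub-Weibull norm of the statement as the natural scale. Hence one can run a Markov argument on a large moment rather than a (divergent) moment–generating–function argument.

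First I would control the $m$-th moment of the sum by a Rosenthal/Latała-type inequality for independent mean-zero summands: for any $m\ge 2$,
\[
\Bigl\|\sum_{i=1}^B X_i\Bigr\|_m \;\lesssim\; \sqrt m\,\Bigl(\sum_{i=1}^B\|X_i\|_2^2\Bigr)^{1/2} \;+\; m^{\,p}\,\Bigl(\sum_{i=1}^B\|X_i\|_{\psi\frac{1}{p}}^{\,m}\Bigr)^{1/m},
\]
where the first (``Gaussian'') term uses only the $L^2$ norms (and $\|X_i\|_2\lesssim\|X_i\|_{\psi\frac{1}{p}}$), while the second (``heavy'') term is the one that feels the tails and is at most $m^{\,p}B^{1/m}\max_i\|X_i\|_{\psi\frac{1}{p}}$. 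Equivalently one can derive the same estimate by truncating each $X_i$ at a level $M$, applying the bounded-variable Bernstein bound to the centred truncated parts $X_i\mathbf{1}\{|X_i|\le M\}-\mathbb{E}[\cdot]$ (this produces the $\sqrt{\sum_i\|X_i\|_2^2}$ scale) and controlling the centred tails $X_i\mathbf{1}\{|X_i|>M\}-\mathbb{E}[\cdot]$ separately; the two points of view coincide.

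Then I would convert moments to probabilities by Markov, $\mathbb{P}\{|\sum_i X_i|\ge u\}\le(\|\sum_i X_i\|_m/u)^m$, and optimise over $m$. Choosing $m$ proportional to $u^2/\sum_i\|X_i\|_2^2$ makes the Gaussian term bind and yields $\exp(-c\,u^2/\sum_i\|X_i\|_2^2)$; choosing $m$ proportional to $(u/\max_i\|X_i\|_{\psi\frac{1}{p}})^{1/p}$ (which is $\gtrsim\log B$, so that $B^{1/m}$ is a constant) makes the heavy term bind and yields $\exp(-c\,(u/\max_i\|X_i\|_{\psi\frac{1}{p}})^{1/p})$. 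Taking the better of the two choices gives a bound of the form $2\exp(-c\min(\cdot,\cdot))$; specialising to $u=Bx$ (the statement concerns the average $\tfrac1B\sum_i X_i$) and absorbing absolute constants into $C$ and $L$ produces the displayed inequality, in the simplified parametrisation used here.

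The main obstacle is quantitative rather than structural: one needs the Rosenthal-type moment bound in a \emph{sharp} enough form — with the correct power of $m$ on the heavy term, and with the two regimes split correctly between the $\ell^2$-aggregate $\sum_i\|X_i\|_2^2$ and the $\ell^\infty$-scale $\max_i\|X_i\|_{\psi\frac{1}{p}}$ — so that after the Markov optimisation the heavy branch carries exponent exactly $x^{1/p}$ (a lossy bound would give $x^{1/(p+1)}$) while the light branch remains genuinely sub-Gaussian in $B$. Pinning down these exponents and constants, and (in the equivalent truncation derivation) checking that the bias $\tfrac1B\sum_i\mathbb{E}[X_i\mathbf{1}\{|X_i|>M\}]$ is negligible for the chosen $M$, is where the real work lies; this is done in \citet{Kuchibhotla2018}, whose Theorem~3.1 is what we quote.
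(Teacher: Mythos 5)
This statement is not proved in the paper at all: it is imported verbatim (in the authors' own words, ``in a simplified manner'') from Theorem~3.1 of \citet{Kuchibhotla2018}, and the appendix contains no argument for it. So there is no paper proof to compare against; what you have written is a reconstruction of how the cited result is established. Your outline does follow the standard (and essentially the actual) route: reduce to a sharp Rosenthal/Lata{\l}a-type moment bound for sums of independent mean-zero variables, using the moment characterisation $\|X_i\|_m \lesssim m^{p}\|X_i\|_{\psi\frac{1}{p}}$, and then convert to a tail bound via Markov with an optimised moment order, which splits naturally into the sub-Gaussian branch governed by $\sum_i\|X_i\|_2^2$ and the heavy branch governed by $\max_i\|X_i\|_{\psi\frac{1}{p}}$. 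In \citet{Kuchibhotla2018} this is packaged through a generalised Bernstein--Orlicz norm built on Lata{\l}a's sharp moment inequality, but the mechanism is the one you describe, so structurally your sketch is faithful.

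That said, as a proof it is incomplete in exactly the place you flag yourself: the sharp moment inequality with the correct power $m^{p}$ on the heavy term and the correct $\ell^2$/$\ell^\infty$ split is asserted, not derived, and you explicitly defer the quantitative work to the reference. Two smaller bookkeeping points deserve care if you ever fill this in. First, with $u=Bx$ the optimised heavy branch comes out as $\exp\bigl(-c\,(Bx/\max_i\|X_i\|_{\psi\frac{1}{p}})^{1/p}\bigr)$, which is not literally the expression $B x^{1/p}/(L\max_i\|X_i\|_{\psi\frac{1}{p}})$ appearing in the paper's display; the discrepancy is in the paper's simplified restatement (which also writes $\theta$ where $p$ is meant, and drops the exponential inside the Orlicz-norm definition), so your version is no less faithful, but you should not expect the exponents to match that display exactly. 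Second, in the truncation variant you must verify that the bias term $\frac{1}{B}\sum_i \mathbb{E}\bigl[X_i\mathbf{1}\{|X_i|>M\}\bigr]$ is dominated by the target deviation $x$ for the truncation level dictated by the optimised $m$; this is routine but is precisely the kind of step where a lossy choice degrades the heavy-branch exponent from $x^{1/p}$ to something weaker.
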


This theorem states that the the tails of the mean of zero-mean i.i.d.\ 
sub-Weibull random variables are produced by a single variable, say $X_i$, with the maximal sub-Weibull norm $\|X_i\|_{\psi \frac{1}{p}} = \inf \{\nu\geq 0: \expect [(|X_i|/\nu)^\frac{1}{p}
\leq 1]\}$, i.e., the one with the heaviest tails. 
Assuming gradients are independent across data points, we can use this inequality to bound the tail probability for the sum of our zero-mean gradients.
If a single gradient is sufficiently heavy-tailed, then the mean across the batch will also be heavy-tailed.

In Figure~\ref{fig:backward_pass} we show that the implicit effect gradient, averaged over \textit{the entire dataset} $\mathcal{D}$ (i.e., the largest batch-size possible), is heavy-tailed, unlike the forward pass.
To calculate these gradients, we estimate the explicit regulariser in equation \eqref{defn:implicit} using Monte Carlo sampling, $\expect_{\v{\epsilon}} (\Delta \mathcal{L}(\mathcal{D};\v{w},\v{\epsilon})) \approx \frac{1}{M}\sum_{m=0}^M \Delta \mathcal{L}(\mathcal{D};\v{w},\v{\epsilon}_m)$, similarly to \cite{Wei2020,Camuto2020_2}. 
We show these results for multiplicative-GNIs in Figure~\ref{fig:backward_pass_mult} of the Appendix. In this setting as well, the backward pass experiences \textit{heavy-tailed} noise from GNIs. 


\textbf{Skewness of The Gradient Noise.} 
In the proof of Theorem~\ref{thm:gn_tail_properties} we decompose $\partial E_{\mathcal{L}}(\cdot)/\partial W_{i,l,j}$ as  $(\partial E_{\mathcal{L}}(\cdot)/ \partial \widetilde{h}_{i,j}) \cdot (\partial \widetilde{h}_{i,j} / \partial W_{i,l,j})$, where $\widetilde{h}_{i,j}$ is the (noised) activation of the $j^\mathrm{th}$ neuron in the $i^\mathrm{th}$ layer. 
Both these derivatives are likely to be skewed due to the asymmetry of the activation functions and their gradients. 
Ignoring potential correlations between variables, we observe that product of two zero-mean skewed independent variables $X$ and $Y$ is also skewed $
    \left|\mathrm{skew}(XY)\right| =  \left|\expect\left[X^3\right]\expect\left[Y^3\right]\right| > 0$.
The skewness of the derivatives in the backward pass will induce \textit{skewed} gradient noise, as seen in Figure~\ref{fig:backward_pass}. 
Though correlations between gradients could also cause skewness, we show that this is not the case in Appendix~\ref{app:skew}.


\section{An SDE Model for SGD with GNIs.}\label{sec:SDE}

In this section, we will analyse the effects of the skewed and heavy-tailed noise on the SGD dynamics.
Recall that the modified loss function by the GNIs is the sum of the explicit regulariser and the original loss over the dataset $\mathcal{D}$,
\begin{align*}
 \expect_{\v{\epsilon}} \left(\widetilde{\mathcal{L}}(\mathcal{D}; \v{w}, \v{\epsilon})\right) = \mathcal{L}(\mathcal{D}; \v{w}) + \expect_{\v{\epsilon}} \left(\Delta \mathcal{L}(\mathcal{D};\v{w},\v{\epsilon})\right),
\end{align*}
and the SGD recursion takes the following form:
\begin{align}
\label{eqn:sgd_true}
\v{w}_{k+1} = \v{w}_{k} - \eta
\nabla \expect_{\v{\epsilon}} \left(\widetilde{\mathcal{L}}(\mathcal{D}; \v{w}_k, \v{\epsilon})\right) + \eta U_k(\v{w}),
\end{align}
where $U_{k+1}(\v{w})$ is given as follows:
\begin{align}
\label{eqn:gni_noise} &  \nabla \left[\expect_{\v{\epsilon}} \left(\widetilde{\mathcal{L}}(\mathcal{D}; \v{w}_k, \v{\epsilon})\right)\right] 
\\
\nonumber &\phantom{as}-\nabla\left[ \expect_{\v{\epsilon}}\left(\widetilde{\mathcal{L}}(\mathcal{B}_{k+1}; \v{w}_k, \v{\epsilon})\right) +  E_\mathcal{L}(\mathcal{B}_{k+1};\v{w}_k,\v{\epsilon})\right]. 
\end{align}
To ease the notation, let us denote the modified loss function by $f(\v{w}) := \expect_{\v{\epsilon}} \left(\widetilde{\mathcal{L}}(\mathcal{D}; \v{w}, \v{\epsilon})\right)$.
\begin{remark}
Note that when the gradients are computed over the whole dataset $\mathcal{D}$, the gradient noise solely stems from the implicit effect, $U_k(\v{w}) =  -\nabla\left[ E_\mathcal{L}(\mathcal{D};\v{w}_k,\v{\epsilon})\right]$.
\label{rem:minibatch_implicit}
\end{remark}

\begin{figure}[t!]
    \centering
     \includegraphics[width=0.48\textwidth]{./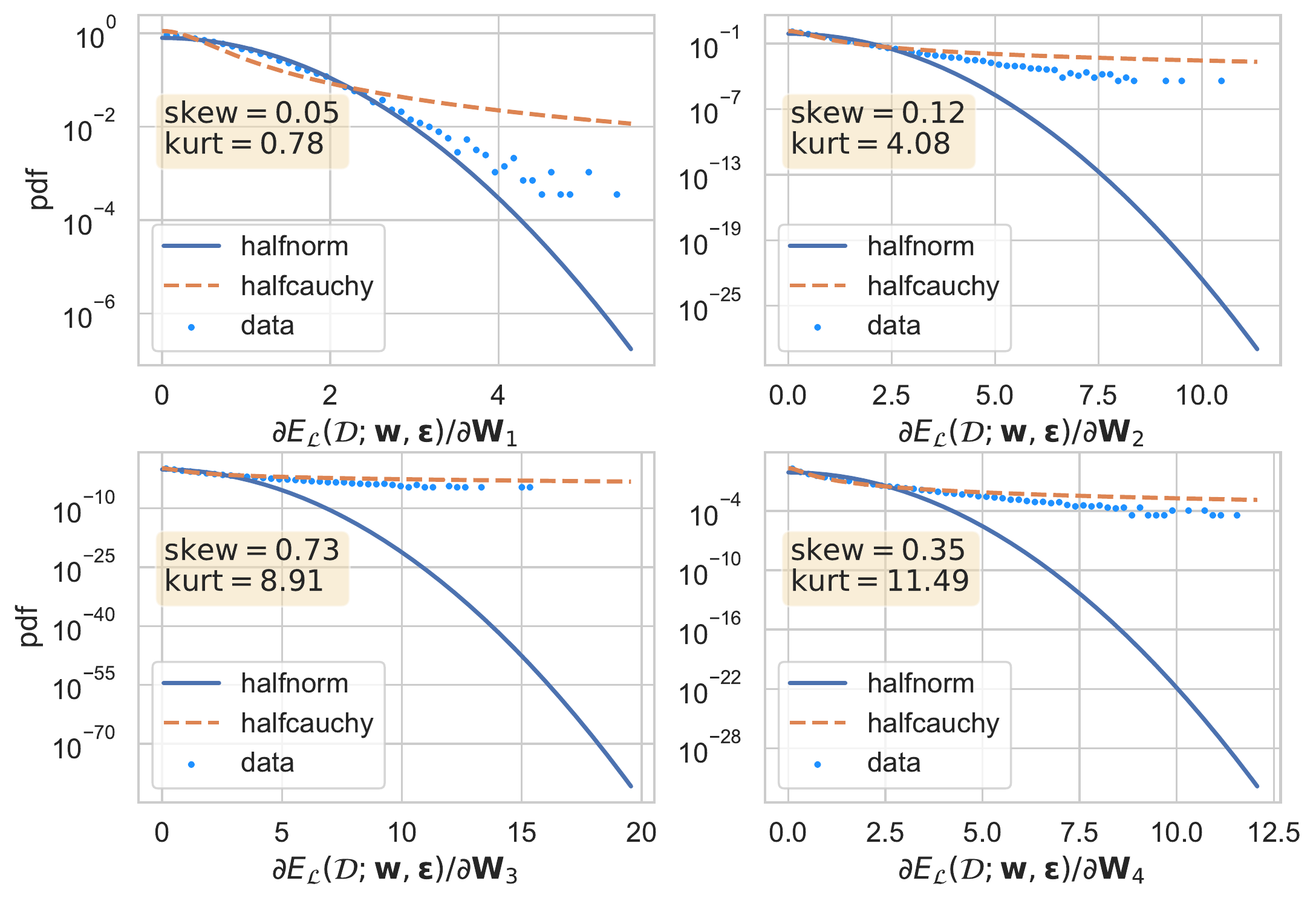}
     \vspace{-15pt}
    \caption{We measure the skewness and kurtosis \textit{at initialisation} of the gradients noise accrued on networks weights during the backward pass for additive-GNIs. The model is a 4-layer-256-unit-wide MLP trained to regress $\lambda(x) = \sum_i \sin(2\pi q_ix+\phi(i))$ with $q_i  \in(5,10,\dots,45,50), x \in \mathbb{R}$. 
    We plot the probability density function (p.d.f.) of positive samples, comparing against half-normal and half-Cauchy distributions.
    Each blue point represents the mean gradient noise over the entire dataset $\mathcal{D}$ of an individual weight in a layer $i$. 
    This gradient noise is \textit{skewed and heavy-tailed}, with a p.d.f.\ that is more Cauchy-like than Gaussian. 
    }
    \label{fig:backward_pass}
\end{figure}

Recent studies have proven that heavy-tailed behaviour can already emerge in stochastic optimisation (without GNIs) \cite{hodgkinson2020multiplicative,gsz2020heavy,gh2020}, which can further result in an overall heavy-tailed behaviour in the gradient noise, as empirically reported in \cite{Simsekli2019a,zhang2020adaptive,zhou2020towards}. Accordingly, \cite{Simsekli2019a,zhou2020towards} proposed modelling the gradient noise by using a centred \emph{symmetric} $\alpha$-stable noise, which then paved the way for modelling the SGD dynamics by using an SDE driven by a symmetric $\alpha$-stable process (see \eqref{eqn:sde_levysym}). \looseness=-1

We take a similar route for modelling the trajectories of SGD with GNIs; however, due to the skewness arising from the GNIs, the symmetric noise assumption is not appropriate for our purposes. Hence, we propose modelling the skewed gradient noise \eqref{eqn:gni_noise} by using an \emph{asymmetric} $\alpha$-stable noise, which aims at modelling both the heavy-tailed behaviour and the asymmetries at the same time.  
In particular, we consider an SDE driven by an 
asymmetric stable process and its Euler discretisation as follows:
\begin{align}\label{unmodified}
&d\v{w}_{t}= -\nabla f(\v{w}_{t}) dt
+\varepsilon d\v{L}_{t}^{\alpha,\theta}, \\
&\v{w}_{k+1}= \v{w}_k - \eta_{k+1} \nabla f(\v{w}_k) + \varepsilon \eta_{k+1}^{1/\alpha} \Delta \v{L}^{\alpha,\theta}_{k+1}, \label{eqn:sgd_approx}
\end{align}
where $\theta=(\theta_{i},1\leq i\leq d)$ is the $d$-dimensional skewness and each coordinate can have
its idiosyncratic skewness $\theta_{i}$. 
$\v{L}_{t}^{\alpha,\theta}=(L_{t}^{\alpha,\theta_{1}},\ldots,L_{t}^{\alpha,\theta_{d}})$ is a $d$-dimensional asymmetric $\alpha$-stable
L\'{e}vy process with independent components, 
and $\varepsilon$ encapsulates all the scaling parameters. Furthermore, $(\eta_k)_k$ denotes the sequence of step-sizes, which can be taken as constant or decreasing, and finally $(\Delta \v{L}^{\alpha,\theta}_{k})_k$ is a sequence of i.i.d.\ random vectors where each component of $\Delta \v{L}^{\alpha,\theta}_{k}$ is i.i.d.\ with $\mathcal{S}_\alpha(1,\theta_i,0)$. We then propose the discretised process \eqref{eqn:sgd_approx} as a proxy to the original recursion \eqref{eqn:sgd_true} and we will directly analyse the theoretical properties of \eqref{eqn:sgd_approx}. Note that our approach strictly extends \cite{Simsekli2019a}, which appears as a special case when $\theta=0$. 

Before deriving our theoretical results, we first  verify empirically that the proxy dynamics \eqref{eqn:sgd_approx} are indeed a good model for representing \eqref{eqn:sgd_true}.
We ascertain that $\Delta \v{L}^{\alpha,\theta}_{k}$ is sufficiently general in the sense that it can capture the gradient noise induced by the implicit effect even when there is no batching noise (when the batch size approaches the size of the dataset for example, see Remark~\ref{rem:minibatch_implicit}).
As a first line of evidence, in Figure~\ref{fig:scatter_alpha_stable} of the Appendix
we model $\nabla E_\mathcal{L}(\cdot)$ as being drawn from a univariate $\mathcal{S}_{\alpha}$.
The equivalent $\mathcal{S}_{\alpha}$ distributions are skewed ($|\theta| > 0$) and heavy-tailed ($\alpha < 2$), demonstrating that $\v{L}_{t}^{\alpha,\theta}$ captures the core properties of the implicit effect gradients highlighted in Section~\ref{sec:impliciteffect}. 
To illustrate this more clearly, 
in Figure~\ref{fig:weibullastable} we use symmetric sub-Weibull distributions with $r=0.8,1,2$, and fit $\mathcal{S}_\alpha$ and Gaussians ($\mathcal{N}$) using maximum likelihood (MLE) from $10^4$ samples.
We plot the MLE densities, and clearly MLE $\mathcal{S}_\alpha$ ($\alpha<2$) better model the tails of the sub-Weibulls than a Gaussian distribution, even for $r=0.8$ which is close to Gaussian tails ($r=0.5$).
The $\mathcal{S}_\alpha$ modelling of the tails improves as $r$ increases. This further illustrates the appropriateness of our noise model.

\begin{figure}[h]
\centering
    \includegraphics[width=0.4 \textwidth]{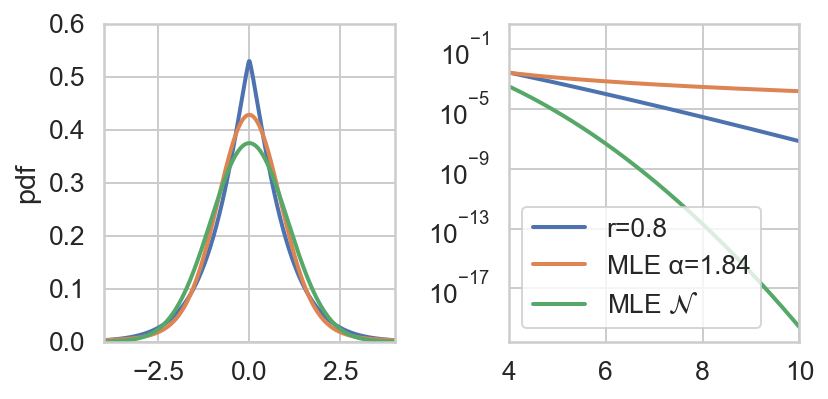}
    \includegraphics[width=0.4 \textwidth]{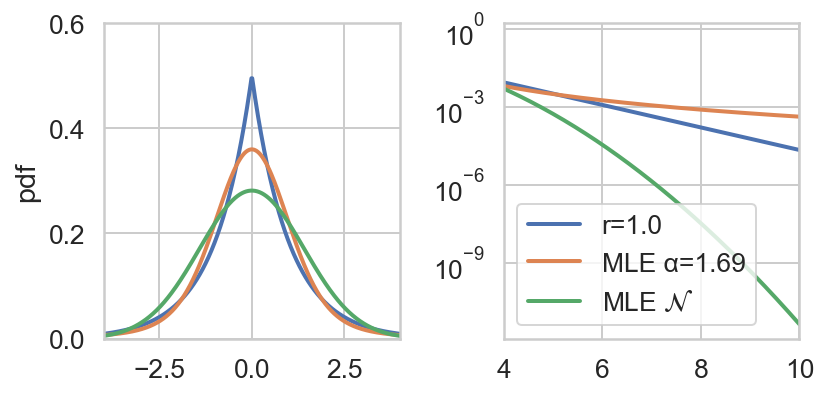}
    \includegraphics[width=0.4 \textwidth]{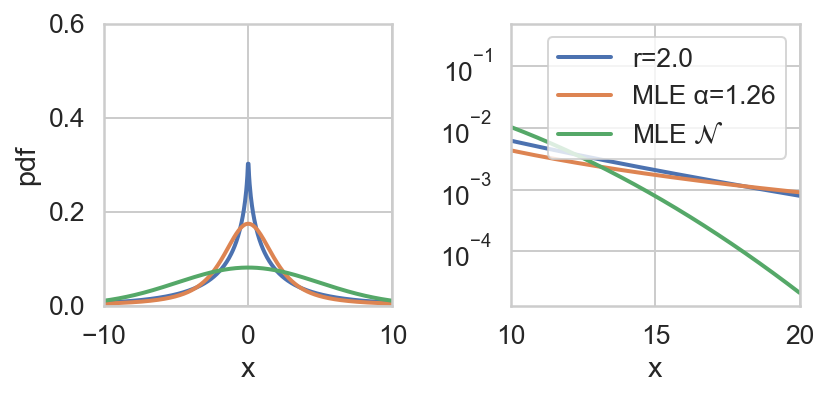}
    \caption{[left col.]  sub-Weibull pdfs ($r$=$0.8, 1.0, 2.0$), and pdfs of MLE fitted $\mathcal{S}_\alpha$ and $\mathcal{N}$. [right col.] pdf in the tails ($x>4$ and $x>10$, note log y-axis). 
    }
    \label{fig:weibullastable}
\end{figure}

\begin{figure}[t]
    \centering
     \includegraphics[width=0.2 \textwidth]{./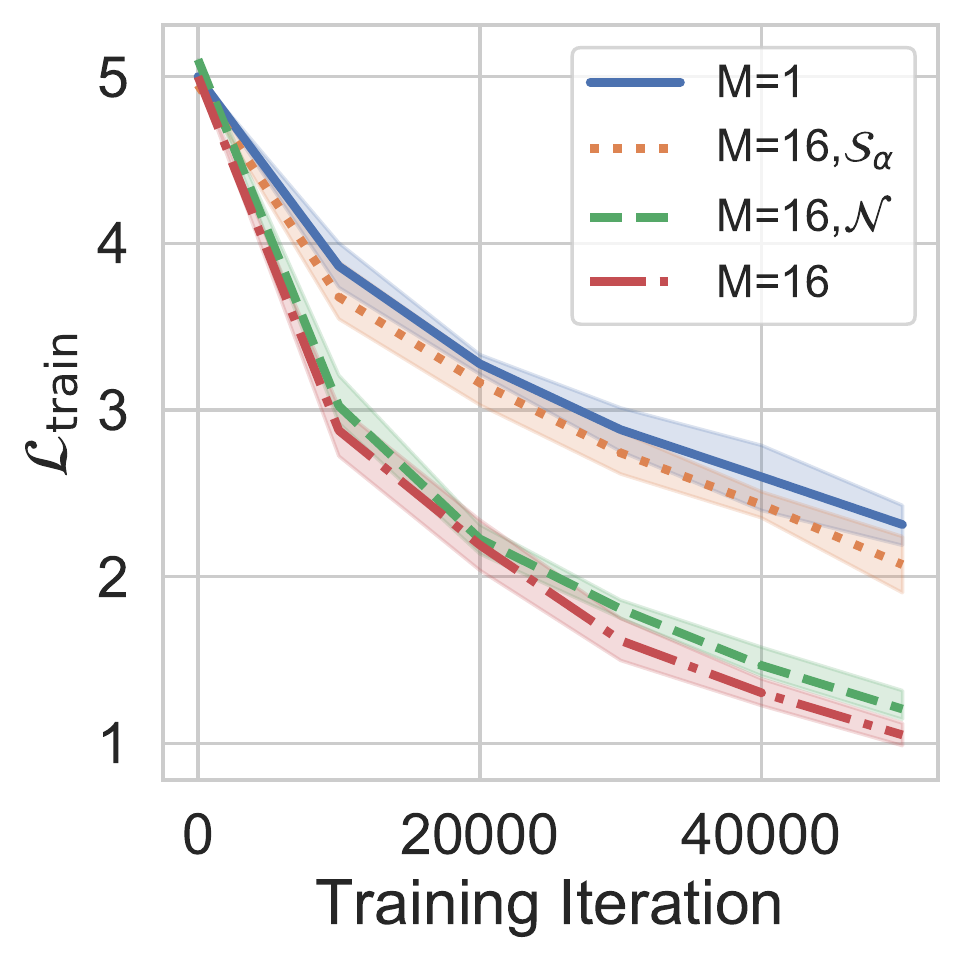}
     \includegraphics[width=0.2 \textwidth]{./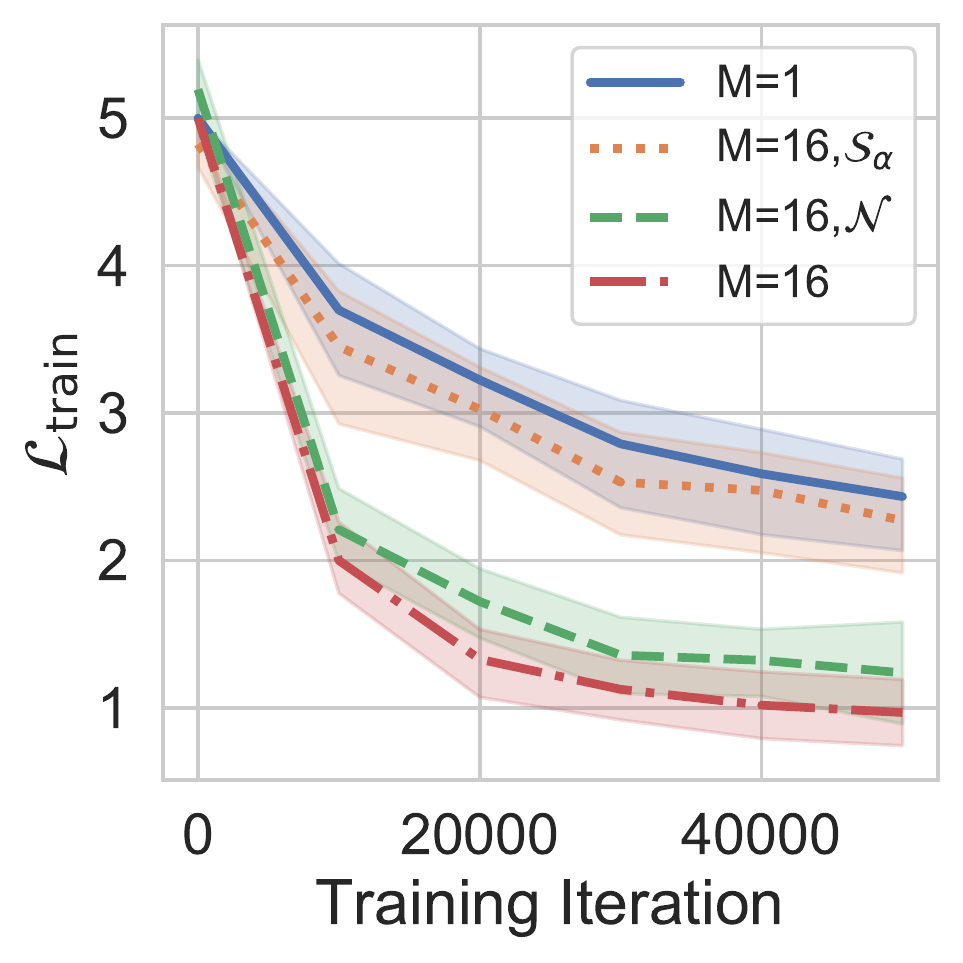}
    \caption{
    We train the networks in Figure~\ref{fig:backward_pass} on  $\widetilde{\mathcal{L}}_M$ \eqref{eq:Mmargloss}
    for the sinusoidal toy-data with additive GNIs [left] and multiplicative GNIs [right]. We fit univariate $\mathcal{S}_{\alpha}$ and univariate $\mathcal{N}$ via maximum likelihood \citep{Nolan2001} to $\nabla E_\mathcal{L}(\cdot)$ for $M=1$ models \textit{at each training step}. We add draws from these distributions to the gradients of $M=16$ models and plot the training loss ($\mathcal{L}_{\mathrm{train}}$). Shading is the standard deviation over 5 random seeds.
    }
    \label{fig:alpha_replace}
\end{figure}

In Figure~\ref{fig:alpha_replace}, we further show that the gradient noise of the implicit effect and gradient noise drawn from an equivalent $\mathcal{S}_\alpha$ distribution will have similar effects on gradient descent.
We sample $M$ GNI samples and evaluate: 
\begin{align}\label{eq:Mmargloss}
\widetilde{\mathcal{L}}_M(\mathcal{D};\v{w}, \v{\epsilon}) = (1/M)\sum\nolimits_{m=0}^M\widetilde{\mathcal{L}}(\mathcal{D};\v{w}, \v{\epsilon}_m).
\end{align}
The objective is over the entire dataset such that we eliminate noise from the batching process.
$M$ allows us to control the `degree' to which the implicit effect is marginalised out.  
$M=1$ corresponds to the usual training with GNI and larger values of $M$ mimic the effects of marginalising out the implicit effect.
We model $\nabla E_\mathcal{L}$ for $M=1$ as being drawn from a univariate $\mathcal{S}_{\alpha}$ or a univariate normal distribution and estimate distribution parameters using maximum likelihood estimation, as in \cite{Nolan2001}, at each training iteration. 
We add draws from the estimated distributions to the gradients of $M=16$ models to mimic the combined implicit and explicit effects.
$M=16$ models with the added  $\mathcal{S}_{\alpha}$ noise have the same training path as $M=1$ models, whereas those with Gaussian gradient noise do not. 
Thus, $\mathcal{S}_{\alpha}$ distributions
are able to faithfully capture the dynamics induced by the implicit effect on gradient descent.

In these same experiments $M=16$ models outperform $M=1$ models on training data.
We refine this study for a greater range of $M$ values in Figure~\ref{fig:implicit_bias_sinusoids}. 
As $M$ increases, performance of models on training data improves gradually, suggesting that the implicit effect degrades performance. 
Further, in Figure~\ref{fig:alpha_replace}, $M=16$ models \textit{trained with Gaussian noise} added to gradients outperform $M=1$ models, suggesting that the heavy-tails and skew of the implicit effect gradients are responsible for this performance degradation. 
We now study this apparent bias.

\begin{figure}[t!]
    \centering
    \includegraphics[width=0.2\textwidth]{./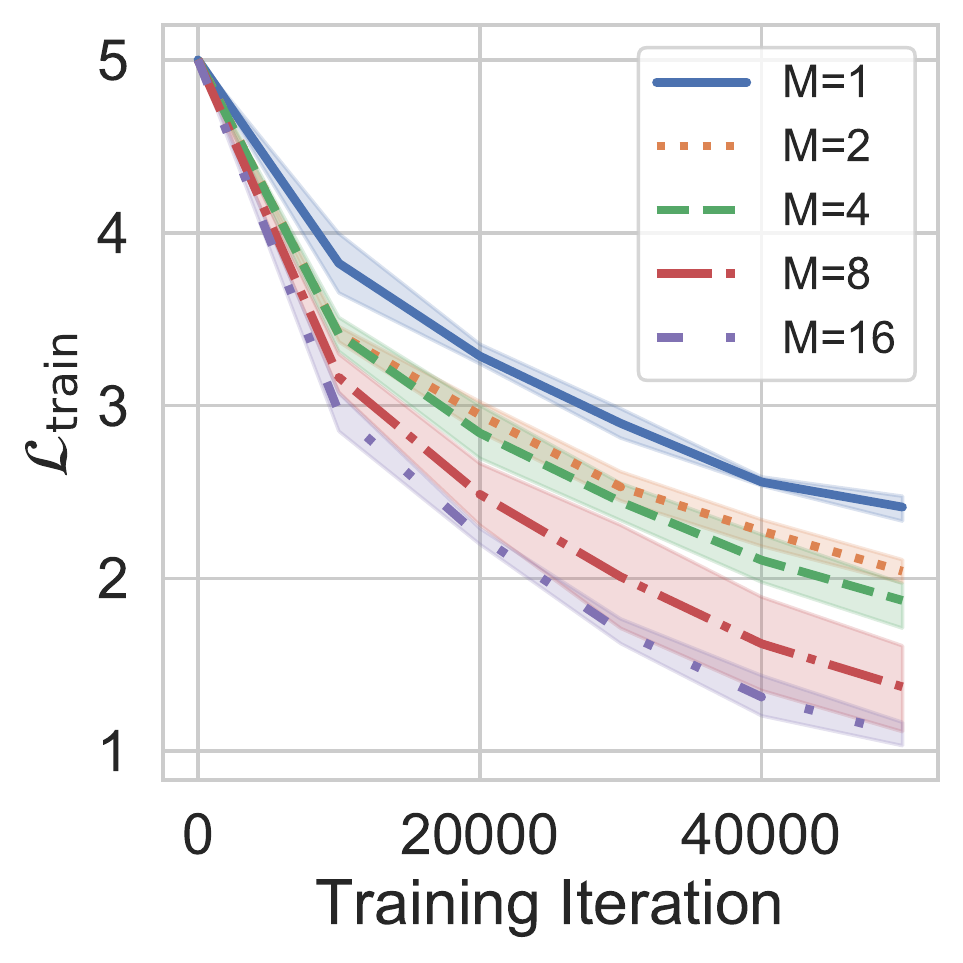} 
   \includegraphics[width=0.215\textwidth]{./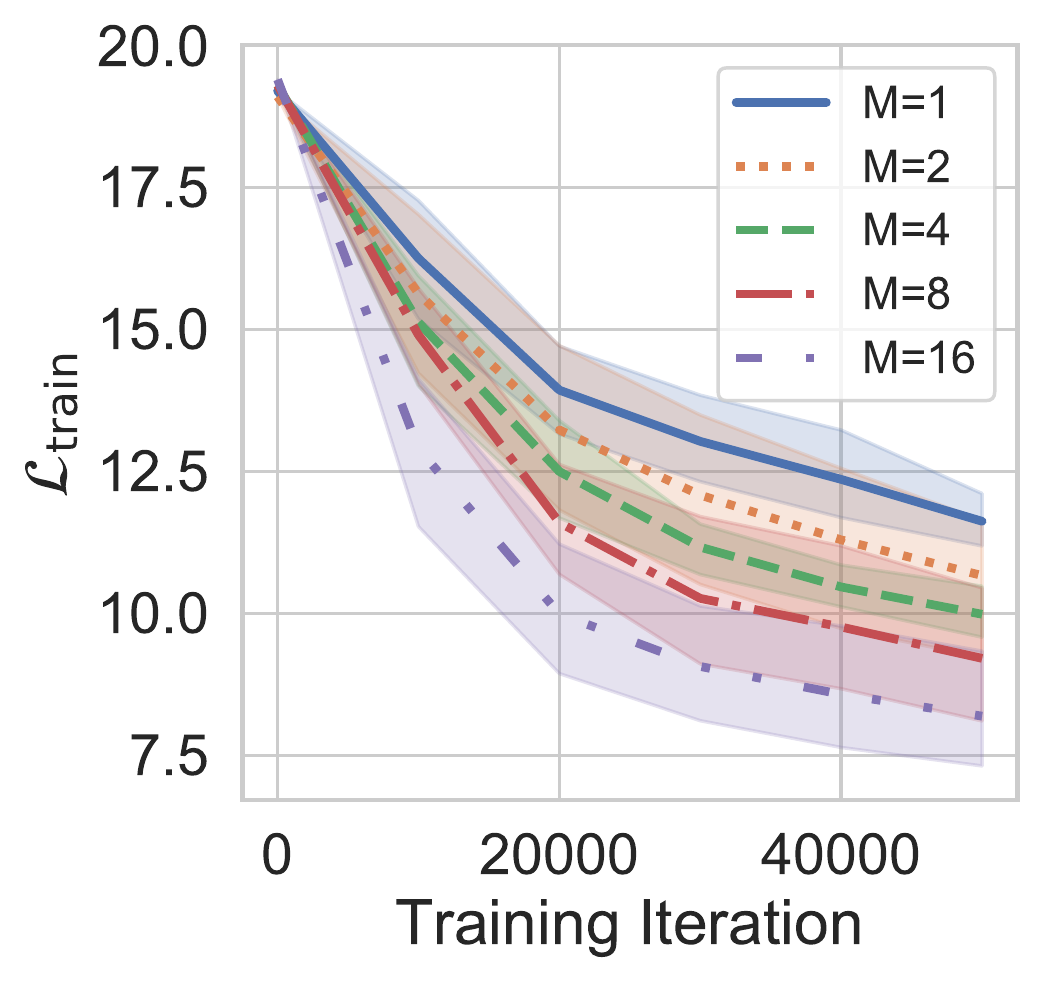}
        \\
    \caption{
    We train the networks in Figure~\ref{fig:backward_pass} on $\widetilde{\mathcal{L}}_M$ \eqref{eq:Mmargloss} for the sinusoidal toy-data with additive GNIs [left] and MNIST with multiplicative GNIs [right].
    We plot the training loss ($\mathcal{L}_{\mathrm{train}}$). Shading is the standard deviation over 5 random seeds.}
    \vspace{-10pt}
    \label{fig:implicit_bias_sinusoids}
\end{figure}


\textbf{Theoretical analysis of implicit bias.} 
Due to their heavy-tailed nature, stable processes have significantly different statistical properties from those of their Brownian counterparts. 
Their trajectories have a countable number of discontinuities, called \emph{jumps}, whereas Brownian motion is continuous almost everywhere. 
With these jumps the process can escape from  `narrow' basins and spend more time in  `wider' basins (see Appendix~\ref{sec:fet_metastability}
for the definition of width).
Theoretical results demonstrating this have been  provided for symmetric stable processes ($\theta = 0$) \citep{Simsekli2019a}. 
By translating the related metastability results from statistical physics \cite{IP2008} to our context, in Appendix~\ref{sec:fet_metastability} we illustrate that this also holds for SDEs driven by asymmetric stable processes ($\theta \neq 0$). 
In this sense, the SDE \eqref{unmodified} is  `biased' towards wider basins.

While driving SGD iterates towards wider minima could be beneficial, we now show that heavy tails can also introduce an undesirable bias and that this bias is magnified by asymmetries. 
To quantify this bias, we focus on the invariant measure (i.e., the stationary distribution) of the Markov process \eqref{eqn:sgd_approx} and investigate its modes (i.e., its local maxima), around where the process resides most of the time.  

In a statistical physics context,
\citet{dybiec2007stationary} empirically illustrated that the asymmetric stable noise can cause `shifts', in the sense that the modes of the stationary distribution of \eqref{eqn:sgd_approx} can shift away from the true local minima of $f$, which are of our interest as our aim is to minimise $f$.
They further illustrated that such shifts can be surprisingly large when $\alpha$ gets smaller and $|\theta|$ gets larger.  
We illustrate this outcome by reproducing one of the experiments provided in \cite{dybiec2007stationary} in Figure~\ref{fig:modeshift}. Here, we consider a one-dimensional problem with the quartic potential $f(w) = w^4/4 - w^2/2$, and simulate \eqref{eqn:sgd_approx} for $10$K iterations with constant step-size $\eta_k=0.001$ and $\varepsilon=1$. By using the generated iterates, we estimate the density of the invariant measure of \eqref{eqn:sgd_approx} by using the kernel density estimator provided in $\mathrm{scikit}$-$\mathrm{learn}$ \cite{pedregosa2011scikit}, for different values of $\alpha$ and $\theta$.
When $\alpha$ is larger (left), the heavy-tails cause a shift in the modes of the invariant measure, where these shifts become slightly larger with increasing asymmetries ($|\theta|>0$). 
When the tails are heavier (right), we observe a much stronger interaction between $\alpha$ and $\theta$, and observe \emph{drastic} shifts as the asymmetry is increased. 

\begin{figure}[t]
    \centering
     \includegraphics[width=0.99\columnwidth]{./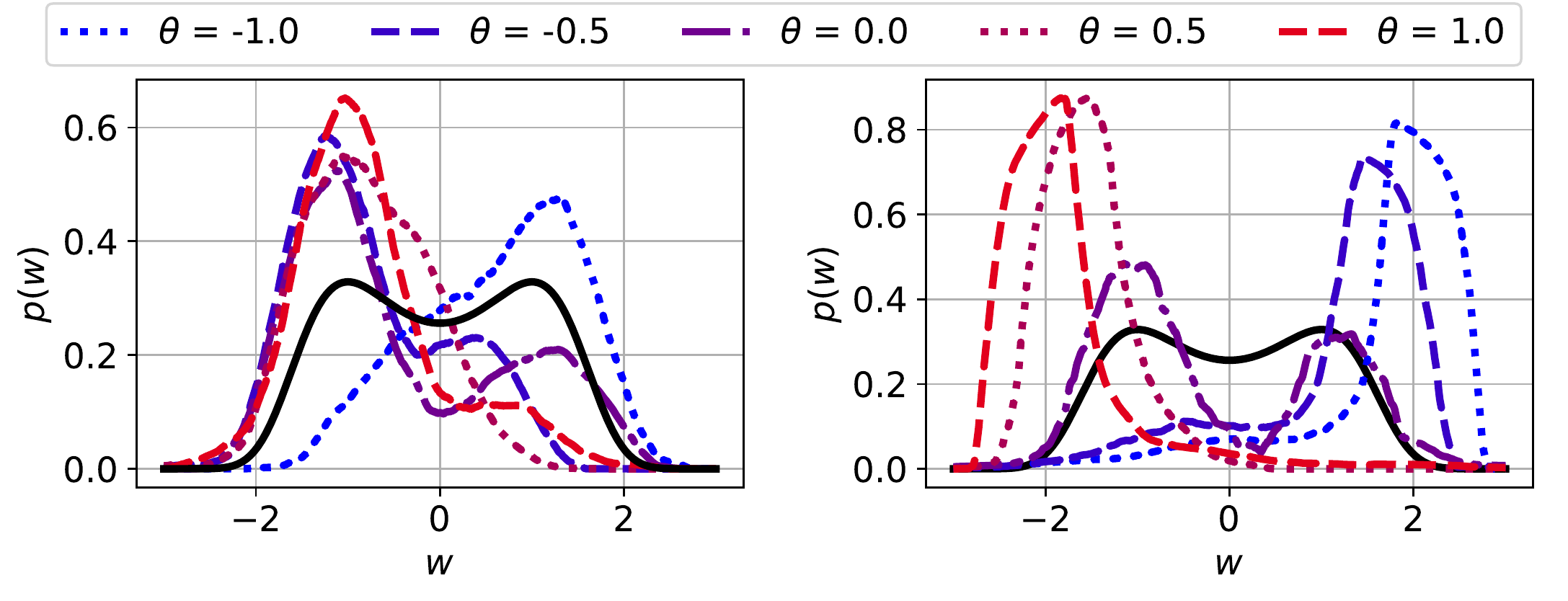}
     \vspace{-15pt}
    \caption{The stationary distributions of \eqref{eqn:sgd_approx} with $\alpha=1.9$ (left) and $\alpha=1.1$ (right). The solid black line represents the density of the Gibbs measure $\exp(-f(w))$ with $f(w) = w^4/4 - w^2/2$.
    }
    \vspace{-5pt}
    \label{fig:modeshift}
\end{figure}

From an optimisation perspective, these results are rather unsettling as they imply that
SGD might spend most of its time in regions that are arbitrarily far from the local minima of the objective function $f$, since the high probability regions of its stationary distribution might be shifted away from the local minima of interest.
In the symmetric case ($\theta=0$), this observation has been formally proven in \cite{sliusarenko2013stationary} when $f$ is chosen as the one dimensional quartic potential of Figure~\ref{fig:modeshift} and when $\alpha=1$. A direct quantification of such shifts is non-trivial; and in the presence of asymmetries ($\theta \neq 0$), even further difficulties emerge. \looseness=-1

In a recent study \citet{Simsekli2020} focused on eliminating the undesired bias introduced by 
\emph{symmetric} stable noise in SGD with momentum \cite{qian1999momentum}, and proposed an indirect way to ensure that modes of the stationary distribution exactly match the objective function's local minima. 
They developed a `modified' SDE whose invariant distribution can be proven to be the \emph{Gibbs measure}, denoted by $\pi(d \v{w})$, which is a probability measure that has a density proportional to $\exp(- C f(\v{w}))$ for some $C>0$. 
Clearly, all the local maxima of this density coincide with the local minima of the function $f$; hence, their approach eliminates the possibility of a shift in the modes by imposing a stronger condition which controls the entire invariant distribution.

By following a similar approach, we will bound the gap between the invariant measure of \eqref{eqn:sgd_approx} and the Gibbs measure in terms of the tail index $\alpha$ and the skewness $\theta$, using it as a quantification of the bias induced by the asymmetric heavy-tailed noise.
In particular, for any sufficiently regular test function $g$, we consider its expectation under the Gibbs measure  $\nu(g) := \int g(\v{w}) \pi(d\v{w})$, and its sample average computed over \eqref{eqn:sgd_approx}, i.e., ${\nu}_N(g) : = \frac{1}{H_N} \sum_{k=1}^{N}\eta_k g({\v{w}}_k)$, where $H_N = \sum_{k=1}^{N}\eta_k$. We then bound the \emph{weak error}: 
$|\nu(g) - \lim_{N \to \infty} {\nu}_N(g)|$, whose convergence to zero is sufficient for ensuring the modes do not shift.

We derive this bound for our case in three steps: (i) We first link the discrete-time process \eqref{eqn:sgd_approx} to its continuous-time limit \eqref{unmodified} by directly using the results of \cite{panloup2008recursive}.
(ii) We then design a modified SDE that has the unique invariant measure as the Gibbs measure, with all the modes matching those of the loss function (Theorem~\ref{thm:AFLD}). (iii) Finally, we show that the SDE \eqref{unmodified} is a poor numerical approximation to the modified SDE, and we develop an upper-bound for the approximation error (Theorem~\ref{thm:bias:approx}).

To address (ii), we introduce 
a modification to \eqref{unmodified}, coined
\textit{asymmetric fractional Langevin dynamics}:
\begin{align}\label{eqn:AFLD}
d\v{w}_t=b(\v{w}_{t-},\alpha,\theta)dt + \varepsilon d\v{L}_t^{\alpha,\theta}\,.
\end{align}
where, the drift function $b(\v{w},\alpha,\theta):=((b(\v{w},\alpha,\theta))_{i},1\leq i\leq d)$ is defined as follows:
\begin{align}\label{b:defn}
(b(\v{w},\alpha,\theta))_{i}
=\frac{\varepsilon^{\alpha}}{\varphi(\v{w})}\mathcal{D}^{\alpha-2,-\theta_{i}}_{w_{i}}(\partial_{w_{i}}\varphi(\v{w})),
\end{align}
where $\theta_{i}\in(-1,1)$, $1\leq i\leq d$, $1<\alpha<2$, and
$\varphi(\v{w}):=e^{-\varepsilon^{-\alpha} f(\v{w})}$. 
Here, the operator $\mathcal{D}^{\alpha-2,-\theta_{i}}$ denotes a \emph{Riesz-Feller type fractional derivative} \cite{GM1998,GM2001}
whose exact 
(and rather complicated)
definition is not essential in our problematic, and is given in Appendix~\ref{sec:numerical:approx} in order to avoid obscuring the main results.
The next theorem states that the SDE \eqref{eqn:AFLD} targets the Gibbs measure.

\begin{theorem}\label{thm:AFLD}
The Gibbs measure $\pi(d\v{w}) \propto \exp(-\varepsilon^{-\alpha}f(\v{w}))d\v{w}$ 
is an invariant distribution of 
\eqref{eqn:AFLD}.
If $b(\v{w},\alpha,\theta)$ is Lipschitz continuous in $\v{w}$, 
then $\pi(d\v{w})$ is the unique invariant distribution of \eqref{eqn:AFLD}.
\end{theorem}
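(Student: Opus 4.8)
The plan is to verify that the Gibbs measure $\pi(d\v{w}) \propto \varphi(\v{w})\,d\v{w}$ with $\varphi(\v{w}) = e^{-\varepsilon^{-\alpha}f(\v{w})}$ is invariant for \eqref{eqn:AFLD} by checking that it kills the formal adjoint (Fokker--Planck / forward Kolmogorov operator) of the generator of the SDE, and then to upgrade invariance to uniqueness via an irreducibility argument that the Lipschitz assumption makes available. Concretely, the generator of \eqref{eqn:AFLD} acting on a test function $g$ is $\mathcal{A}g(\v{w}) = \sum_i b_i(\v{w},\alpha,\theta)\,\partial_{w_i}g(\v{w}) + \varepsilon^\alpha \sum_i \mathcal{D}^{\alpha,\theta_i}_{w_i} g(\v{w})$, where $\mathcal{D}^{\alpha,\theta_i}$ is the Riesz--Feller fractional derivative of order $\alpha$ generating the scalar $\alpha$-stable process $L^{\alpha,\theta_i}_t$ (its precise Fourier symbol is recalled in Appendix~\ref{sec:numerical:approx}); since the components of $\v{L}^{\alpha,\theta}$ are independent, the nonlocal part decouples coordinatewise. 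Invariance of $\pi$ is then equivalent to $\mathcal{A}^*\varphi = 0$, i.e.\ $-\sum_i \partial_{w_i}\!\bigl(b_i(\v{w},\alpha,\theta)\varphi(\v{w})\bigr) + \varepsilon^\alpha\sum_i \mathcal{D}^{\alpha,-\theta_i}_{w_i}\varphi(\v{w}) = 0$, using that the $L^2$-adjoint of $\mathcal{D}^{\alpha,\theta}$ is $\mathcal{D}^{\alpha,-\theta}$.

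The crux is the algebraic identity for the drift. Plugging $b_i = \varepsilon^\alpha\varphi^{-1}\mathcal{D}^{\alpha-2,-\theta_i}_{w_i}(\partial_{w_i}\varphi)$ into $\partial_{w_i}(b_i\varphi)$ gives $\varepsilon^\alpha\,\partial_{w_i}\mathcal{D}^{\alpha-2,-\theta_i}_{w_i}(\partial_{w_i}\varphi) = \varepsilon^\alpha\,\mathcal{D}^{\alpha,-\theta_i}_{w_i}\varphi$, where the last equality is the key composition rule for Riesz--Feller derivatives: applying $\partial_{w_i}$ twice (which on the Fourier side multiplies by $(-\xi^2)$, i.e.\ raises the order by $2$) to $\mathcal{D}^{\alpha-2,-\theta_i}$ (order $\alpha-2$, same skewness) yields exactly $\mathcal{D}^{\alpha,-\theta_i}$ (order $\alpha$). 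This is most cleanly checked in Fourier variables: the symbol of $\mathcal{D}^{\gamma,\vartheta}_{w}$ is (up to the standard normalisation constant) $|\xi|^\gamma e^{i\,\mathrm{sgn}(\xi)\vartheta\pi/2}$ for the appropriate range of $\gamma$, and $(-\xi^2)\cdot|\xi|^{\alpha-2}e^{i\,\mathrm{sgn}(\xi)\vartheta\pi/2} = -|\xi|^\alpha e^{i\,\mathrm{sgn}(\xi)\vartheta\pi/2}$, so the two terms in $\mathcal{A}^*\varphi$ cancel exactly coordinate by coordinate. I would present this computation modulo the fractional-calculus sign/normalisation bookkeeping, referring to \cite{GM1998,GM2001} (and to the appendix) for the precise constants, since the whole point of the construction of $b$ in \eqref{b:defn} is to make this cancellation hold. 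I expect this fractional-calculus manipulation --- getting the order shift, the skewness flip under adjoint, and the normalisation constants all consistent --- to be the main obstacle; everything else is soft.

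For uniqueness under the Lipschitz hypothesis: once $b(\cdot,\alpha,\theta)$ is Lipschitz, the SDE \eqref{eqn:AFLD} has a unique strong solution which is a time-homogeneous Feller Markov process, and the driving $\alpha$-stable Lévy process with $1<\alpha<2$ has a full-support Lévy measure on $\mathbb{R}^d$ (each independent component being an $\alpha$-stable motion), so standard results on Lévy-driven SDEs give that the process is irreducible and (open-set) recurrent on $\mathbb{R}^d$ — the jump part alone lets the process reach any open set from any starting point in finite time with positive probability. Irreducibility plus existence of an invariant probability measure forces uniqueness of the invariant measure by the classical argument (two distinct ergodic invariant measures would have to be mutually singular, contradicting that each assigns positive mass to every open set, or more directly via Doob's / Hasminskii-type uniqueness theorems for irreducible Feller processes). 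I would cite the relevant statements rather than reprove them, noting that $\pi$ itself is the invariant measure whose existence we just established, so uniqueness pins it down. A minor point to address is integrability of $\varphi$ so that $\pi$ is genuinely a probability measure after normalisation, which follows from $f$ being coercive (confining), an implicit standing assumption inherited from the setup; if $f$ is not assumed coercive one states $\pi$ as a ($\sigma$-finite) invariant measure, but in the paper's applications $f$ grows at infinity and no issue arises.
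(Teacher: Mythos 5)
Your proposal is correct and follows the same overall strategy as the paper: show that the Gibbs density $\varphi$ is annihilated by the adjoint (Fokker--Planck) operator of the generator of \eqref{eqn:AFLD}, using that the adjoint of the stable generator flips the skewness $\theta_i \mapsto -\theta_i$, and then obtain uniqueness from the Lipschitz hypothesis by citing standard ergodicity results for L\'{e}vy-driven SDEs (the paper does exactly this, pointing to its reference rather than giving an irreducibility argument). Where you differ is in how the central cancellation $\partial_{w_i}\bigl(b_i\varphi\bigr)=\varepsilon^{\alpha}\,\partial_{w_i}\mathcal{D}^{\alpha-2,-\theta_i}_{w_i}(\partial_{w_i}\varphi)=\varepsilon^{\alpha}\mathcal{D}^{\alpha,-\theta_i}_{w_i}\varphi$ is verified: you do it on the Fourier side via the symbol calculus (order shift by $2$ under $\partial^2$, conjugation of the symbol under the adjoint), whereas the paper works in physical space --- it first derives the generator from the L\'{e}vy--Khintchine integral form (Proposition~\ref{prop:generator}), isolating an explicit compensating constant drift $-\varepsilon^{\alpha}\theta_i\frac{\alpha}{\cos(\alpha\pi/2)(\alpha-1)\Gamma(1-\alpha)}$ needed so that the $\mu=0$ stable noise has zero mean, then rewrites $b$ through an integral formula (Proposition~\ref{prop:b:D}) and differentiates under the integral sign to see the terms cancel against $\mathcal{H}^{\alpha,-\theta_i}_{w_i}\varphi$. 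Your Fourier route is shorter and makes the ``designed-to-cancel'' structure of \eqref{b:defn} transparent, but it quietly absorbs that compensating drift into your definition of the order-$\alpha$ operator and it requires the caveat you yourself flag: with the paper's parametrisation the symbol of $\mathcal{D}^{\gamma,-\theta}$ is $|\zeta|^{\gamma}\bigl(1+i\theta\,\sign(\zeta)\tan(\gamma\pi/2)\bigr)$ rather than literally $|\zeta|^{\gamma}e^{i\,\sign(\zeta)\theta\pi/2}$ (that form corresponds to Feller's skewness parameter), though since $\tan((\alpha-2)\pi/2)=\tan(\alpha\pi/2)$ the order-shift and skewness-flip bookkeeping still closes exactly as you claim; one also needs enough regularity and integrability of $\varphi$ to justify the Fourier inversion, which plays the role of the paper's conditions for differentiating under the integral. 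Your remarks on irreducibility-based uniqueness and on normalisability of $\pi$ are sound additions at the same level of rigour as the paper's citation.
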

This theorem states that the use of the modified drift $b$ in place of $-\nabla f$, prevents any potential shifts in the modes of the invariant measure.

The fractional derivative in \eqref{b:defn} is a non-local operator that requires the knowledge of the full function, and does not admit a closed-form expression.
In the next step, we develop an approximation scheme for the drift $b$ in \eqref{eqn:AFLD}, and show that the gradient $-\nabla f$ in \eqref{eqn:sgd_approx} appears as a special case of this scheme. To simplify notation, we consider the one-dimensional case ($d=1$); however, our results can be easily extended to multivariate settings by applying the same approach to each coordinate. Hence, in the general case the bounds will scale linearly with $d$.
We define the following  approximation for $b$:
\begin{align}
b_{h,K}(w,\alpha,\theta):=\frac{\varepsilon^{\alpha}}{\varphi(w)}\Delta_{h,K}^{\alpha-2,-\theta}(\partial_{w}\varphi(w)),
\end{align}
where, for an arbitrary function $\psi$, we have
\begin{align*} 
\Delta_{h,K}^{\gamma,-\theta}\psi(w)
:= \frac{c_\gamma}{h^{\gamma}} \sum_{k=-K}^{K} (1+ \theta \sgn (k)) \tilde{g}_{\gamma,k}  \psi(w - kh).
\end{align*}
Here, $\sgn$ denotes the sign function, $h>0$, $K\in\mathbb{N}\cup\{0\}$, $c_\gamma := 1/(2\cos(\gamma\pi/2))$, and $\tilde{g}_{\gamma,k} := (-1)^k\Gamma(-\gamma+k)/\Gamma(k+1)\Gamma(-\gamma)$. This approximation is designed in the way that we recover the original drift $b$ as $h \to 0$ and $K \to \infty$ for sufficiently regular $\varphi$.
It is clear that when we set $K=0$ and $h = h_0:= [2 \varepsilon^{-\alpha} \cos((\alpha-2)\pi/2)]^{1/(2-\alpha)}$, we have $b_{h,K}(w,\alpha,\theta) = -\partial_w f(w)$.
In the multidimensional case, where we apply this approximation to each coordinate, the same choice of $K$ and $h$ gives us the original gradient $-\nabla f$; hence, we fall back to the original recursion  \eqref{eqn:sgd_approx}. 
By considering the recursion with this approximate drift
\[\tilde{\v{w}}_{n+1} = \tilde{\v{w}}_n + \eta_{n+1}b_{h,K}(\tilde{\v{w}}_n,\alpha,\theta) + \varepsilon \eta_{n+1}^{1/\alpha} \Delta \v{L}^{\alpha,\theta}_{n+1},\]
and the corresponding sample averages $\tilde{\nu}_N(g) := \frac{1}{H_N} \sum_{k=1}^{N}\eta_k g(\tilde{\v{w}}_k)$\footnote{Note that, with the choice of $K=0$ and $h= h_0$, $\tilde{\nu}_N(g)$ reduces to the original sample average $\nu_N(g)$.},
we are ready to state our error bound.
We believe this result is interesting on its own, and would be of further interest in statistical physics and applied probability.
To avoid obscuring the result, we state the required assumptions in the Appendix, which mainly require decreasing step-size and ergodicity.

\begin{theorem}\label{thm:bias:approx}
Let $\gamma:=\alpha-2\in(-1,0)$. 
Suppose that the assumptions stated in the Appendix hold. 
Then, the following bound holds almost surely:
\begin{align}
\label{eqn:thm_errbnd}
&\left\lvert \nu(g) - \lim\nolimits_{N \rightarrow \infty} \tilde{\nu}_N (g) \right\rvert 
\\
&  \leq\frac{\tilde{C}}{4\pi(|\gamma|+2)}\left[|\theta||\gamma| + \left|\tan\left(\gamma\pi/2\right)\right||\gamma|\right] h 
\nonumber \\
& \qquad+ \left((1+\theta)C'_0 + (1-\theta)C''_0  \right)\frac{1}{hK}+\mathcal{O}(h^{2})\,,\nonumber
\end{align}
where $\tilde{C},\,C'_{0},\,C''_{0}>0$ are constants.
\end{theorem}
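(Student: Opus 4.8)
The plan is to bound the weak error $|\nu(g) - \lim_{N\to\infty}\tilde\nu_N(g)|$ by inserting the intermediate measure $\tilde\nu(g) := \lim_{N\to\infty}\tilde\nu_N(g)$ associated with the exact modified SDE \eqref{eqn:AFLD} and then splitting via the triangle inequality into two conceptually distinct pieces: a \emph{discretisation/ergodic-averaging} error and a \emph{drift-approximation} error. For the first piece, since Theorem~\ref{thm:AFLD} identifies the Gibbs measure $\pi$ as the unique invariant distribution of \eqref{eqn:AFLD}, and since step (i) of the outline invokes \cite{panloup2008recursive} to guarantee that the weighted empirical measures of the Euler scheme with the \emph{exact} drift $b$ converge almost surely to $\pi$, this piece vanishes: $\nu(g) = \lim_N \tilde\nu_N(g)$ when $K=\infty$, $h\to 0$. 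So the whole error is controlled by how well the truncated finite-difference drift $b_{h,K}$ approximates $b$; concretely I would use the ergodicity/stability assumptions (stated in the Appendix) together with a standard perturbation estimate for weighted ergodic averages — the weak error is controlled by $\sup_w |b_{h,K}(w,\alpha,\theta) - b(w,\alpha,\theta)|$ times a constant depending on the mixing of the process and on $g$ — reducing everything to a deterministic estimate on the fractional-derivative approximation.

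Next I would analyse $\Delta_{h,K}^{\gamma,-\theta}\psi(w) - \mathcal{D}^{\gamma,-\theta}\psi(w)$ for $\gamma = \alpha-2 \in (-1,0)$, separating it into a \emph{truncation} error (keeping only $|k|\le K$ terms of the Grünwald--Letnikov-type series) and a \emph{consistency} error (the $h\to 0$ discretisation error of the shifted Grünwald--Letnikov formula for the Riesz--Feller derivative). The truncation term: the weights $\tilde g_{\gamma,k} = (-1)^k\Gamma(-\gamma+k)/(\Gamma(k+1)\Gamma(-\gamma))$ behave like $k^{-\gamma-1} = k^{|\gamma|-1}$ up to constants, so $|\tilde g_{\gamma,k}| \asymp k^{-(1+\gamma)}$ and the tail $\sum_{|k|>K}$ of $|\tilde g_{\gamma,k}|\,|\psi(w-kh)|$, after the $h^{-\gamma}$ prefactor and using decay/boundedness of $\partial_w\varphi$, is $\mathcal{O}(1/(hK))$; the asymmetry factor $(1+\theta\,\mathrm{sgn}(k))$ splits this into the $(1+\theta)C_0' + (1-\theta)C_0''$ structure by summing positive and negative $k$ separately. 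The consistency term is the classical result (Gorenflo--Mainardi; Tadjeran--Meerschaert-type analysis) that the shifted Grünwald--Letnikov approximation to a Riesz--Feller derivative of order $2+\gamma$ is first-order accurate, with the leading coefficient being exactly the $\tfrac{1}{4\pi(|\gamma|+2)}[\,|\theta||\gamma| + |\tan(\gamma\pi/2)||\gamma|\,]\,h$ expression — the $|\theta||\gamma|$ piece coming from the asymmetric part of the symbol $(i\xi)^{2+\gamma}$-type weighting and the $|\tan(\gamma\pi/2)|$ piece from the symmetric (Riesz) part via the normalisation $c_\gamma = 1/(2\cos(\gamma\pi/2))$ — plus an $\mathcal{O}(h^2)$ remainder from the next term in the Taylor/symbol expansion. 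I would then push these estimates through the division by $\varphi(w)$ and multiplication by $\varepsilon^\alpha$ in the definition of $b_{h,K}$, absorbing $\sup_w \varphi(w)^{-1}$, $\sup_w |\partial_w\varphi(w)|$ and their derivatives into the constants $\tilde C, C_0', C_0''$, which is where smoothness and growth conditions on $f$ (hence $\varphi$) are needed.

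The main obstacle I expect is making the drift-approximation error \emph{uniform in $w$} and controlling how it propagates to the weak error of the weighted ergodic averages. The finite-difference operator $\Delta_{h,K}^{\gamma,-\theta}$ involves evaluations $\psi(w-kh)$ for $|k|\le K$, so a uniform bound needs uniform control of $\partial_w\varphi$ and a few of its derivatives over all of $\mathbb{R}$ (or at least over the support of the invariant measure with tails that are summable against the weights) — this forces the dissipativity/growth assumptions on $f$ that I would relegate to the Appendix. Then transferring $\sup_w|b_{h,K}-b|$ into a bound on $|\tilde\nu(g) - \nu(g)|$ requires a quantitative ergodic perturbation lemma for SDEs driven by $\alpha$-stable noise with a perturbed drift; I would either cite such a result or derive it from a Lyapunov/Foster drift condition and the known exponential (or polynomial) ergodicity of \eqref{eqn:AFLD}, again under the Appendix assumptions. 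The almost-sure nature of the statement comes directly from the a.s.\ convergence of $\tilde\nu_N(g)$ in \cite{panloup2008recursive}, so no additional probabilistic work is needed there. Everything else — the explicit constants in the $h$-coefficient and the $\mathcal{O}(h^2)$ term — is a routine, if lengthy, symbol-calculus computation that I would not grind through in the main text.
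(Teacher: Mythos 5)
Your proposal follows essentially the same route as the paper: the almost-sure identity $\lim_{N}\tilde{\nu}_N(g)=\tilde{\nu}(g)$ for the Euler scheme driven by $b_{h,K}$ is obtained from \citet{panloup2008recursive} under the step-size/Lyapunov assumptions, and the remaining gap $|\nu(g)-\tilde{\nu}(g)|$ is bounded via a semigroup-perturbation estimate (the paper's Lemma~\ref{expErr}) in terms of $\sup_w|b-b_{h,K}|$, which is then controlled exactly as you describe by the first-order shifted Gr\"unwald--Letnikov consistency bound plus the $\mathcal{O}(1/(hK))$ truncation bound (Theorem~\ref{thm:approx} and Corollary~\ref{cor:K}, specialised to $p=q=0$). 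The one imprecision is your framing of the intermediate object: it is the invariant measure $\tilde{\pi}$ of the SDE with the \emph{approximate} drift $b_{h,K}$ at fixed $h$ and $K$ (not the exact-drift scheme, and no limit $h\to 0$, $K\to\infty$ is taken), but this bookkeeping point does not change the substance of your argument.
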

We note our result extends the case $\alpha = 2$, $\theta = 0$ in \citet{DM2015} and the case $\alpha\neq 2$, $\theta=0$ in \citet{FLMC}; whereas we cover the case $\alpha \neq 2$, $\theta \in (-1,1)$.
The right-hand-side of \eqref{eqn:thm_errbnd} contains two main terms. The second term shows that the error increases linearly with decreasing $K$, indicating that the error can be \emph{arbitrarily large} when $K=0$, and the gap cannot be controlled without imposing further assumptions on $f$. 

More interestingly, even when $K$ goes to infinity (i.e., the second term vanishes), the first term stays unaffected. 
Note that for large enough $\varepsilon$, $h_0$ increases as $\alpha \in (1,2)$ decreases. 
In this regime, the first term indicates that the error increases with decreasing $\alpha$, and an additional error term appears whenever $\theta \neq 0$, which is further amplified with the heaviness of the tails (measured by $|\gamma|$). 
This outcome provides a theoretical justification to the empirical observations stated in Figures~\ref{fig:modeshift} and \ref{fig:svhn_implicit_effect}.


\begin{figure}[t!]
    \centering
    \includegraphics[width=0.35\textwidth]{./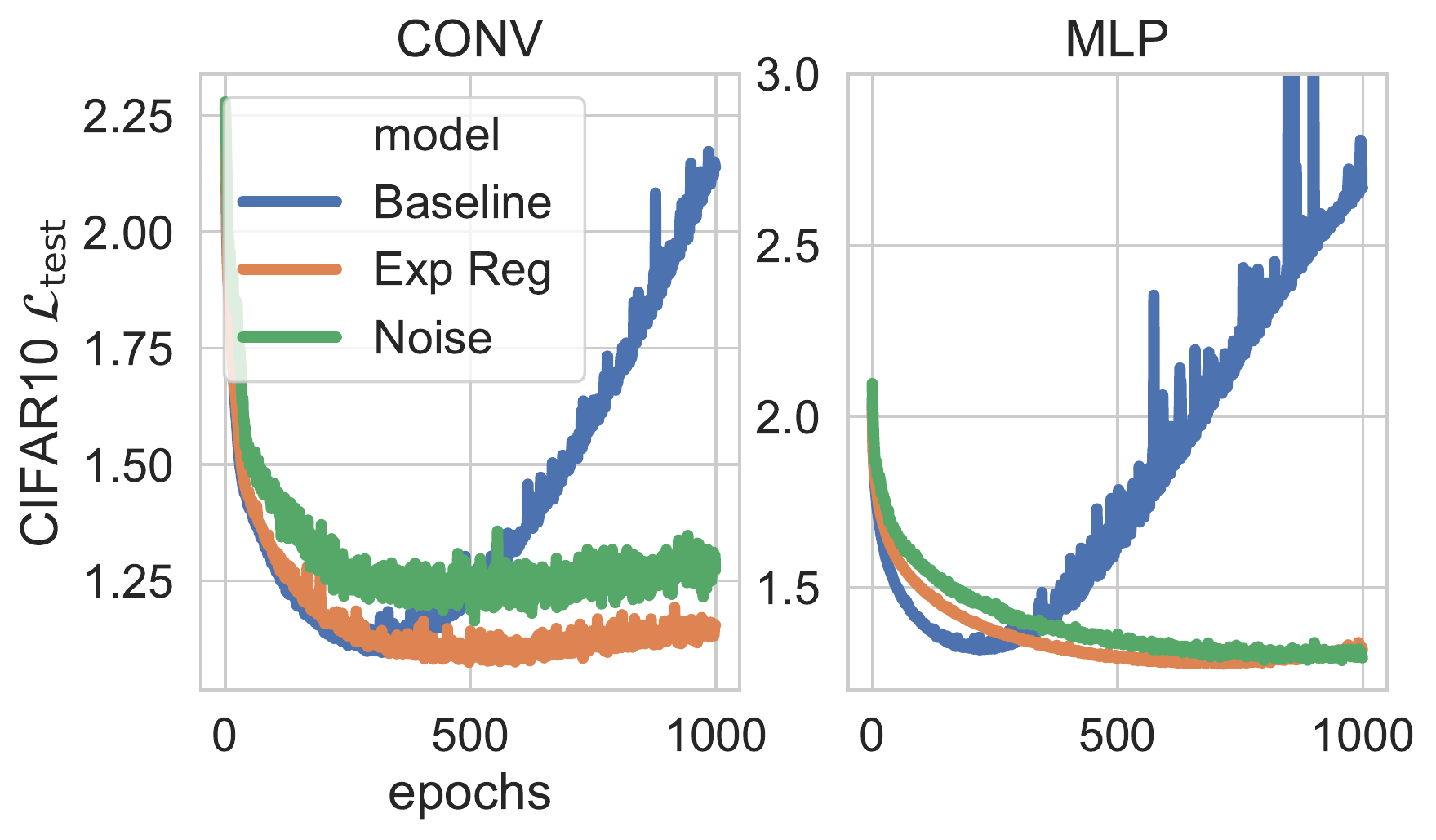} \\
   \includegraphics[width=0.35\textwidth]{./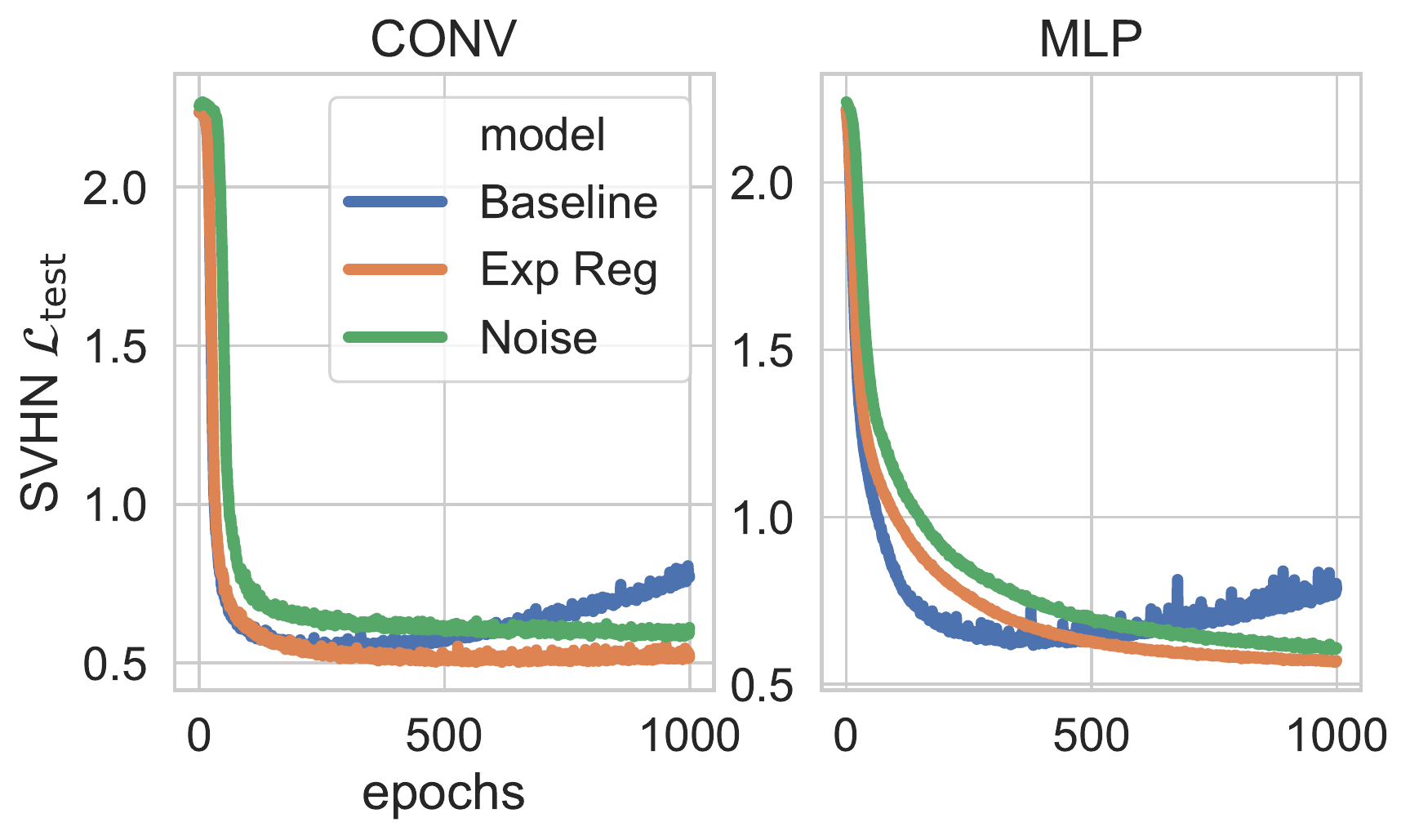}
    \caption{
    We show the test-set loss for SVHN [bottom] and CIFAR10 [top], for 2-layer convolutional (CONV) and 4-layer MLPs with 512 units per layer trained with the explicit regulariser approximation $R(\cdot)$ of \citet{Camuto2020_2} (Exp Reg),  with additive-GNIs ($\sigma^2=0.1$) (Noise), and no regularisation (Baseline).
    }
    \label{fig:exp_reg_noise_gap}
\end{figure}

\subsection{Further Experiments}
We have already ascertained that the bias implied by Theorem~\ref{thm:bias:approx} has a visible impact on training performance in Figures~\ref{fig:alpha_replace} and \ref{fig:implicit_bias_sinusoids}. 
We have also already shown that the asymmetry and heavy-tails of the implicit-effect gradient noise are responsible for this performance degradation in Figure~\ref{fig:alpha_replace}: models trained with Gaussian noise on gradients outperform models trained with $\mathcal{S}_{\alpha}$ gradient noise, and those trained with the implicit effect, \emph{on training data without batching}. 
We corroborate these findings with experiments \textit{with mini-batching and results on test data}.
In Figure~\ref{fig:exp_reg_noise_gap}
we use the approximation of the explicit regulariser $R(\mathcal{B}; \v{w})$ derived by \citet{Camuto2020_2}  for computational efficiency. 
Convolutional networks trained with $R$ consistently outperform those trained with GNIs and mini-batching on \textit{held-out data}, supporting that the implicit effect degrades performance. 
In Figure~\ref{fig:svhn_implicit_effect}, we sample $M$ multiplicative-GNI samples and marginalise out the implicit effect as before.  We model the gradients of the implicit effect, $\nabla E_\mathcal{L}(\cdot)$, as an $\mathcal{S}_\alpha$ distribution. 
Empirically, we found that when increasing the variance ($\sigma^2$) of the injected noise, the gradient noise $\nabla E_\mathcal{L}(\cdot)$ becomes increasingly heavy-tailed and skewed, i.e. $\alpha$ decreases and $|\theta|$ increases, and in tandem larger $M$ models begin to outperform smaller $M$ models on \emph{held-out data}, when trained with mini-batches.
These results support that GNIs induce bias in SGD because of the asymmetric heavy-tailed noise they induce on gradient updates.

\begin{figure}[h!]
    \centering
    \includegraphics[width=0.48\textwidth]{./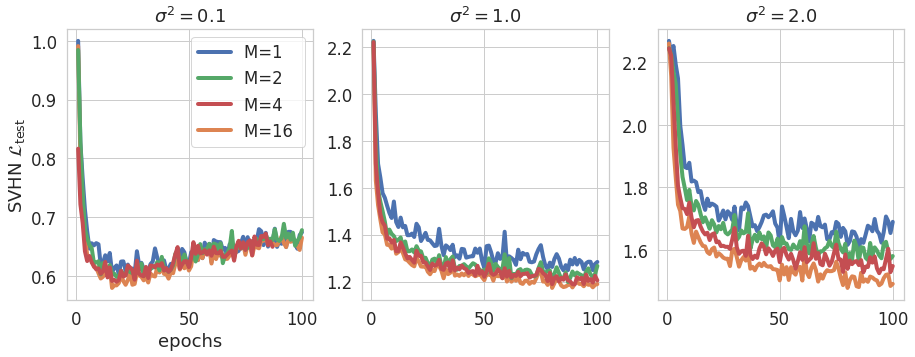} \\
    \hspace{0.0cm}
    \includegraphics[width=0.15\textwidth]{./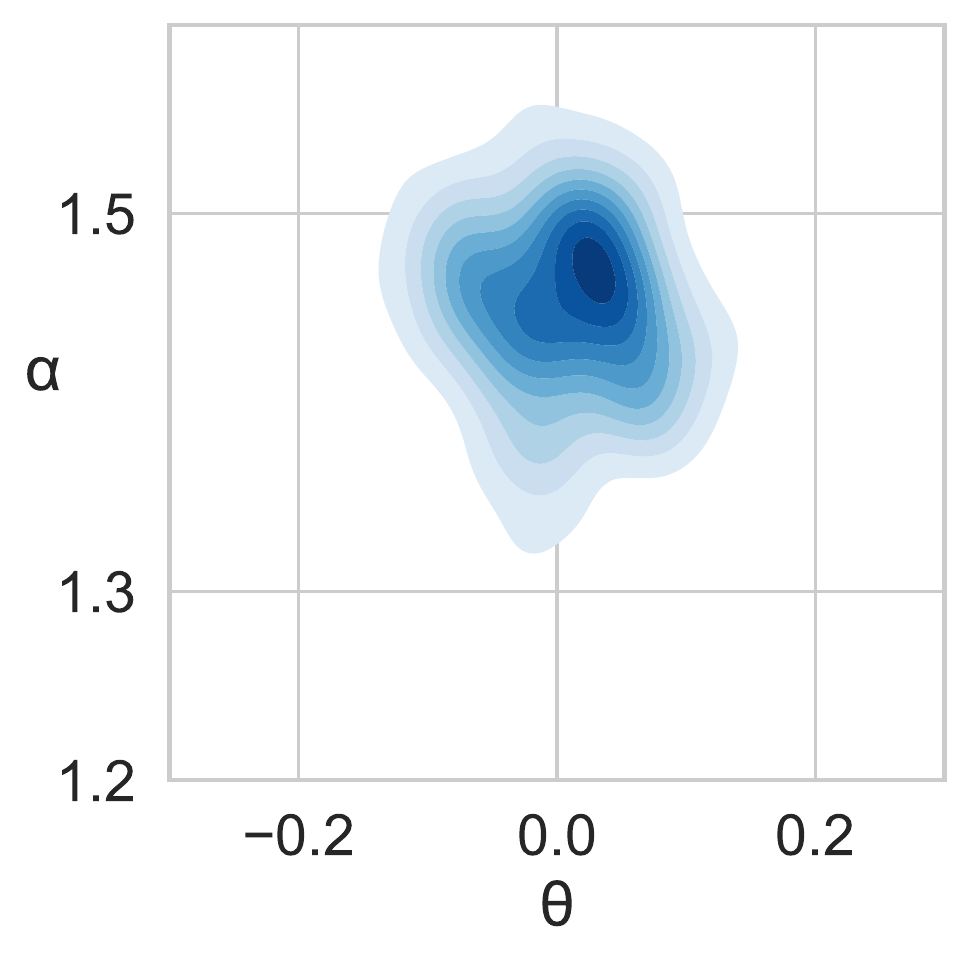}
    \includegraphics[width=0.15\textwidth]{./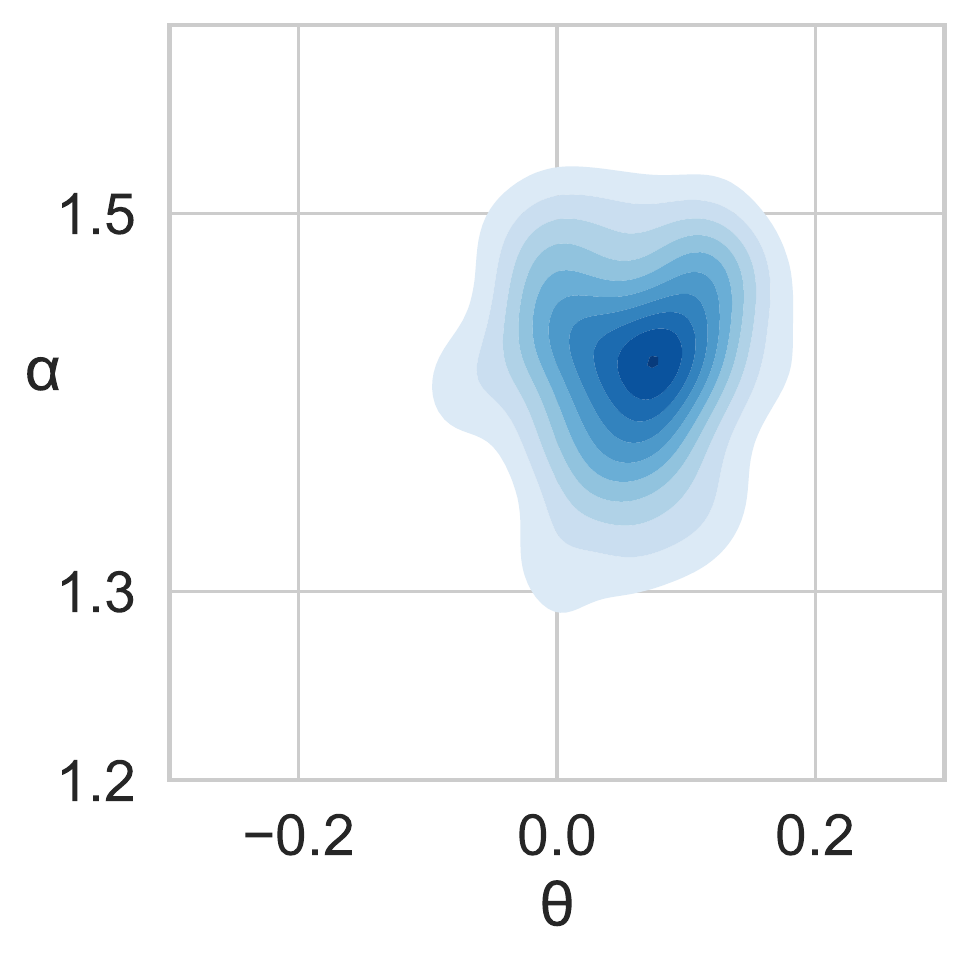}
    \includegraphics[width=0.15\textwidth]{./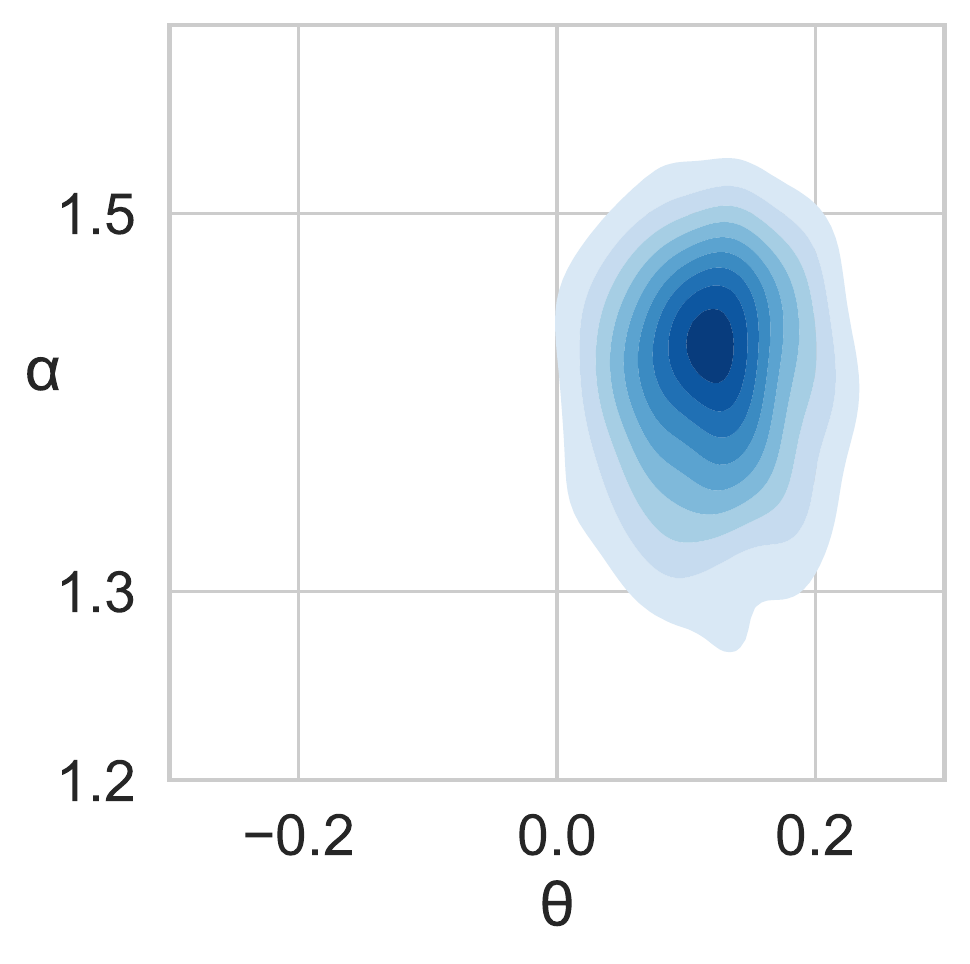}
    \caption{[first row] We train 2-dense-layer-256-unit-per-layer ELU networks on the objective $\frac{1}{M}\sum_{m=0}^M\widetilde{\mathcal{L}}(\mathcal{B};\v{w}, \v{\epsilon}_m)$
    with a cross-entropy loss (see Appendix~\ref{sec:costfunctions}) for SVHN.
    We use \textit{multiplicative} noise of variance $\sigma^2$ and batch size of 512. 
    We plot the test-set loss ($\mathcal{L}_{\mathrm{test}}$).
    [second row] We fit univariate $\mathcal{S}_{\alpha}$ via maximum likelihood \citep{Nolan2001} to $\nabla E_\mathcal{L}(\cdot)$ and show KDE plots of parameters' 
    estimates.  
    }
    \label{fig:svhn_implicit_effect}
\end{figure}

\section{Conclusion}

Our work lays the foundations for the study of regularisation methods from the perspective of SDEs.
We have shown that Gaussian Noise Injections (GNIs), though they inject Gaussian noise in the forward pass, induce asymmetric heavy-tailed noise on gradient updates by way of the implicit effect. 
By modelling the overall induced noise using an asymmetric $\alpha$-stable noise, we demonstrate that the stationary distribution of this process gets arbitrarily distant from the so-called Gibbs measure, whose modes exactly match the local minima of the loss function, shedding light on why neural networks trained with GNIs underperform networks trained solely with the explicit effect.
Given the deleterious effects of asymmetric gradient noise on gradient descent, extensions of this work could focus on methods that symmetrise gradient noise, stemming from batching or noise injections, so as to limit these negative effects.

\section*{Acknowledgements}
This research was directly funded by the Alan Turing Institute under Engineering and Physical Sciences Research Council (EPSRC) grant EP/N510129/1.
Alexander Camuto was supported by an EPSRC Studentship.
Xiaoyu Wang and Lingjiong Zhu are partially supported by the grant NSF DMS-2053454 from the National Science Foundation.
Lingjiong Zhu is also grateful to the partial support from a Simons Foundation Collaboration Grant. 
Mert G\"urb\"uzbalaban’s research is supported in part by
the grants Office of Naval Research Award Number N00014-21-1-2244, National Science Foundation (NSF) CCF-1814888, NSF DMS-1723085, NSF DMS-2053485. 
Umut \c{S}im\c{s}ekli's research is partly supported by the French government under management of Agence Nationale de la Recherche as part of the ``Investissements d’avenir'' program, reference ANR-19-P3IA-0001 (PRAIRIE 3IA Institute).



\bibliographystyle{icml2021}
\bibliography{references.bib}

\begin{thebibliography}{57}
\providecommand{\natexlab}[1]{#1}
\providecommand{\url}[1]{\texttt{#1}}
\expandafter\ifx\csname urlstyle\endcsname\relax
  \providecommand{\doi}[1]{doi: #1}\else
  \providecommand{\doi}{doi: \begingroup \urlstyle{rm}\Url}\fi

\bibitem[Agapiou et~al.(2014)Agapiou, Stuart, and Zhang]{Welling2011}
Agapiou, S., Stuart, A.~M., and Zhang, Y.~X.
\newblock {Bayesian posterior contraction rates for linear severely ill-posed
  inverse problems}.
\newblock \emph{Journal of Inverse and Ill-Posed Problems}, 22\penalty0
  (3):\penalty0 297--321, 2014.

\bibitem[Bishop(1995)]{Bishop1995}
Bishop, C.~M.
\newblock {Training with Noise is Equivalent to Tikhonov Regularization}.
\newblock \emph{Neural Computation}, 7\penalty0 (1):\penalty0 108--116, 1995.

\bibitem[Camuto et~al.(2020)Camuto, Willetts, Şimşekli, Roberts, and
  Holmes]{Camuto2020_2}
Camuto, A., Willetts, M., Şimşekli, U., Roberts, S., and Holmes, C.
\newblock {Explicit Regularisation in Gaussian Noise Injections}.
\newblock In \emph{Advances in Neural Information Processing Systems
  (NeurIPS)}, 2020.

\bibitem[Cohen et~al.(2019)Cohen, Rosenfeld, and Kolter]{Cohen2019}
Cohen, J., Rosenfeld, E., and Kolter, J.~Z.
\newblock {Certified adversarial robustness via randomized smoothing}.
\newblock \emph{36th International Conference on Machine Learning, ICML 2019},
  2019-June:\penalty0 2323--2356, 2019.

\bibitem[Dieng et~al.(2018)Dieng, Ranganath, Altosaar, and
  Blei]{dieng2018noisin}
Dieng, A.~B., Ranganath, R., Altosaar, J., and Blei, D.~M.
\newblock Noisin: Unbiased regularization for recurrent neural networks.
\newblock In \emph{Proceedings of the 35th International Conference on Machine
  Learning}, pp.\  1252--1261, 2018.

\bibitem[Duan(2015)]{duan2015}
Duan, J.
\newblock \emph{An Introduction to Stochastic Dynamics}.
\newblock Cambridge University Press, New York, 2015.

\bibitem[Durmus \& Moulines(2017)Durmus and Moulines]{DM2015}
Durmus, A. and Moulines, E.
\newblock Nonasymptotic convergence analysis for the unadjusted {L}angevin
  algorithm.
\newblock \emph{Annals of Applied Probability}, 27\penalty0 (3):\penalty0
  1551--1587, 2017.

\bibitem[Dybiec et~al.(2007)Dybiec, Gudowska-Nowak, and
  Sokolov]{dybiec2007stationary}
Dybiec, B., Gudowska-Nowak, E., and Sokolov, I.
\newblock Stationary states in {L}angevin dynamics under asymmetric {L}{\'e}vy
  noises.
\newblock \emph{Physical Review E}, 76\penalty0 (4):\penalty0 041122, 2007.

\bibitem[Fiche et~al.(2013)Fiche, Cexus, Martin, and Khenchaf]{Fiche2013}
Fiche, A., Cexus, J.~C., Martin, A., and Khenchaf, A.
\newblock {Features modeling with an $\alpha$-stable distribution: Application
  to pattern recognition based on continuous belief functions}.
\newblock \emph{Information Fusion}, 14\penalty0 (4):\penalty0 504--520, 2013.

\bibitem[Gao et~al.(2018)Gao, G{\"{u}}rb{\"{u}}zbalaban, and Zhu]{Gao2018}
Gao, X., G{\"{u}}rb{\"{u}}zbalaban, M., and Zhu, L.
\newblock {Global Convergence of Stochastic Gradient Hamiltonian Monte Carlo
  for Non-Convex Stochastic Optimization: Non-Asymptotic Performance Bounds and
  Momentum-Based Acceleration}.
\newblock \emph{arXiv:1809.04618}, 2018.

\bibitem[Gao et~al.(2020)Gao, G\"{u}rb\"{u}zbalaban, and Zhu]{Gao2020}
Gao, X., G\"{u}rb\"{u}zbalaban, M., and Zhu, L.
\newblock Breaking reversibility accelerates {L}angevin dynamics for global
  non-convex optimization.
\newblock In \emph{Advances in Neural Information Processing Systems
  (NeurIPS)}, 2020.

\bibitem[Gnedenko \& Kolmogorov(1954)Gnedenko and
  Kolmogorov]{gnedenko1968limit}
Gnedenko, B.~V. and Kolmogorov, A.
\newblock \emph{{Limit Distributions for Sums of Independent Random
  Variables}}.
\newblock Addison-Wesley, Cambridge, MA, 1954.
\newblock Translated by Kai Lai Chung.

\bibitem[Gorenflo \& Mainardi(1998)Gorenflo and Mainardi]{GM1998}
Gorenflo, R. and Mainardi, F.
\newblock Random walk models for space-fractional diffusion processes.
\newblock \emph{Fractional Calculus \& Applied Analysis}, 1:\penalty0 167--191,
  1998.

\bibitem[{G\"urb\"uzbalaban} \& {Hu}(2020){G\"urb\"uzbalaban} and {Hu}]{gh2020}
{G\"urb\"uzbalaban}, M. and {Hu}, Y.
\newblock {Fractional moment-preserving initialization schemes for training
  fully-connected neural networks}.
\newblock \emph{arXiv preprint arXiv:2005.11878}, 2020.

\bibitem[G\"{u}rb\"{u}zbalaban et~al.(2020)G\"{u}rb\"{u}zbalaban,
  \c{S}im\c{s}ekli, and Zhu]{gsz2020heavy}
G\"{u}rb\"{u}zbalaban, M., \c{S}im\c{s}ekli, U., and Zhu, L.
\newblock The heavy-tail phenomenon in {SGD}.
\newblock \emph{arXiv preprint arXiv:2006.04740}, 2020.

\bibitem[Hodgkinson \& Mahoney(2020)Hodgkinson and
  Mahoney]{hodgkinson2020multiplicative}
Hodgkinson, L. and Mahoney, M.~W.
\newblock Multiplicative noise and heavy tails in stochastic optimization.
\newblock \emph{arXiv preprint arXiv:2006.06293}, 2020.

\bibitem[Imkeller \& Pavlyukevich(2008)Imkeller and Pavlyukevich]{IP2008}
Imkeller, P. and Pavlyukevich, I.
\newblock Metastable behavior of small noise {L}\'{e}vy-driven diffusions.
\newblock \emph{ESAIM: Probability and Statistics}, 12:\penalty0 412--437,
  2008.

\bibitem[Jastrz{\c{e}}bski et~al.(2017)Jastrz{\c{e}}bski, Kenton, Arpit,
  Ballas, Fischer, Bengio, and Storkey]{Jastrzebski2017}
Jastrz{\c{e}}bski, S., Kenton, Z., Arpit, D., Ballas, N., Fischer, A., Bengio,
  Y., and Storkey, A.
\newblock {Three Factors Influencing Minima in SGD}.
\newblock \emph{arXiv:1711.04623}, 2017.

\bibitem[Kingma et~al.(2015)Kingma, Salimans, and Welling]{Kingma}
Kingma, D.~P., Salimans, T., and Welling, M.
\newblock {Variational dropout and the local reparameterization trick}.
\newblock In \emph{Advances in Neural Information Processing Systems}, volume
  2015-Janua, pp.\  2575--2583, 2015.

\bibitem[Kroneburg(2011)]{Kron2011}
Kroneburg, M.
\newblock The binomial coefficient for negative arguments.
\newblock \emph{arXiv preprint arXiv:1105.3689}, 2011.

\bibitem[Kuchibhotla \& Chakrabortty(2018)Kuchibhotla and
  Chakrabortty]{Kuchibhotla2018}
Kuchibhotla, A.~K. and Chakrabortty, A.
\newblock {Moving beyond sub-Gaussianity in high dimensional statistics:
  Applications in covariance estimation and linear regression}.
\newblock \emph{arXiv:1804.02605}, 2018.

\bibitem[Li et~al.(2017)Li, Tai, and E]{pmlr-v70-li17f}
Li, Q., Tai, C., and E, W.
\newblock Stochastic modified equations and adaptive stochastic gradient
  algorithms.
\newblock In \emph{Proceedings of the 34th International Conference on Machine
  Learning}, pp.\  2101--2110, 06--11 Aug 2017.

\bibitem[Mainardi et~al.(2001)Mainardi, Luchko, and Pagnini]{GM2001}
Mainardi, F., Luchko, Y., and Pagnini, G.
\newblock The fundamental solution of the space-time fractional diffusion
  equation.
\newblock \emph{Fractional Calculus \& Applied Analysis}, 4:\penalty0 153--192,
  2001.

\bibitem[Mandt et~al.(2016)Mandt, Hoffman, and Blei]{Mandt2016}
Mandt, S., Hoffman, M.~D., and Blei, D.~M.
\newblock {A variational analysis of stochastic gradient algorithms}.
\newblock \emph{33rd International Conference on Machine Learning, ICML 2016},
  1:\penalty0 555--566, 2016.

\bibitem[Meerschaert \& Tadjeran(2004)Meerschaert and Tadjeran]{MT2004}
Meerschaert, M. and Tadjeran, C.
\newblock Finite difference approximation for fractional advection-dispersion
  flow equations.
\newblock \emph{Journal of Computational and Applied Mathematics},
  172:\penalty0 65--77, 2004.

\bibitem[Nadarajah \& Pog{\'{a}}ny(2016)Nadarajah and
  Pog{\'{a}}ny]{NADARAJAH2016201}
Nadarajah, S. and Pog{\'{a}}ny, T.~K.
\newblock {On the distribution of the product of correlated normal random
  variables}.
\newblock \emph{Comptes Rendus Mathematique}, 354\penalty0 (2):\penalty0
  201--204, 2016.

\bibitem[Nolan(2001)]{Nolan2001}
Nolan, J.~P.
\newblock Maximum likelihood estimation and diagnostics for stable
  distributions.
\newblock In Barndorff-Nielsen, O.~E., Resnick, S.~I., and Mikosch, T. (eds.),
  \emph{L{\'e}vy Processes: Theory and Applications}, pp.\  379--400.
  Birkh{\"a}user Boston, Boston, MA, 2001.

\bibitem[Oliveira et~al.(2016)Oliveira, Oliveira, and
  Seijas-Macias]{Oliveira2016}
Oliveira, A., Oliveira, T.~A., and Seijas-Macias, A.
\newblock {Skewness into the product of two normally distributed variables and
  the risk consequences}.
\newblock \emph{Revstat Statistical Journal}, 14\penalty0 (2):\penalty0
  119--138, 2016.

\bibitem[Ortigueira(2006)]{ortigueira2006riesz}
Ortigueira, M.~D.
\newblock Riesz potential operators and inverses via fractional centred
  derivatives.
\newblock \emph{International Journal of Mathematics and Mathematical
  Sciences}, 2006\penalty0 (Article ID 48391):\penalty0 1--12, 2006.

\bibitem[Ortiguera(2006b)]{orti2006b}
Ortiguera, M.~D.
\newblock Fractional central differences and derivatives.
\newblock \emph{IFAC Proceedings Volumes}, 39\penalty0 (11):\penalty0 58--63,
  2006b.

\bibitem[Panloup(2008)]{panloup2008recursive}
Panloup, F.
\newblock Recursive computation of the invariant measure of a stochastic
  differential equation driven by a {L}{\'e}vy process.
\newblock \emph{Annals of Applied Probability}, 18\penalty0 (2):\penalty0
  379--426, 2008.

\bibitem[Pedregosa et~al.(2011)Pedregosa, Varoquaux, Gramfort, Michel, Thirion,
  Grisel, Blondel, Prettenhofer, Weiss, Dubourg, Vanderplas, Passos,
  Cournapeau, Brucher, Perrot, and Duchesnay]{pedregosa2011scikit}
Pedregosa, F., Varoquaux, G., Gramfort, A., Michel, V., Thirion, B., Grisel,
  O., Blondel, M., Prettenhofer, P., Weiss, R., Dubourg, V., Vanderplas, J.,
  Passos, A., Cournapeau, D., Brucher, M., Perrot, M., and Duchesnay, E.
\newblock Scikit-learn: Machine learning in {P}ython.
\newblock \emph{Journal of Machine Learning Research}, 12:\penalty0 2825--2830,
  2011.

\bibitem[Poole et~al.(2014)Poole, Sohl-Dickstein, and Ganguli]{Poole2014}
Poole, B., Sohl-Dickstein, J., and Ganguli, S.
\newblock {Analyzing noise in autoencoders and deep networks}.
\newblock \emph{arXiv:1406.1831}, 2014.

\bibitem[Qian(1999)]{qian1999momentum}
Qian, N.
\newblock On the momentum term in gradient descent learning algorithms.
\newblock \emph{Neural networks}, 12\penalty0 (1):\penalty0 145--151, 1999.

\bibitem[Raginsky et~al.(2017)Raginsky, Rakhlin, and Telgarsky]{Raginsky}
Raginsky, M., Rakhlin, A., and Telgarsky, M.
\newblock Non-convex learning via stochastic gradient {L}angevin dynamics: a
  nonasymptotic analysis.
\newblock In \emph{Conference on Learning Theory}, pp.\  1674--1703, 2017.

\bibitem[Robbins \& Monro(1951)Robbins and Monro]{Robbins1951}
Robbins, H. and Monro, S.
\newblock {A Stochastic Approximation Method}.
\newblock \emph{The Annals of Mathematical Statistics}, 22\penalty0
  (3):\penalty0 400--407, 1951.

\bibitem[Roberts \& Stramer(2002)Roberts and Stramer]{roberts2002langevin}
Roberts, G.~O. and Stramer, O.
\newblock Langevin diffusions and {M}etropolis-{H}astings algorithms.
\newblock \emph{Methodology and Computing in Applied Probability}, 4\penalty0
  (4):\penalty0 337--357, 2002.

\bibitem[Ruder(2016)]{Ruder2016}
Ruder, S.
\newblock {An overview of gradient descent optimization algorithms}.
\newblock \emph{arXiv:1609.04747}, 2016.

\bibitem[Samorodnitsky \& Taqqu(1994)Samorodnitsky and Taqqu]{ST1994}
Samorodnitsky, G. and Taqqu, M.~S.
\newblock \emph{Stable Non-Gaussian Random Processes: Stochastic Models with
  Infinite Variance}.
\newblock Chapman \& Hall, New York, 1994.

\bibitem[Sarafrazi \& Yazdi(2019)Sarafrazi and Yazdi]{Sarafrazi2019}
Sarafrazi, K. and Yazdi, M.
\newblock {Skewed alpha-stable distribution for natural texture modeling and
  segmentation in contourlet domain}.
\newblock \emph{Eurasip Journal on Image and Video Processing}, 2019\penalty0
  (1):\penalty0 1--12, 2019.

\bibitem[Schertzer et~al.(2001)Schertzer, Larchev\^{e}que, Duan, Yanovsky, and
  Lovejoy]{SLDYL}
Schertzer, D., Larchev\^{e}que, M., Duan, J., Yanovsky, V., and Lovejoy, S.
\newblock Fractional {F}okker-{P}lanck equation for nonlinear stochastic
  differential equations driven by non-{G}aussian {L}\'{e}vy stable noises.
\newblock \emph{Journal of Mathematical Physics}, 42\penalty0 (1):\penalty0
  200--212, 2001.

\bibitem[{\c{S}}im\c{s}ekli(2017)]{FLMC}
{\c{S}}im\c{s}ekli, U.
\newblock Fractional {L}angevin {M}onte {C}arlo: Exploring {L}\'{e}vy driven
  stochastic differential equations for {M}arkov {C}hain {M}onte {C}arlo.
\newblock In \emph{International Conference on Machine Learning}, pp.\
  3200--3209, 2017.

\bibitem[{\c{S}}im\c{s}ekli et~al.(2019){\c{S}}im\c{s}ekli, Sagun, and
  G\"{u}rb\"{u}zbalaban]{pmlr-v97-simsekli19a}
{\c{S}}im\c{s}ekli, U., Sagun, L., and G\"{u}rb\"{u}zbalaban, M.
\newblock A tail-index analysis of stochastic gradient noise in deep neural
  networks.
\newblock In \emph{Proceedings of the 36th International Conference on Machine
  Learning}, pp.\  5827--5837, 2019.

\bibitem[{\c{S}}im\c{s}ekli et~al.(2020){\c{S}}im\c{s}ekli, Zhu, Teh, and
  G{\"{u}}rb{\"{u}}zbalaban]{Simsekli2020}
{\c{S}}im\c{s}ekli, U., Zhu, L., Teh, Y.~W., and G{\"{u}}rb{\"{u}}zbalaban, M.
\newblock {Fractional Underdamped Langevin Dynamics: Retargeting SGD with
  Momentum under Heavy-Tailed Gradient Noise}.
\newblock In \emph{Proceedings of the 37th International Conference on Machine
  Learning}, pp.\  8970--8980, 2020.

\bibitem[{\c{S}}im{\c{s}}ekli et~al.(2019){\c{S}}im{\c{s}}ekli,
  G{\"{u}}rb{\"{u}}zbalaban, Nguyen, Richard, and Sagun]{Simsekli2019a}
{\c{S}}im{\c{s}}ekli, U., G{\"{u}}rb{\"{u}}zbalaban, M., Nguyen, T.~H.,
  Richard, G., and Sagun, L.
\newblock {On the Heavy-Tailed Theory of Stochastic Gradient Descent for Deep
  Neural Networks}.
\newblock \emph{arXiv:1912.00018}, 2019.

\bibitem[Sliusarenko et~al.(2013)Sliusarenko, Surkov, Gonchar, and
  Chechkin]{sliusarenko2013stationary}
Sliusarenko, O.~Y., Surkov, D., Gonchar, V.~Y., and Chechkin, A.~V.
\newblock Stationary states in bistable system driven by {L}{\'e}vy noise.
\newblock \emph{The European Physical Journal Special Topics}, 216\penalty0
  (1):\penalty0 133--138, 2013.

\bibitem[Srivastava et~al.(2014)Srivastava, Hinton, Krizhevsky, Sutskever, and
  Salakhutdinov]{Srivastava2014}
Srivastava, N., Hinton, G., Krizhevsky, A., Sutskever, I., and Salakhutdinov,
  R.
\newblock {Dropout: A simple way to prevent neural networks from overfitting}.
\newblock \emph{Journal of Machine Learning Research}, 15:\penalty0 1929--1958,
  2014.

\bibitem[Tian et~al.(2015)Tian, Zhou, and Deng]{TZD2015}
Tian, W., Zhou, H., and Deng, W.
\newblock A class of second order difference approximations for solving space
  fractional diffusion equations.
\newblock \emph{Mathematics of Computation}, 84\penalty0 (294):\penalty0
  1703--1727, 2015.

\bibitem[Vershynin(2018)]{vershynin_2018}
Vershynin, R.
\newblock \emph{High-Dimensional Probability: An Introduction with Applications
  in Data Science}.
\newblock Cambridge Series in Statistical and Probabilistic Mathematics.
  Cambridge University Press, 2018.

\bibitem[Vladimirova et~al.(2019)Vladimirova, Verbeek, Mesejo, and
  Arbel]{Vladimirova2019UnderstandingLevel}
Vladimirova, M., Verbeek, J., Mesejo, P., and Arbel, J.
\newblock {Understanding priors in Bayesian neural networks at the unit level}.
\newblock \emph{36th International Conference on Machine Learning, ICML 2019},
  2019-June:\penalty0 11248--11257, 2019.

\bibitem[Vladimirova et~al.(2020)Vladimirova, Girard, Nguyen, and
  Arbel]{Vladimirova2020}
Vladimirova, M., Girard, S., Nguyen, H., and Arbel, J.
\newblock {Sub‐Weibull distributions: generalizing sub‐Gaussian and
  sub‐Exponential properties to heavier‐tailed distributions}.
\newblock \emph{Stat}, 9\penalty0 (1):\penalty0 1--10, 2020.

\bibitem[Webb(1994)]{Webb1994}
Webb, A.~R.
\newblock {Functional Approximation by Feed-Forward Networks: A Least-Squares
  Approach to Generalization}.
\newblock \emph{IEEE Transactions on Neural Networks}, 5\penalty0 (3):\penalty0
  363--371, 1994.

\bibitem[Wei et~al.(2020)Wei, Kakade, and Ma]{Wei2020}
Wei, C., Kakade, S., and Ma, T.
\newblock {The Implicit and Explicit Regularization Effects of Dropout}.
\newblock In \emph{Proceedings of the 37th International Conference on Machine
  Learning}, pp.\  10181--10192, 2020.

\bibitem[Welling \& Teh(2011)Welling and Teh]{WellingTeh}
Welling, M. and Teh, Y.~W.
\newblock {Bayesian Learning via Stochastic Gradient Langevin Dynamics}.
\newblock In \emph{Proceedings of the 28th International Conference on Machine
  Learning}, ICML’11, pp.\  681–688, Madison, WI, USA, 2011. Omnipress.

\bibitem[Zhang et~al.(2017)Zhang, Recht, Bengio, Hardt, and Vinyals]{Zhang2019}
Zhang, C., Recht, B., Bengio, S., Hardt, M., and Vinyals, O.
\newblock {Understanding deep learning requires rethinking generalization}.
\newblock In \emph{5th International Conference on Learning Representations,
  ICLR 2017}, 2017.

\bibitem[Zhang et~al.(2020)Zhang, Karimireddy, Veit, Kim, Reddi, Kumar, and
  Sra]{zhang2020adaptive}
Zhang, J., Karimireddy, S.~P., Veit, A., Kim, S., Reddi, S., Kumar, S., and
  Sra, S.
\newblock Why are adaptive methods good for attention models?
\newblock In \emph{Advances in Neural Information Processing Systems
  (NeurIPS)}, volume~33, 2020.

\bibitem[Zhou et~al.(2020)Zhou, Feng, Ma, Xiong, Hoi, and E]{zhou2020towards}
Zhou, P., Feng, J., Ma, C., Xiong, C., Hoi, S., and E, W.
\newblock Towards theoretically understanding why {SGD} generalizes better than
  {ADAM} in deep learning.
\newblock In \emph{Advances in Neural Information Processing Systems
  (NeurIPS)}, volume~33, 2020.

\end{thebibliography}



\newpage
\appendix
\onecolumn
\clearpage
\setcounter{figure}{0}
\renewcommand\thefigure{\thesection.\arabic{figure}}



\icmltitle{Asymmetric Heavy Tails and Implicit Bias in Gaussian Noise Injections \\ {\normalsize SUPPLEMENTARY DOCUMENT}}

The supplementary document is organised as follows. 

\begin{enumerate}
    \item The supplementary document begins first with a presentation of additional experiments that are referenced directly in the main text (Section~\ref{sec:additionalres}). 
    \item We then cover the cost-functions used to train neural networks in Section~\ref{sec:costfunctions}; 
    and give an overview in Section~\ref{app:skew} of the other potential sources of the implicit effect gradient noise skew which we explored.
    \item {In Section \ref{section-overview}, we provide an overview of the assumptions we will be making in our analysis}. We then describe in Section~\ref{sec:numerical:approx} the numerical method we use to approximate the drift term $b(\v{w},\alpha,\theta)$ defined in~\eqref{b:defn}.
    \item We end with metastability analysis of asymmetric stable processes (Section~\ref{sec:fet_metastability}); followed by the technical proofs of the lemmas, theorems, and corollaries that we present in the main body and the supplementary document of the paper (Section~\ref{sec:technical:proofs}).
\end{enumerate}

Before beginning the supplementary document we make a quick note of network architectures and training hyper-parameters.

\paragraph{Network Architectures}

Networks were trained using stochastic gradient descent with a learning rate of 0.0003 and batch sizes specified in text.
MLP network architectures are specified in text. 
Convolutional (CONV) networks are 2 hidden layer networks. 
The first layer has 32 filters, a kernel size of 4, and a stride length of 2. 
The second layer has 128 filters, a kernel size of 4, and a stride length of 2. 
The final output layer is a dense layer.

\section{Additional Experimental Results}\label{sec:additionalres}

\begin{figure*}[h!]
    \centering
    \includegraphics[width=0.9\textwidth]{./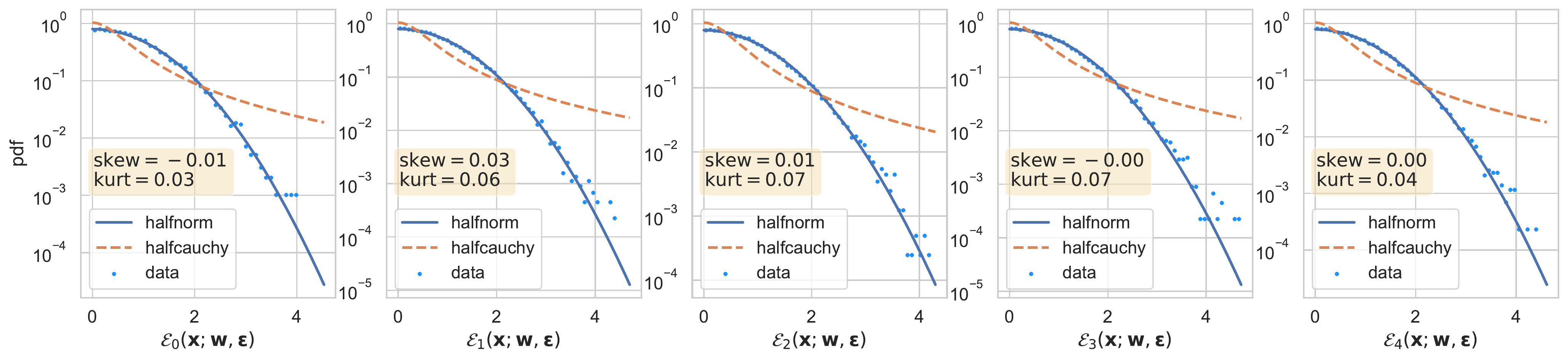} 
    \caption{We measure the skewness and kurtosis \textit{at initialisation} of the noise accumulated on network activations at each layer $i$ for a 4-layer 256-unit wide MLP trained to regress the function $\lambda(\v{x}) = \sum_i \sin(2\pi q_i\v{x}+\phi(i))$ with $q_i  \in(5,10,\dots,45,50), ~ \v{x} \in \mathbb{R}$ and experiencing additive-GNIs. 
    We plot the probability density function of positive samples, comparing against half-normal (non-heavy-tailed) and half-Cauchy (heavy-tailed) distributions, where $\mathcal{E}_i(\v{x}; \v{w}, \v{\epsilon})$ is defined in \eqref{eq:accum_noise}.
    Each blue point represents the noise on an individual activation in a layer $i$ for a point $\v{x}$. 
    This noise is Gaussian (low skew and kurtosis) with a p.d.f. that tracks that of a half-normal. 
    }
    \label{fig:forward_pass}
\end{figure*}

\begin{figure*}[h!]
    \centering
    \includegraphics[width=0.9\textwidth]{./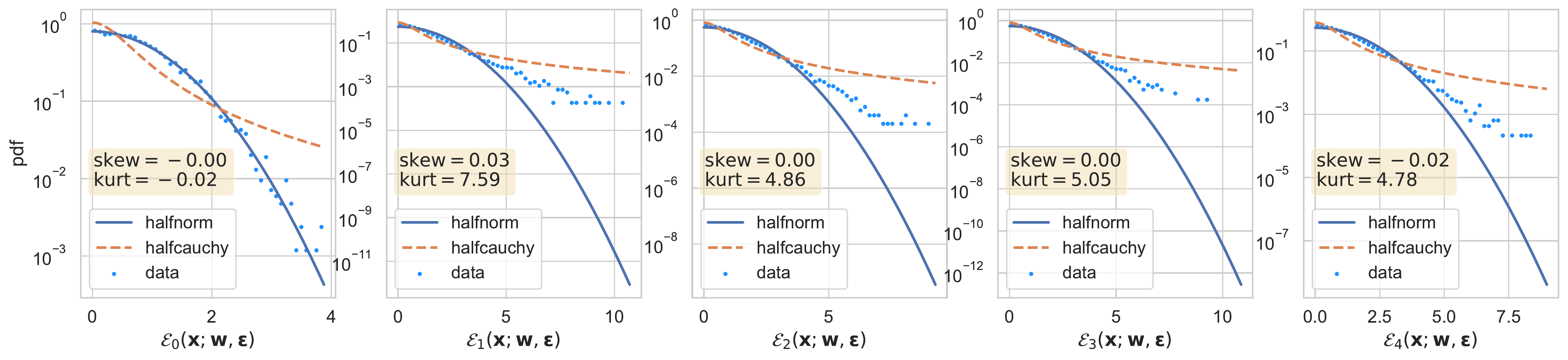}
    \caption{
    Here we show the same plots as in Figure~\ref{fig:forward_pass} but for multiplicative-GNIs. 
    The forward pass here experiences  \textit{symmetric heavy-tailed noise} for all layers past the data layer.
    }
    \label{fig:forward_pass_mult}
\end{figure*}

\begin{figure*}[h!]
     \centering
     \includegraphics[width=0.75\textwidth]{./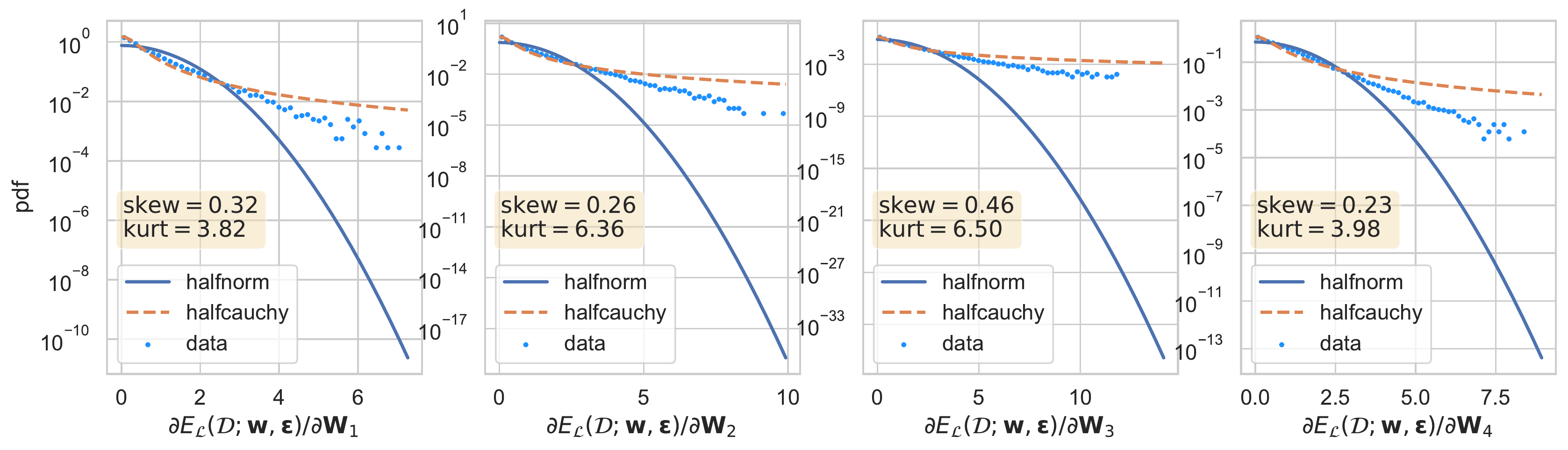}
    \caption{
    Here we show the same plots as in Figure~\ref{fig:backward_pass} but for multiplicative-GNIs. 
    The gradient noise is \textit{skewed and heavy-tailed}, with a p.d.f. that is more Cauchy-like than Gaussian. 
    The kurtosis decays as the gradients approach the input layer, as predicted by Theorem~\ref{thm:gn_tail_properties}.
    }
    \label{fig:backward_pass_mult}
\end{figure*}

\begin{figure*}[t]
    \centering
     \subfigure[CIFAR10 ($\times$)]{\includegraphics[width=0.25 \textwidth]{./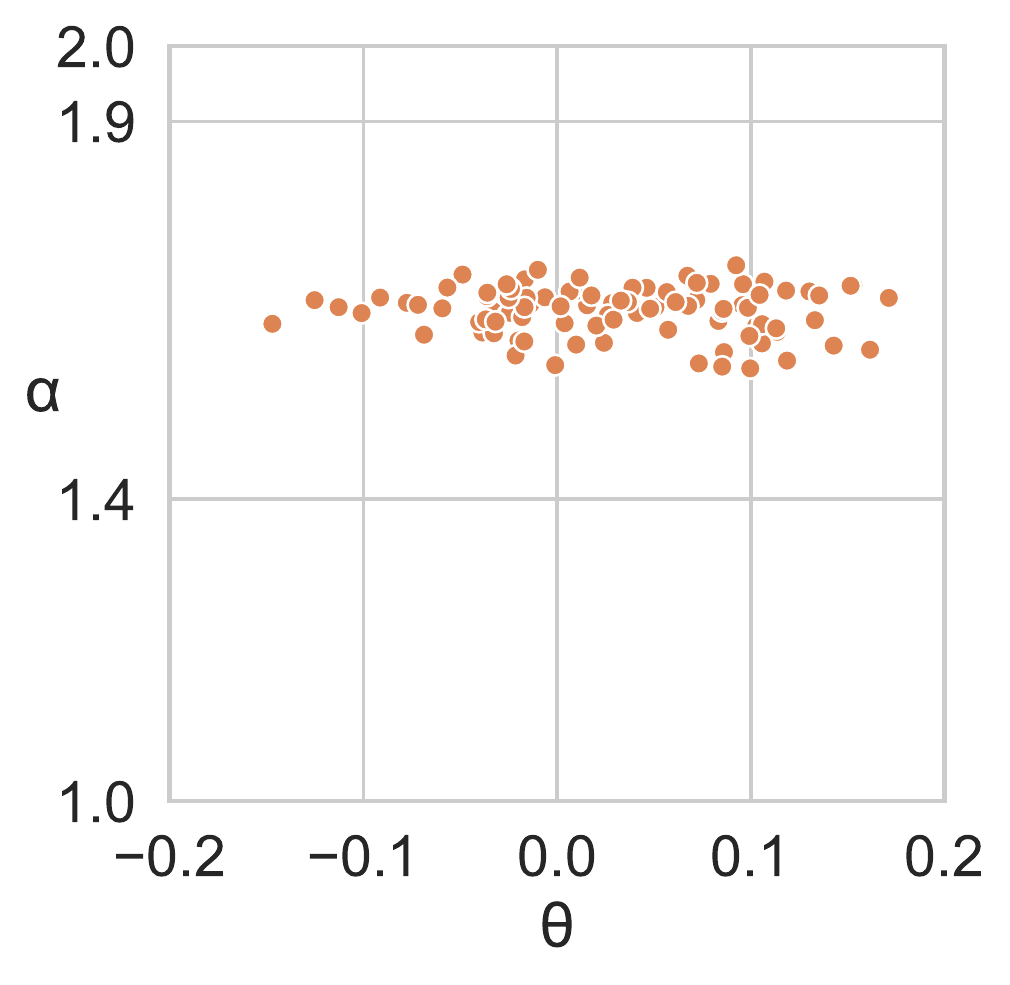}}
     \subfigure[SVHN ($\times$)]{\includegraphics[width=0.25 \textwidth]{./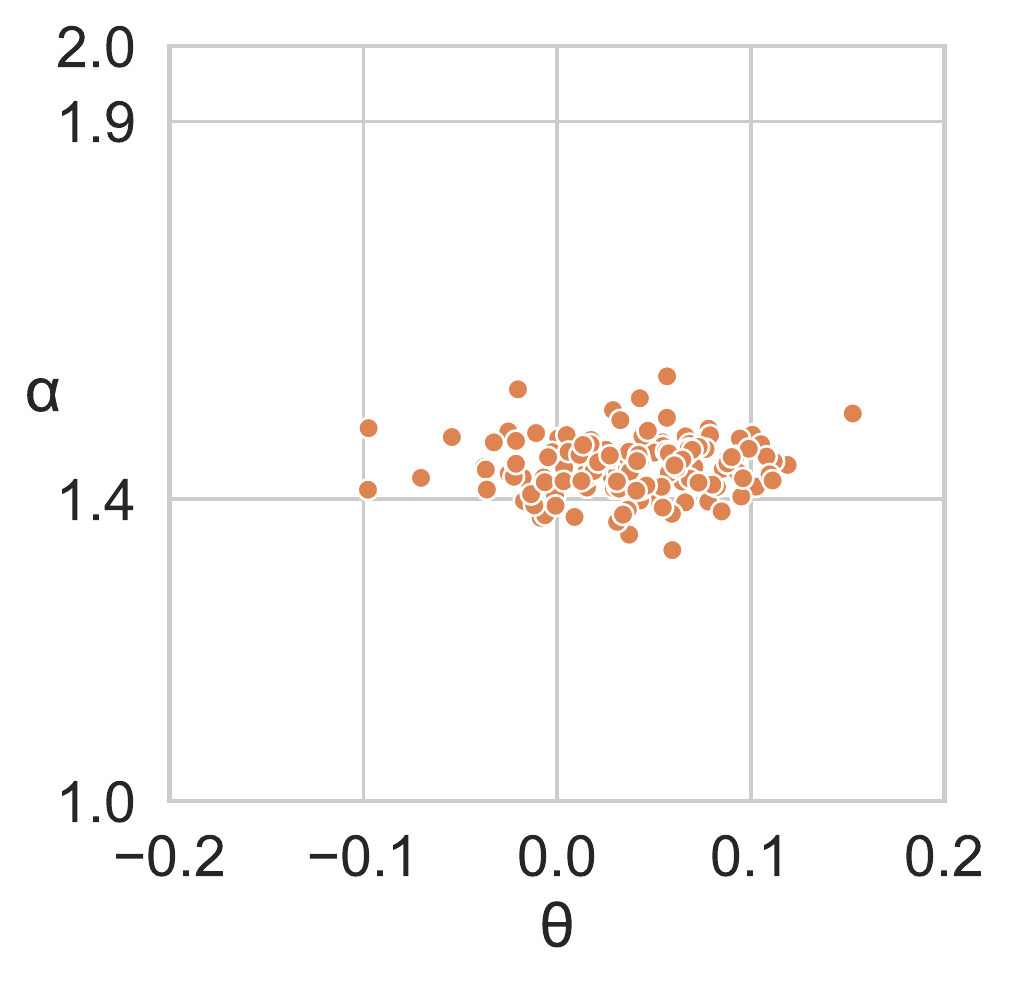}}  
    \subfigure[BHP ($+$)]{\includegraphics[width=0.25 \textwidth]{./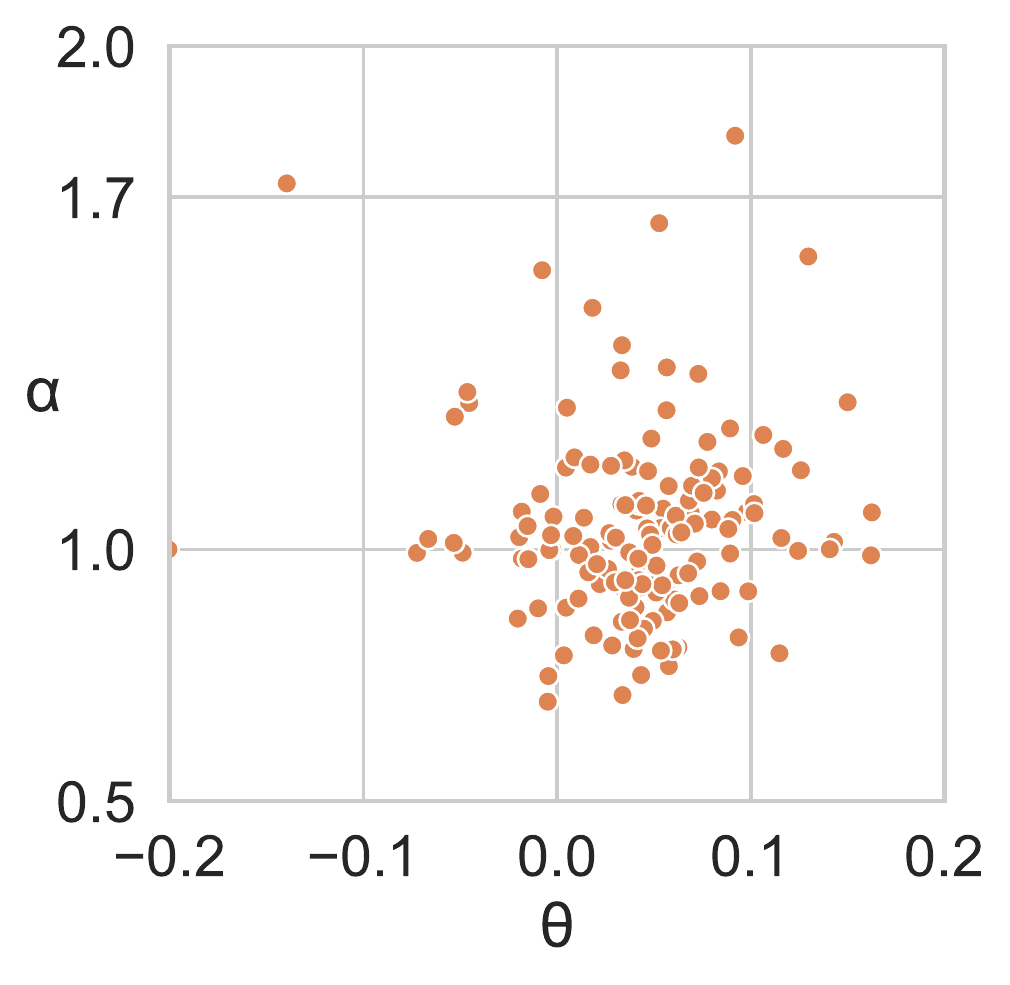}} 
    \caption{We model $\nabla E_\mathcal{L}(\cdot)$ as being drawn from some $\alpha$-stable distribution $\mathcal{S}_\alpha$ and estimate the tail-index $\alpha$ and skewness $\theta$ using 
    maximum likelihood estimation as in \citet{Nolan2001}.
    We plot the results as a scatter for a batch of size $B=512$ for CIFAR10 and SVHN and $B=32$ for Boston House Prices. Additive ($+$) and multiplicative ($\times$) GNIs have $\sigma^2=0.1$.
    }
    \label{fig:scatter_alpha_stable}
\end{figure*}

\clearpage
\section{Cost Functions}\label{sec:costfunctions}
\subsection{Mean Square Error}

In the case of regression the most commonly used loss is the mean-square error.
\begin{equation}
\mathcal{L}(\v{x}, \v{y}) = \left(\v{y} - \v{h}_{L}(\v{x})\right)^2\,. \nonumber
\end{equation}

\subsection{Cross Entropy Loss}

In the case of classification, we use the cross-entropy loss.
If we consider our network outputs $\v{h}_L$ to be the pre-$\mathrm{softmax}$ of logits of the final layer then the loss is for a data-label pair $(\v{x}, \v{y})$
\begin{equation}
\mathcal{L}(\v{x}, \v{y}) = - \sum_{c=0}^C \v{y}_{c} \log \left(\mathrm{softmax}(\v{h}_{L}(\v{x}))_c\right)\,,
\label{eqapp:ce}
\end{equation}
where $c$ indexes over the $C$ possible classes of the classification problem.

\section{Other Potential Sources of the Skewness in the Gradient Noise}\label{app:skew}

The product of correlated random variables can be skewed \citep{Oliveira2016, NADARAJAH2016201}. 
Our first hypothesis was that the skew came from the correlation of $(\partial E_{\mathcal{L}}(\cdot)/ \partial h_{i}^{m})$ and $(\partial h_{i}^{m} / \partial W_{i,l,j})$.
As a test, Figure~\ref{fig:forward_backward_pass_linear} of the Appendix reproduces Figure~\ref{fig:backward_pass} with linear $\kappa$, isolating gradient correlation as a potential source of skew.
Gradients are not skewed, demonstrating that the asymmetry stems from non-linear $\kappa$.

\begin{figure*}[h]
    \centering
     \includegraphics[width=0.75\textwidth]{./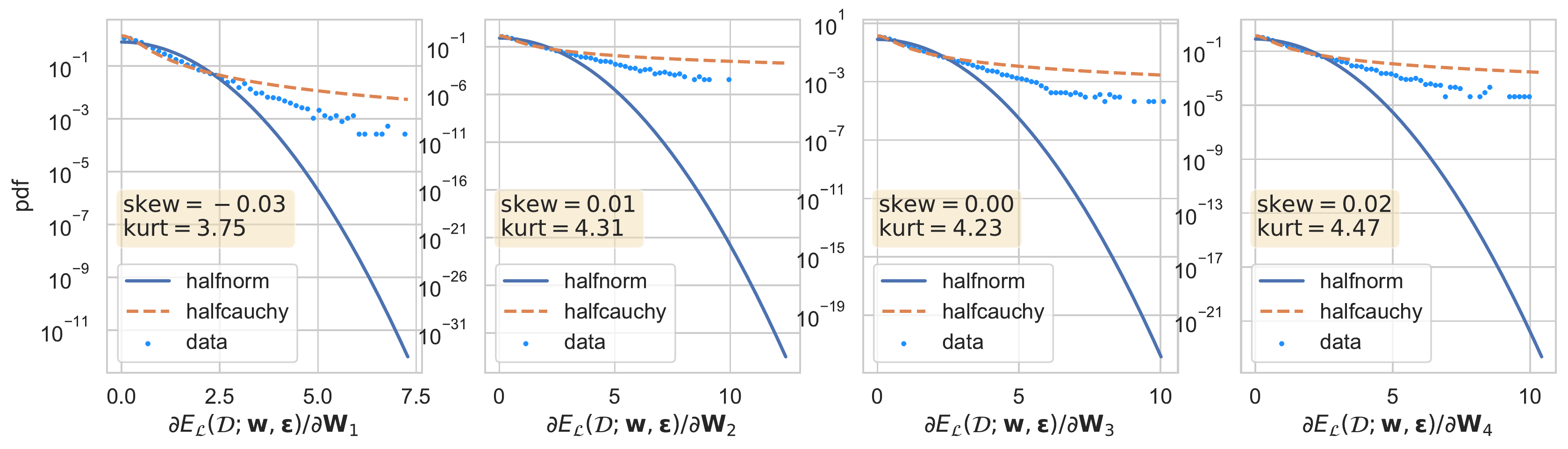}
    \caption{Here we reproduce Figure~\ref{fig:backward_pass} but with no non-linearities. 
    This gradient noise is clearly \textit{heavy-tailed} but \textit{not} skewed. 
    }
    \label{fig:forward_backward_pass_linear}
\end{figure*}


\section{Overview of the Assumptions}
\label{section-overview}

Due to space limitations and to avoid obscuring the main take home messages of our theoretical results, we did not present the two assumption required for Theorem~\ref{thm:bias:approx}. These two assumptions are properly presented in their respective sections (Sections~\ref{sec:numerical:approx} and \ref{sec:appx_thmbias}), where we first provide the required technical context for defining them in each section. In this section, we will shortly discuss the semantics of these assumptions from a higher-level perspective for the convenience of the reader. 

\begin{itemize}
    \item \textbf{Assumption \ref{H0}.} This assumption is essentially an assumption of the tails of the function $\partial \varphi$, with $\varphi({w})=e^{-\varepsilon^{-\alpha} f({w})}$. In particular, in order to make our approximation scheme (to the fractional derivatives) convergent, this assumption makes sure that outside of a compact region, the function $\partial \varphi$ exponentially decays. 
    \item \textbf{Assumption \ref{H1}.} This assumption enforces a certain structure on the Euler-Maruyama discretisation given in Section~\ref{sec:SDE}:
    \[\tilde{\v{w}}_{n+1} = \tilde{\v{w}}_n + \eta_{n+1}b_{h,K}(\tilde{\v{w}}_n,\alpha,\theta) + \varepsilon \eta_{n+1}^{1/\alpha} \Delta \v{L}^{\alpha,\theta}_{n+1}.\]
    As a first condition, we make sure that the step-sizes are decreasing while their sum is diverging, which is a standard assumption. The second condition is essentially a Lyapunov condition that requires the modified drift $b$ behaves well, so that we can control the weak error of the sample averages by using \cite{panloup2008recursive}. The final condition is similar to the second condition in nature, and requires ergodicity of an SDE defined through the approximate drift $b_{h,K}$, in order to enable us link the weak error to the error induced by the approximation scheme used for the fractional derivatives. 
\end{itemize}

\section{Fractional Differentiation and the Approximation Scheme}\label{sec:numerical:approx}

In this section, we provide the details of the Riesz-Feller type fractional derivative $\mathcal{D}^{\alpha-2,-\theta}$, whose definition was omitted in the main document for clarity. We then present the details of the approximation method for the drift term $b(\v{w},\alpha,\theta)$ defined in~\eqref{b:defn}.

The building block of our analysis is a first-order approximation of $\mathcal{D}^{\alpha-2,-\theta} \partial_w\varphi$ for any $\partial_w\varphi \in L^1(\mathbb{R}) \cap \mathcal{C}^4(\mathbb{R})$. 
We consider the one-dimensional case for simplicity since
the L\'{e}vy motion we consider has independent coordinates,
and the multi-dimensional numerical approximation can be reduced
to the one-dimensional case.
Assume the tail index $1<\alpha<2$ and  the skewness parameter satisfies $-1 < \theta < 1$.  

When $\theta=0$, 
\citet{FLMC} developed the numerical approximation method for the drift term $b(\v{w},\alpha,0)$ by approximately computing the Riesz potential\footnote{Note that when $\theta=0$, $\mathcal{D}^{\alpha-2}=\mathcal{D}^{\alpha-2,0}$ recovers the Riesz potential.} $\mathcal{D}^{\gamma}$ via the fractional centred difference method provided by \citet{ortigueira2006riesz,orti2006b}. It is shown that 
for any $-1 < \gamma < 0$, we have the following numerical error, 
\begin{align*}
\left|\mathcal{D}^{\gamma}\partial_{w}\varphi(w)-\Delta_{h,K}^{\gamma}\partial_{w}\varphi(w)\right|
=\mathcal{O}\left(h^{2}+1/(hK)\right)\,,
\end{align*}
as $h\rightarrow 0$, 
where $K\in\mathbb{N}\cup\{0\}$ is the truncation parameter
and $\partial_{w}\varphi(w)$ satisfies some regularity conditions,
and the operator $\Delta_{h,K}^{\gamma}$ is given by 
\begin{align*}
\Delta_{h,K}^{\gamma}f(w)
=\frac{1}{h^{\gamma}}\sum_{k=-K}^{K}g_{\gamma,k}f(w-kh),
\end{align*}
for any test function $f$ satisfying some regularity conditions, where 
\begin{equation*}
g_{\gamma,k}:=\frac{(-1)^{k}\Gamma(\gamma+1)}{\Gamma\left(\frac{\gamma}{2}-k+1\right)\Gamma\left(\frac{\gamma}{2}+k+1\right)}.
\end{equation*}

We study the numerical error 
when approximating the drift term $b(\v{w},\alpha,\theta)$ with the skewness parameter $-1 < \theta < 1$ and provide the truncation error with a truncation parameter $K$ in Corollary~\ref{cor:K}. Based on this result, Theorem~\ref{thm:bias:approx} follows, 
which quantifies the bias induced by  $\alpha$-stable noise on gradient updates using the Euler-Maruyama scheme.

Instead of using the centred difference method to implement the approximation for the $\theta=0$ case, we tackle the more general $\theta\neq 0$ case by using shifted Gr\"unwald-Letnikov difference operators to approach the left and right fractional derivative respectively. Let us define the parameter $-1 < \gamma := \alpha -2 <0$. Then, we can now formally define the Riesz-Feller type fractional derivative operator as follows:
\begin{align}\label{D:gamma:theta}
\mathcal{D}^{\gamma,-\theta}f(w) :=\frac{1}{2\cos(\gamma\pi/2)}\left[(1-\theta)\mathcal{I}^{-\gamma}_{+}f(w) +(1+\theta)\mathcal{I}^{-\gamma}_{-}f(w)\right]\,,
\end{align}
with 
\begin{equation}\label{eqn:frac:int}
\mathcal{I}^{-\gamma}_{\pm}f(w):=\frac{1}{\Gamma(-\gamma)}\int_{0}^{\infty}\frac{f(w\pm\xi)}{\xi^{\gamma+1}}d\xi\,.
\end{equation}
Before we proceed, we first introduce difference operators $\mathcal{A}_{h,p}^{\gamma}$ and $\mathcal{B}_{h,q}^{\gamma}$, 
where $p$ and $q$ are two non-negative integers chosen to be the shifted parameters, 
\begin{align}
& \mathcal{A}^{\gamma}_{h,p}f(w) = \frac{1}{h^{\gamma}} \sum_{k=0}^{\infty} \tilde{g}_{\gamma,k}  f(w-(k-p)h), \label{eqn:left:A}
\\
& \mathcal{B}_{h,q}^{\gamma}f(w) = \frac{1}{h^{\gamma}} \sum_{k=0}^{\infty} \tilde{g}_{\gamma,k} f\left(w + (k-q)h\right).
\label{eqn:right:B}
\end{align}
Essentially, we defined a forward shifted difference operator $\mathcal{A}_{h,p}^{\gamma}$ to approximate the left fractional derivatives, and a backward shifted difference operator $\mathcal{B}_{h,p}^{\gamma}$ to approximate the right one. 
The coefficients $\tilde{g}_{\gamma,k}:=\frac{(-1)^{k}\Gamma(-\gamma+k)}{\Gamma(k+1)\Gamma(-\gamma)}$ are from the coefficients of of the power series $(1-z)^{\gamma}$ with $-1 < \gamma < 0$ and $|z|\leq 1$. For any negative fractional number $-1<\gamma<0$ and $|z|\leq 1$, we have
\begin{align} \label{eqn:binomial}
(1-z)^{\gamma} = \sum_{k=0}^{\infty} (-1)^k\binom{-\gamma+k-1}{k}z^k,
\end{align}
where the binomial coefficient $\binom{-\gamma+k-1}{k}$ is well-defined
and the binomial series converges for any complex number $\lvert z \rvert \leq 1$;
see e.g. \citet{Kron2011}. 
Indeed, when $-1<\gamma<0$, we get
\begin{align}
\binom{-\gamma+k-1}{k} = \frac{\Gamma(-\gamma+k)}{\Gamma(k+1)\Gamma(-\gamma)} .
\end{align}

We first present the following first-order approximation result
of the fractional derivative $\mathcal{D}^{\gamma,-\theta}$.

\begin{theorem}\label{thm:approx}
Let $\mathcal{D}^{\gamma,-\theta}$ denote the fractional derivative for $-1<\gamma<0$ and $-1 < \theta < 1$
as in \eqref{D:gamma:theta}.
Suppose the function $f \in L^1(\mathbb{R}) \cap  \mathcal{C}^{4}(\mathbb{R})$. Define
\begin{align}
\Delta_{h,p,q}^{\gamma,-\theta}f(w) = \frac{1}{2\cos(\gamma\pi/2)}\left[(1+\theta)\mathcal{A}_{h,p}^{\gamma}f(w) + (1-\theta)\mathcal{B}_{h,q}^{\gamma}f(w)\right].
\end{align}
Then $\Delta_{h,p,q}^{\gamma,-\theta}f(w)$ is an approximation of $\mathcal{D}^{\gamma,-\theta}f(w)$ with the first-order accuracy:
\begin{align}
& \left\lvert \mathcal{D}^{\gamma,-\theta}f(w) - \Delta_{h,p,q}^{\gamma,-\theta}f(w) \right\rvert
\nonumber \\
& \qquad \leq \left[|p-q| + |\theta|(p+q-\gamma) + \left|\tan\left(\frac{\gamma\pi}{2}\right)\right|\left(p+q-\gamma + |\theta||p-q| \right)\right] \, \frac{C}{4\pi(|\gamma|+2)}h + \mathcal{O}\left(h^2\right) ,
\end{align}
as $h\rightarrow 0$, uniformly for all $w \in \mathbb{R}$, where $C>0$ is a constant that
may depend on $f$ and $\mathcal{O}(\cdot)$ hides the dependence on $p$, $q$ and $\gamma$.
\end{theorem}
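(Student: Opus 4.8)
The plan is to prove this bound by decomposing the Riesz--Feller operator into its two one-sided fractional integrals and analysing each shifted Gr\"unwald--Letnikov approximation separately in Fourier space. First I would recall the Fourier symbols: $\mathcal{I}_{\pm}^{-\gamma}$ has symbol $(\mp i\omega)^{-\gamma} = |\omega|^{-\gamma} e^{\mp i \,\mathrm{sgn}(\omega)\gamma\pi/2}$, so that $\mathcal{D}^{\gamma,-\theta}$ has symbol $\tfrac{1}{2\cos(\gamma\pi/2)}\big[(1-\theta)(-i\omega)^{-\gamma} + (1+\theta)(i\omega)^{-\gamma}\big]$. On the discrete side, the shifted operator $\mathcal{A}^{\gamma}_{h,p}$ applied to $f$ has symbol $h^{-\gamma}(1-e^{-i\omega h})^{\gamma} e^{i p \omega h}$ (using the generating function $(1-z)^{\gamma}=\sum_k \tilde g_{\gamma,k} z^k$ from \eqref{eqn:binomial}), and similarly $\mathcal{B}^{\gamma}_{h,q}$ has symbol $h^{-\gamma}(1-e^{i\omega h})^{\gamma} e^{-i q \omega h}$. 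The core of the argument is then a Taylor expansion in $h$ of the multiplier
\[
h^{-\gamma}(1-e^{\mp i\omega h})^{\gamma} e^{\pm i p \omega h} = (\mp i\omega)^{-\gamma}\Big(1 + c_1(p,\gamma)(i\omega h) + \mathcal{O}((\omega h)^2)\Big),
\]
where the first-order coefficient $c_1$ depends linearly on the shift parameter $p$ (resp.\ $q$) and on $\gamma$; explicitly one finds the $h^1$-term carries a factor proportional to $(p - \gamma/2)$ for the forward operator and $(q-\gamma/2)$ for the backward one (this is the classical result that $p = \gamma/2$ would give second-order accuracy, which is why the non-integer centred choice is unavailable here).

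The next step is to combine the two expansions with weights $(1+\theta)$ and $(1-\theta)$ and compare against the target symbol. The $h^0$ terms reconstruct $\mathcal{D}^{\gamma,-\theta}$ exactly. The $h^1$ terms do \emph{not} cancel in general: writing out the real and imaginary parts of the combination $(1+\theta)(i\omega)^{-\gamma}(p-\gamma/2) + (1-\theta)(-i\omega)^{-\gamma}(q-\gamma/2)$ and using $(\pm i\omega)^{-\gamma} = |\omega|^{-\gamma}(\cos(\gamma\pi/2) \mp i\,\mathrm{sgn}(\omega)\sin(\gamma\pi/2))$, one isolates a term proportional to $(p-q)$, a term proportional to $\theta(p+q-\gamma)$, and — coming from the $\sin(\gamma\pi/2)$ pieces divided by the normalising $\cos(\gamma\pi/2)$ — terms carrying $\tan(\gamma\pi/2)$ times $(p+q-\gamma)$ and $|\theta||p-q|$. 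Taking absolute values and bounding $|\omega|^{-\gamma}\hat f(\omega)$ in $L^1$ (justified because $f\in L^1\cap\mathcal{C}^4$ gives enough decay of $\hat f$, and $-\gamma\in(0,1)$ keeps $|\omega|^{-\gamma}$ integrable near the origin against that decay) converts the pointwise symbol bound into the claimed uniform-in-$w$ estimate with the constant $C/(4\pi(|\gamma|+2))$; the explicit $4\pi(|\gamma|+2)$ comes from collecting the numerical factors in the first-order Taylor coefficient and the Fourier inversion constant. The $\mathcal{O}(h^2)$ remainder is controlled by the $\mathcal{C}^4$ regularity, which ensures $\omega^2 |\omega|^{-\gamma}\hat f(\omega)$ and the next order term are still integrable.

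The main obstacle I expect is \emph{making the remainder estimate uniform and rigorous} rather than merely formal: the Taylor expansion of $(1-e^{-i\omega h})^\gamma$ is only valid for $\omega h$ in a neighbourhood of $0$, so one must split the Fourier integral into $|\omega| \le \delta/h$ and $|\omega| > \delta/h$, bound the high-frequency tail using $|(1-e^{-i\omega h})^\gamma| \le 2^\gamma$ and the rapid decay of $\hat f$ (here the $\mathcal{C}^4$ hypothesis is essential to beat the $|\omega|^{-\gamma}$ growth and still get something $o(h)$, in fact $\mathcal{O}(h^2)$), and carefully track that the constant in front does not blow up as $\gamma \to 0^-$ or $\gamma \to -1^+$ — which is exactly where the $(|\gamma|+2)$ in the denominator and the $\tan(\gamma\pi/2)$ factor in the numerator come from. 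A secondary technical point is verifying the interchange of summation (over the infinite Gr\"unwald--Letnikov series) and Fourier inversion, which follows from absolute convergence of $\sum_k |\tilde g_{\gamma,k}| < \infty$ for $-1<\gamma<0$ together with $f\in L^1$. Once these analytic justifications are in place, the bound is a matter of bookkeeping the first-order coefficients, and the structure of the right-hand side — the four groups $|p-q|$, $|\theta|(p+q-\gamma)$, $|\tan(\gamma\pi/2)|(p+q-\gamma)$, $|\tan(\gamma\pi/2)||\theta||p-q|$ — falls out directly from the real/imaginary decomposition described above.
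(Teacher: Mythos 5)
Your proposal follows essentially the same route as the paper's proof: Fourier multipliers of the shifted Gr\"unwald--Letnikov operators via the generating function $(1-z)^{\gamma}$, first-order Taylor expansion of $W_p(i\zeta h)$ and $W_{-q}(-i\zeta h)$ with coefficients $(p-\gamma/2)$ and $(q-\gamma/2)$, recombination with weights $(1\pm\theta)$ and a real/imaginary split producing exactly the four groups in the bound, then inverse Fourier transform using $|\hat f(\zeta)|\leq C(1+|\zeta|)^{-4}$ from the $\mathcal{C}^4$ hypothesis to obtain the $C/(4\pi(|\gamma|+2))$ constant. The only blemish is a notational slip in the exponent of the one-sided symbols (you write $(\mp i\omega)^{-\gamma}$ where the fractional integrals of order $-\gamma$ have multiplier $(\mp i\zeta)^{\gamma}$, as in the paper), which does not affect the structure of the argument.
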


Next, we provide an error bound for numerically computing the drift term $b(\v{w},\alpha,\theta)$ by truncating the approximation series in Theorem~\ref{thm:approx} as follows.
Let us first define
the operators $\mathcal{A}^{\gamma}_{h,p,K}$
and $\mathcal{B}^{\gamma}_{h,q,K}$:
\begin{align}
&\mathcal{A}^{\gamma}_{h,p,K}f(w):= \frac{1}{h^{\gamma}} \sum_{k=0}^{K} \tilde{g}_{\gamma,k} f\left(w - (k-p)h\right), \label{eqn:left:At}
\\
&\mathcal{B}_{h,q,K}^{\gamma}f(w) 
:= \frac{1}{h^{\gamma}}\sum_{k=0}^{K} \tilde{g}_{\gamma,k}f\left(w+(k-q)h\right), \label{eqn:right:Bt}
\end{align}
with $\tilde{g}_{\gamma,k} := \frac{(-1)^k\Gamma(-\gamma+k)}{\Gamma(k+1)\Gamma(-\gamma)}$, and $K\in\mathbb{N}\cup\{0\}$. 

Before we state the next result, let us first introduce the following assumption.

\begin{assumption} \label{H0}
Suppose the function $\partial_{w}\varphi \in L^1(\mathbb{R})\cap \mathcal{C}^4(\mathbb{R})$. In addition, there exist constants $C_p, C_q>0$ satisfying
\begin{equation} 
\lvert \partial_w \varphi(w-|k-p|h) \rvert \leq C_pe^{-|k-p|h}, \quad \lvert \partial_w \varphi(w+|k-q|h) \rvert \leq C_qe^{-|k-q|h},
\end{equation}
and $\min\left\{|k-p|,\,|k-q| \right\} > K$ for the constant $K \in \mathbb{N}\cup\{0\}$.
\end{assumption}

We have the following result.

\begin{corollary}\label{cor:K}
Suppose Assumption~\ref{H0} holds for $\partial_w \varphi$, and recall the truncated series $\mathcal{A}_{h,p,K}^{\gamma}$ and $\mathcal{B}_{h,q,K}^{\gamma}$  with $K \in \mathbb{N}\cup\{0\}$ defined in~\eqref{eqn:left:At} and~\eqref{eqn:right:Bt}. 
Let us also define the operator:
\begin{equation}
\Delta_{h,p,q,K}^{\gamma,-\theta} = \frac{1}{2\cos(\gamma\pi/2)}\left[(1-\theta)\mathcal{B}_{h,q,K}^{\gamma} +(1+\theta)\mathcal{A}_{h,p,K}^{\gamma}\right].
\end{equation}
Then the truncation error is bounded in first-order accuracy as follows,
\begin{align}
& \left| \mathcal{D}^{\gamma,-\theta}\partial_{w}\varphi(w) - \Delta_{h,p,q,K}^{\gamma,-\theta}\partial_{w}\varphi(w) \right| 
\nonumber \\
& \qquad\leq \frac{C}{4\pi(|\gamma|+2)}\left[|p-q| + |\theta|(p+q-\gamma) + \left|\tan\left(\frac{\gamma\pi}{2}\right)\right|\left(p+q-\gamma + |\theta||p-q| \right)\right] h 
\nonumber \\
& \qquad\qquad\qquad + \left((1+\theta)C_p + (1-\theta)C_q  \right)\frac{1}{hK} +  \mathcal{O}(h^2),
\end{align}
where $C, C_p, C_q>0$ are constants that may depend on $\partial_w \varphi$ and $\mathcal{O}(\cdot)$ hides the dependence on $p$ and $q$.
\end{corollary}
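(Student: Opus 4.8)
\textbf{Proof proposal for Corollary~\ref{cor:K}.}

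The plan is to take the first-order approximation result of Theorem~\ref{thm:approx} and carefully account for the additional error incurred by truncating the infinite series defining $\mathcal{A}_{h,p}^{\gamma}$ and $\mathcal{B}_{h,q}^{\gamma}$ at index $K$. First I would write the total error via the triangle inequality as the sum of two contributions: the discretisation error $|\mathcal{D}^{\gamma,-\theta}\partial_w\varphi(w) - \Delta_{h,p,q}^{\gamma,-\theta}\partial_w\varphi(w)|$, which is exactly the quantity bounded in Theorem~\ref{thm:approx} (specialised to $f=\partial_w\varphi$, legitimate since Assumption~\ref{H0} gives $\partial_w\varphi \in L^1(\mathbb{R})\cap\mathcal{C}^4(\mathbb{R})$), and the truncation error $|\Delta_{h,p,q}^{\gamma,-\theta}\partial_w\varphi(w) - \Delta_{h,p,q,K}^{\gamma,-\theta}\partial_w\varphi(w)|$. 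The first contribution is immediately the $h$-linear term plus $\mathcal{O}(h^2)$ appearing in the statement, so the real work is controlling the second.

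For the truncation error, I would split it by linearity into the $\mathcal{A}$-part and the $\mathcal{B}$-part, each weighted by $\tfrac{1+\theta}{2\cos(\gamma\pi/2)}$ and $\tfrac{1-\theta}{2\cos(\gamma\pi/2)}$ respectively. Each part is a tail sum of the form $\frac{1}{h^{\gamma}}\sum_{k=K+1}^{\infty}\tilde g_{\gamma,k}\,\partial_w\varphi(w-(k-p)h)$. The two ingredients needed are (i) the asymptotics of the Gr\"unwald--Letnikov coefficients $\tilde g_{\gamma,k} = (-1)^k\Gamma(-\gamma+k)/(\Gamma(k+1)\Gamma(-\gamma))$, which behave like $k^{-\gamma-1}/\Gamma(-\gamma)$ up to sign (so they are summable since $-\gamma-1 < -1$ fails — actually $|\tilde g_{\gamma,k}| \asymp k^{-1-|\gamma|-1}$... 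I should be careful: $-\gamma \in (0,1)$ so $-\gamma - 1 \in (-1,0)$, hence $\sum_k |\tilde g_{\gamma,k}|$ diverges, which is exactly why the decay of $\partial_w\varphi$ is essential), and (ii) the exponential decay bounds $|\partial_w\varphi(w-|k-p|h)| \le C_p e^{-|k-p|h}$ and $|\partial_w\varphi(w+|k-q|h)| \le C_q e^{-|k-q|h}$ supplied by Assumption~\ref{H0} for $\min\{|k-p|,|k-q|\} > K$. Combining polynomially-bounded coefficients $|\tilde g_{\gamma,k}| \lesssim (1+k)^{-1-|\gamma|}$ with the geometric factor $e^{-(k-p)h}$, the tail sum $\frac{1}{h^{\gamma}}\sum_{k>K}(1+k)^{-1-|\gamma|} e^{-(k-p)h}$ is dominated — after bounding $(1+k)^{-1-|\gamma|}$ by its value at $k=K$ and summing the geometric series $\sum_{k>K}e^{-kh} \le e^{-Kh}/(1-e^{-h}) \le C/(hK)$ for $hK$ not too small — by a term of order $C_p/(hK)$ (absorbing $h^{-\gamma}=h^{|\gamma|}$, which only helps). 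This yields the $((1+\theta)C_p + (1-\theta)C_q)\frac{1}{hK}$ term after reinstating the prefactors.

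The main obstacle I anticipate is the bookkeeping of constants and the precise handling of the geometric-versus-polynomial trade-off so that the final bound is genuinely $\mathcal{O}(1/(hK))$ uniformly in $w$ and not, say, $\mathcal{O}(e^{-cKh}/h)$ (which would be sharper but in a form inconvenient for feeding into Theorem~\ref{thm:bias:approx}); one must verify that the stated $1/(hK)$ form is both correct and the one actually used downstream. A secondary subtlety is checking that truncating does not disturb the $\mathcal{O}(h^2)$ remainder from Theorem~\ref{thm:approx} — i.e., that the truncation error and the second-order discretisation remainder do not interact — which follows because we bounded them separately by the triangle inequality. Everything else (the explicit $h$-linear coefficient, the $\tan(\gamma\pi/2)$ and $|\theta|$ dependence) is simply inherited verbatim from Theorem~\ref{thm:approx}, so no new estimate is required there.
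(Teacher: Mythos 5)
Your proposal is correct and follows essentially the same route as the paper's proof: a triangle-inequality split into the discretisation error of Theorem~\ref{thm:approx} (applied to $f=\partial_w\varphi$) and the truncation tail, with the tail controlled by the Stirling-type asymptotics of the Gr\"unwald--Letnikov coefficients, $\tilde g_{\gamma,k}\asymp k^{-\gamma-1}$, combined with the exponential decay of $\partial_w\varphi$ from Assumption~\ref{H0}, yielding the $\mathcal{O}(1/(hK))$ term. The only slip is the later bound $|\tilde g_{\gamma,k}|\lesssim (1+k)^{-1-|\gamma|}$, whose exponent should be $|\gamma|-1$ (as you yourself noted just before); the $1/(hK)$ estimate still goes through with the correct exponent, so the conclusion is unaffected.
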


In particular, by taking $p=q=0$, Corollary~\ref{cor:K} implies that
\begin{align}
& \left| \mathcal{D}^{\gamma,-\theta}\partial_{w}\varphi(w) - \Delta_{h,p=0,q=0,K}^{\gamma,-\theta}\partial_{w}\varphi(w) \right| 
\nonumber \\
& \qquad\leq \frac{C}{4\pi}\left(|\theta| + \left|\tan\left(\frac{\gamma\pi}{2}\right)\right|\right) h + \left((1+\theta)C_{p=0} + (1-\theta)C_{q=0}  \right)\frac{1}{hK} +  \mathcal{O}(h^2),
\end{align}
where $\Delta_{h,K}^{\gamma,-\theta}=\Delta_{h,p=0,q=0,K}^{\gamma,-\theta}$.

Corollary~\ref{cor:K} implies that one can approximate $\mathcal{D}^{\gamma,-\theta}$ 
by the truncated $\Delta_{h,K}^{\gamma,-\theta}$
instead of $\Delta_{h}^{\gamma,-\theta}$. 
Based on this result, we are able to quantify in Theorem~\ref{thm:bias:approx} the bias induced when implementing Euler-Maruyama scheme to approximate the expectation of a test function $g$ with respect to the target distribution $\pi$, where $\nu(g) = \int g(\v{w}) \pi(d\v{w})$.


\section{Metastability Analysis}
\label{sec:fet_metastability}

In this section, we will focus on the metastability properties of the process
\begin{align}\label{w:1:dim}
dw_{t}=-\nabla_{w}f(w_{t})dt+\varepsilon dL_{t}^{\alpha,\theta}.
\end{align}
We will be interested in the first exit time, which is, roughly speaking, the expected time required for the process to exit a neighborhood of a local minimum. We will summarise the related theoretical results, which show that the first exit time behaviour of systems driven by asymmetric stable processes are similar to the ones of driven by symmetric stable processes. This informally implies that the process will quickly escape from narrow minima regions and will spend more time (in fact will get stuck) in wide minima regions. In this section, we make this argument rigorous. 

For simplicity of the presentation, we consider the one-dimensional case where $L_{t}^{\alpha,\theta}$ is an asymmetric $\alpha$-stable
L\'{e}vy process with L\'{e}vy measure
\begin{align}
\nu(dy)=\left(\frac{1-\theta}{2}c_{\alpha}1_{y<0}+\frac{1+\theta}{2}c_{\alpha}1_{y>0}\right)\frac{dy}{|y|^{1+\alpha}},
\end{align}
where $\theta\in(-1,1)$ and $\alpha\in(0,2)$
and $c_{\alpha}:=\frac{\alpha}{\Gamma(1-\alpha)\cos(\pi\alpha/2)}$.
Then, the left and right tails of the L\'{e}vy measure are given by
\begin{align*}
&H_{-}(-u):=\int_{(-\infty,-u)}\nu(dy)=\frac{1-\theta}{2}C_{\alpha}u^{-\alpha},
\\
&H_{+}(u):=\int_{(u,+\infty)}\nu(dy)=\frac{1+\theta}{2}C_{\alpha}u^{-\alpha},
\end{align*}
where $C_{\alpha}:=\frac{1-\alpha}{\Gamma(2-\alpha)\cos(\pi\alpha/2)}$, and 
\begin{equation*}
H(u):=H_{-}(-u)+H_{+}(u)=C_{\alpha}u^{-\alpha},\qquad\text{for any $u>0$}.
\end{equation*}
Let us assume that the function $w\mapsto f(w)$ satisfies
the following conditions:

\begin{assumption}
(i) $f\in\mathcal{C}^{1}(\mathbb{R})\cap\mathcal{C}^{3}([-K,K])$
for some $K>0$;

(ii) $f$ has exactly $n$ local minima $m_{i}$, $1\leq i\leq n$
and $n-1$ local maxima $s_{i}$, $1\leq i\leq n-1$, enumerated in increasing order 
with $s_{0}=-\infty$ and $s_{n}=+\infty$:
\begin{align}
-\infty<m_{1}<s_{1}<m_{2}<\cdots<s_{n-1}<m_{n}<+\infty.
\end{align}
All extrema of $f$ are non-degenerate, i.e. $\partial_{w}^{2}f(m_{i})>0$,
$1\leq i\leq n$, and $\partial_{w}^{2}f(s_{i})<0$, $1\leq i\leq n-1$.

(iii) $|\partial_{w}f(w)|>c_{1}|w|^{1+c_{2}}$
as $w\rightarrow\pm\infty$ for some $c_{1},c_{2}>0$.
\end{assumption}

First, we consider the first exit time from a single well.
For $\varepsilon>0$ and $\gamma>0$, define
\begin{equation}
\Omega_{\varepsilon}^{i}:=\left[s_{i-1}+2\varepsilon^{\gamma},s_{i}-2\varepsilon^{\gamma}\right],
\end{equation}
with the convention that
$\Omega_{\varepsilon}^{1}:=(-\infty,s_{1}-2\varepsilon^{\gamma}]$
and $\Omega_{\varepsilon}^{n}:=[s_{n-1}+2\varepsilon^{\gamma},+\infty)$.
The first exit time from the $i$-th well is defined as
\begin{align}
\sigma^{i}(\varepsilon;\theta):=\inf\{t\geq 0:w_{t}\notin[s_{i-1}+\varepsilon^{\gamma},s_{i}-\varepsilon^{\gamma}]\},
\end{align}
for $i=1,\ldots,n$. 
Let us also define
\begin{align}
\lambda^{i}(\varepsilon;\theta)
:=\frac{1-\theta}{2}C_{\alpha}\left|\frac{s_{i-1}-m_{i}}{\varepsilon}\right|^{-\alpha}
+\frac{1+\theta}{2}C_{\alpha}\left|\frac{s_{i}-m_{i}}{\varepsilon}\right|^{-\alpha},
\end{align}
for $i=1,\ldots,n$.
We have the following first exit time result from \citet{IP2008}.

\begin{proposition}[Proposition~3.1. in \cite{IP2008}]
There exists $\gamma_{0}>0$ such that for any $0<\gamma\leq\gamma_{0}$, $i=1,2,\ldots,n$,
\begin{align}
\lambda^{i}(\varepsilon;\theta)\sigma^{i}(\varepsilon;\theta)\rightarrow\exp(1),\qquad\text{in distribution as $\varepsilon\rightarrow 0$},
\end{align}
where $\exp(1)$ denotes the exponential distribution with mean $1$, and
\begin{align}
\lim_{\varepsilon\rightarrow 0}\mathbb{E}_{w}\left[\lambda^{i}(\varepsilon;\theta)\sigma^{i}(\varepsilon;\theta)\right]=1,
\end{align}
where the limit holds uniformly over $w\in\Omega_{\varepsilon}^{i}$. 
\end{proposition}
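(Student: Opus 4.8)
The plan is to invoke the ``escape by a single large jump'' paradigm of Imkeller--Pavlyukevich, since the statement is precisely Proposition~3.1 of \citet{IP2008}; the only adaptation is that the driving L\'evy measure is asymmetric, so that the large positive and negative jumps of $L^{\alpha,\theta}$ arrive at the distinct rates $\frac{1+\theta}{2}C_\alpha u^{-\alpha}$ and $\frac{1-\theta}{2}C_\alpha u^{-\alpha}$. The heuristic behind the limit is that a jump of $L^{\alpha,\theta}$ of size $y$ moves $w_t$ by $\varepsilon y$, so to leave the $i$-th well one needs a jump of $L$ of magnitude at least $|s_i-m_i|/\varepsilon$ to the right, or $|s_{i-1}-m_i|/\varepsilon$ to the left; substituting these thresholds into the tails $H_\pm$ of the L\'evy measure yields exactly $\lambda^i(\varepsilon;\theta)$, and since the escape mechanism is memoryless and fires at this (vanishing, $O(\varepsilon^\alpha)$) rate, $\lambda^i\sigma^i$ should converge to $\exp(1)$.

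To implement this I would decompose $L^{\alpha,\theta}=\xi^\varepsilon+\eta^\varepsilon$, where $\eta^\varepsilon$ collects the jumps of magnitude exceeding $\varepsilon^{-1+\beta}$ (a compound Poisson process) and $\xi^\varepsilon$ is the small-jump remainder, whose rescaled jumps are $O(\varepsilon^\beta)$ and which has all exponential moments after the cutoff. The key estimate concerns the small dynamics $dY_t=-f'(Y_t)\,dt+\varepsilon\,d\xi^\varepsilon_t$: using that $-f'$ contracts strictly toward $m_i$ on a neighbourhood of $m_i$ (since $f''(m_i)>0$) together with maximal inequalities for the (compensated) martingale $\varepsilon\xi^\varepsilon$, one shows that $Y$ started anywhere in $\Omega^i_\varepsilon$ relaxes into a small neighbourhood of $m_i$ within time $O(\log(1/\varepsilon))$ and then remains inside $[s_{i-1}+\varepsilon^\gamma,\,s_i-\varepsilon^\gamma]$ for a time that is superpolynomially long in $1/\varepsilon$ --- far longer than the escape scale $\varepsilon^{-\alpha}$ --- with probability $1-o(1)$, uniformly in the starting point. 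Balancing the cutoff exponent $\beta$, the buffer exponent $\gamma$, the relaxation time, and the escape scale $\varepsilon^{-\alpha}$ is what forces the restriction $\gamma\le\gamma_0$, and the growth condition $|f'(w)|>c_1|w|^{1+c_2}$ is what keeps this relaxation estimate uniform over the unbounded cells $\Omega^1_\varepsilon$ and $\Omega^n_\varepsilon$.

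Re-inserting the large jumps, between consecutive large-jump times $w$ agrees in law with a time-shift of $Y$, so with probability $1-o(1)$ it is within $O(\varepsilon^\beta)$ of $m_i$ when a large jump of rescaled size $W$ arrives; the process then lands near $m_i+W$, which lies outside $[s_{i-1}+\varepsilon^\gamma,\,s_i-\varepsilon^\gamma]$ --- and is driven further out by the drift, making the exit permanent --- precisely when $W>(s_i-m_i)-\varepsilon^\gamma+O(\varepsilon^\beta)$ or $W<(s_{i-1}-m_i)+\varepsilon^\gamma-O(\varepsilon^\beta)$. Thinning the large-jump Poisson process to these ``successful'' jumps gives a Poisson process of rate $\lambda^i(\varepsilon;\theta)(1+o(1))$ (the $O(\varepsilon^\gamma)$ and $O(\varepsilon^\beta)$ corrections to the barrier heights, and jumps that land inside $[s_{i-1},s_i]$ and are reabsorbed, are of lower order), so $\sigma^i(\varepsilon;\theta)$ equals the first arrival time of the thinned process up to a negligible relaxation delay, whence $\lambda^i(\varepsilon;\theta)\sigma^i(\varepsilon;\theta)\to\exp(1)$ in distribution. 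For convergence of the mean I would upgrade this with a uniform-integrability bound: $\mathbb{P}_w(\sigma^i>T/\lambda^i)=e^{-T}(1+o(1))$ uniformly over $w\in\Omega^i_\varepsilon$, since surviving past $T/\lambda^i$ requires either a geometrically long run of unsuccessful large jumps (exponential tail, matched to the thinned rate) or a failure of the small-jump relaxation (superpolynomially unlikely), neither of which is heavy-tailed; dominated convergence then yields $\lim_{\varepsilon\to0}\mathbb{E}_w[\lambda^i\sigma^i]=1$ uniformly over $\Omega^i_\varepsilon$.

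The main obstacle is the small-jump stability estimate of the second paragraph: one must show quantitatively that the deterministic contraction toward $m_i$ dominates the accumulated small-jump fluctuations over the exponentially long horizon $\varepsilon^{-\alpha}$, uniformly in the starting point, with all the $\varepsilon$-scales cooperating. Everything downstream --- the Poisson thinning and the uniform-integrability bound --- is comparatively routine and carries over essentially verbatim from the symmetric case treated in \citet{IP2008}, once the two-sided tails $H_\pm$ of the asymmetric L\'evy measure replace the symmetric ones.
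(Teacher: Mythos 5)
The paper does not prove this statement at all: it is imported verbatim (merely translated into the paper's notation) as Proposition~3.1 of \citet{IP2008}, and in particular the asymmetric case is already covered by that source --- the skewness-dependent rate $\lambda^{i}(\varepsilon;\theta)$ with the $(1\pm\theta)/2$ weights appears there as is, so your opening claim that an ``adaptation'' for asymmetry is the only thing left to do slightly misstates the situation: no adaptation is needed, and the paper correspondingly supplies no argument. Your sketch is instead a reconstruction of the original Imkeller--Pavlyukevich proof, and as such it is faithful to their single-big-jump strategy: the decomposition of $L^{\alpha,\theta}$ at an intermediate cutoff into a small-jump part with exponential moments and a compound Poisson part of large jumps, the relaxation of the small-jump dynamics to $m_i$ in time $O(\log(1/\varepsilon))$ with confinement over horizons much longer than $\varepsilon^{-\alpha}$, the thinning of the large-jump Poisson process to the jumps exceeding the barrier distances $|s_i-m_i|/\varepsilon$ and $|s_{i-1}-m_i|/\varepsilon$ (which is exactly where the two asymmetric tails $H_{\pm}$ produce $\lambda^{i}(\varepsilon;\theta)$), and a uniform-in-$w$ tail bound giving convergence of the mean. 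What the comparison buys: the paper's route (citation) is the appropriate one in context, since the result is standard and its proof is long; your route would make the appendix self-contained but only if the step you yourself flag as ``the main obstacle'' --- the quantitative small-jump stability estimate, uniform over $\Omega_{\varepsilon}^{i}$ including the unbounded cells, over timescales $\gg \varepsilon^{-\alpha}$ --- were actually carried out, and that estimate is precisely the technical heart of \citet{IP2008}; as written your proposal is a correct blueprint of their argument rather than a proof.
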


The above result implies that as $\varepsilon\rightarrow 0$,
\begin{align}
\mathbb{E}_{w}\left[\sigma^{i}(\varepsilon;\theta)\right]
\sim
\Bigg(\frac{1-\theta}{2}C_{\alpha}\left|s_{i-1}-m_{i}\right|^{-\alpha}
+\frac{1+\theta}{2}C_{\alpha}\left|s_{i}-m_{i}\right|^{-\alpha}\Bigg)^{-1}\varepsilon^{-\alpha}.
\end{align}
If $|s_{i}-m_{i}|>|s_{i-1}-m_{i}|$, 
i.e. the $i$-th well is asymmetric
and the local minimum $m_{i}$  is closer
to the saddle point on the left $s_{i-1}$ than
the saddle point on the right $s_{i}$, 
then $\lambda^{i}(\varepsilon;\theta)<\lambda^{i}(\varepsilon;0)$
and $\mathbb{E}_{w}[\sigma^{i}(\varepsilon;\theta)]>\mathbb{E}_{w}[\sigma^{i}(\varepsilon;0)]$
for positive $\theta$
and $\lambda^{i}(\varepsilon;\theta)>\lambda^{i}(\varepsilon;0)$
and $\mathbb{E}_{w}[\sigma^{i}(\varepsilon;\theta)]<\mathbb{E}_{w}[\sigma^{i}(\varepsilon;0)]$
for negative $\theta$.
Similarly, if $|s_{i}-m_{i}|<|s_{i-1}-m_{i}|$, 
i.e. the $i$-th well is asymmetric
and the local minimum $m_{i}$  is closer
to the saddle point on the right $s_{i}$ than
the saddle point on the left $s_{i-1}$, 
then $\lambda^{i}(\varepsilon;\theta)>\lambda^{i}(\varepsilon;0)$
and $\mathbb{E}_{w}[\sigma^{i}(\varepsilon;\theta)]<\mathbb{E}_{w}[\sigma^{i}(\varepsilon;0)]$
for positive $\theta$
and $\lambda^{i}(\varepsilon;\theta)<\lambda^{i}(\varepsilon;0)$
and $\mathbb{E}_{w}[\sigma^{i}(\varepsilon;\theta)]>\mathbb{E}_{w}[\sigma^{i}(\varepsilon;0)]$
for negative $\theta$.
The intuition is that when the well is asymmetric,
the dynamics can exit the well faster
when there is a skewness $\theta$ towards
the the saddle point closer to the minimum of the well.

Next, we consider transitions between the wells.
For any $0<\Delta<\Delta_{0}:=\min_{1\leq i\leq n}\{|m_{i}-s_{i-1}|,|m_{i}-s_{i}|\}$
and $w\in\mathbb{R}$ denote $B_{\Delta}(w):=\{v:|w-v|\leq\Delta\}$.
Define
\begin{equation}
\tau^{i}(\varepsilon;\theta):=\inf\left\{t\geq 0:w_{t}\in\cup_{k\neq i}B_{\Delta}(m_{k})\right\}.
\end{equation}
Then, we have the following result about transitions between
the wells from \citet{IP2008}.

\begin{proposition}[Proposition~4.3. in \cite{IP2008}]
For any $0<\Delta<\Delta_{0}$ and $j\neq i$
\begin{align}\label{transit:1}
\lim_{\varepsilon\rightarrow 0}\mathbb{P}_{w}\left(w_{\tau^{i}(\varepsilon;\theta)}\in B_{\Delta}(m_{j})\right)
=\frac{q_{ij}}{q_{i}},
\end{align}
uniformly for $w\in B_{\Delta}(m_{i})$, $i=1,\ldots,n$, where
\begin{align}
&q_{ij}=\left(\frac{1-\theta}{2}1_{j<i}+\frac{1+\theta}{2}1_{j>i}\right)
\cdot
\left\|s_{j-1}-m_{i}|^{-\alpha}-|s_{j}-m_{i}|^{-\alpha}\right|,\quad i\neq j,\label{eqn:q:i:j}
\\
&-q_{ii}=q_{i}=\sum_{j\neq i}q_{ij}
=\frac{1-\theta}{2}|s_{i-1}-m_{i}|^{-\alpha}+\frac{1+\theta}{2}|s_{i}-m_{i}|^{-\alpha}.\label{transit:2}
\end{align}
\end{proposition}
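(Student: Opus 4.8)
The plan is to use the standard machinery for metastability of L\'evy-driven gradient systems: decompose the driving noise into small and large jumps, show that between large jumps the process follows the deterministic flow $\dot w=-\nabla_w f(w)$ up to negligible corrections, and reduce the inter-well transition to the landing law of the single large jump that triggers the escape.

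First I would write $L^{\alpha,\theta}_t=\xi^\varepsilon_t+\eta^\varepsilon_t$, where $\eta^\varepsilon$ is the compound Poisson process of jumps of $L^{\alpha,\theta}$ exceeding a threshold $\rho_\varepsilon$ in modulus and $\xi^\varepsilon$ is the (compensated) sub-threshold remainder. The threshold is chosen to separate the jumps large enough to carry $w$ across a well from the rest, and arranged so that the cumulative displacement contributed by $\varepsilon\xi^\varepsilon$ over the relevant horizon $T_\varepsilon\asymp\varepsilon^{-\alpha}$ is $o(1)$ and is absorbed by the relaxing flow without itself producing a transition; the bound on $\varepsilon\sup_{t\le T_\varepsilon}|\xi^\varepsilon_t|$ comes from Doob's maximal inequality together with the finiteness of $\int y^2\,\nu(dy)$ near the origin and the power-law scaling of $\nu$. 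Using the dissipativity assumption on $f$ and the non-degeneracy of its extrema, I would then show that between two consecutive large jumps the perturbed flow stays in, and contracts exponentially fast toward, a shrinking neighbourhood of the minimum of the well it currently occupies. Because this contraction estimate is uniform over initial points in $B_\Delta(m_i)$, the transition law becomes insensitive to the exact starting point, which is the source of the uniformity in $w\in B_\Delta(m_i)$.

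Next I would read off the large-jump statistics from the L\'evy tails. Started (essentially) at $m_i$, the waiting time for a jump of $\varepsilon\eta^\varepsilon$ large enough to overshoot one of the neighbouring saddles $s_{i-1}$, $s_i$ is asymptotically exponential with rate $\lambda^i(\varepsilon;\theta)$, obtained by evaluating $H_-$ and $H_+$ at $|s_{i-1}-m_i|/\varepsilon$ and $|s_i-m_i|/\varepsilon$. Conditionally on such an escaping jump, its size is distributed as $\nu$ restricted to the escape set and renormalised; a positive jump of $L^{\alpha,\theta}$ of size $u$ places the process in the immediate basin of attraction of $m_j$ with $j>i$ precisely when $\varepsilon u\in(s_{j-1}-m_i,\,s_j-m_i)$, whose $\nu$-mass equals $\tfrac{1+\theta}{2}C_\alpha\bigl[\,|s_{j-1}-m_i|^{-\alpha}-|s_j-m_i|^{-\alpha}\,\bigr]\varepsilon^\alpha$; symmetrically, a negative jump reaching a basin with $j<i$ carries the analogous mass with the factor $\tfrac{1-\theta}{2}$. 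Up to the common prefactor $C_\alpha\varepsilon^\alpha$ these masses are exactly $q_{ij}$, and a telescoping sum over $j\ne i$ gives $q_i$ (equivalently $C_\alpha\varepsilon^\alpha q_i=\lambda^i(\varepsilon;\theta)$). By the strong Markov property and the memorylessness of the Poisson stream of large jumps, the triggering jump lands in the basin of $m_j$ with probability $q_{ij}/q_i$, after which the deterministic flow carries the process into $B_\Delta(m_j)$ before (with probability tending to $1$) the next large jump fires, yielding \eqref{transit:1}.

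The step I expect to be the main obstacle is making the \emph{jump, then relax} argument rigorous: one must show, with probability tending to $1$, that no escaping jump lands within $o(1)$ of a saddle (so the flow direction afterwards is unambiguous), that no second large jump fires before the process has settled into $B_\Delta(m_j)$, and that the sub-threshold perturbation $\varepsilon\xi^\varepsilon$ never tips a near-saddle trajectory into the wrong basin. Each is controlled by tuning $\rho_\varepsilon$ and the length of the relaxation window against the exponential contraction of the flow near the minima and the polynomial decay of $\nu$. All of this is worked out in detail in \cite{IP2008}, and in the main text we invoke their Proposition~4.3 directly.
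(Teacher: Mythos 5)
This statement is imported verbatim from the literature: the paper gives no proof of it, but simply invokes Proposition~4.3 of \cite{IP2008} (just as it does for the exit-time result, Proposition~3.1), so there is no in-paper argument for your sketch to diverge from. Your proposal is a correct high-level reconstruction of the Imkeller--Pavlyukevich one-big-jump mechanism that underlies the cited result: the small-jump/large-jump decomposition with a threshold $\rho_\varepsilon$, exponential relaxation to $m_i$ between large jumps (which is indeed the source of the uniformity over $w\in B_\Delta(m_i)$), and the landing law of the triggering jump. Your arithmetic is also consistent with the statement: the $\nu$-mass of the set of positive jump sizes $u$ with $\varepsilon u\in(s_{j-1}-m_i,\,s_j-m_i)$ is $\tfrac{1+\theta}{2}C_\alpha\varepsilon^\alpha\bigl(|s_{j-1}-m_i|^{-\alpha}-|s_j-m_i|^{-\alpha}\bigr)$, i.e.\ $C_\alpha\varepsilon^\alpha q_{ij}$ as in \eqref{eqn:q:i:j}, the sums over $j>i$ and $j<i$ telescope (using $s_0=-\infty$, $s_n=+\infty$) to $q_i$ in \eqref{transit:2}, and $C_\alpha\varepsilon^\alpha q_i=\lambda^i(\varepsilon;\theta)$ ties this to the exit-time rate, so the conditional landing probability $q_{ij}/q_i$ in \eqref{transit:1} follows from the strong Markov property as you describe. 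The technical points you single out (no landing near a saddle, no second large jump before relaxation, sub-threshold noise not flipping a near-saddle trajectory) are exactly the content of the estimates in \cite{IP2008}; deferring them there is precisely what the paper itself does, so your treatment is at the appropriate level of rigour for this context.
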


From \eqref{transit:1}-\eqref{transit:2}, 
we can compute that
\begin{align}
\frac{q_{ij}}{q_{i}}
=
\begin{cases}
\frac{\left\|s_{j-1}-m_{i}|^{-\alpha}-|s_{j}-m_{i}|^{-\alpha}\right|}{\frac{1-\theta}{1+\theta}|s_{i-1}-m_{i}|^{-\alpha}+|s_{i}-m_{i}|^{-\alpha}} &\text{if $j>i$},
\\
\frac{\left\|s_{j-1}-m_{i}|^{-\alpha}-|s_{j}-m_{i}|^{-\alpha}\right|}{|s_{i-1}-m_{i}|^{-\alpha}+\frac{1+\theta}{1-\theta}|s_{i}-m_{i}|^{-\alpha}} &\text{if $j<i$}.
\end{cases}
\end{align}
Therefore, $q_{ij}/q_{i}$ is increasing in $\theta$
for $j>i$ and decreasing in $\theta$ for $j<i$.
This is consistent with the intuition that
when $\theta>0$, it is more likely for the dynamics
to transit to a well on the right side,
and when $\theta<0$, it is more likely
for the dynamics to transit to a well on the left side.

Next, we consider the following metastability result due to Theorem~1.1. in \citet{IP2008}. 
It describes the metastability phenomenon, which basically says that there exists a time
scale under which the system behaves like a continuous time Markov process 
with a finite state space consisting of values in the set of stable attractors.

\begin{theorem}[Theorem~1.1. in \cite{IP2008}]
If $w_{0}=w\in(s_{i-1},s_{i})$ for some $i=1,2,\ldots,n$, then
for any $t>0$, in the sense of finite-dimensional distributions, 
\begin{align}
w_{t/H(1/\varepsilon)}\rightarrow Y_{t}(m_{i}),\qquad\text{as $\varepsilon\rightarrow 0$},
\end{align}
where $w_{t}$ is defined in \eqref{w:1:dim} and $H(1/\varepsilon)=C_{\alpha}\varepsilon^{\alpha}$, 
where $Y_{t}(m_{i})$ that starts at $m_{i}$ is a continuous-time Markov process  
on a finite states space
$\{m_{1},\ldots,m_{n}\}$ with the infinitesimal generator $Q=(q_{ij})_{i,j=1}^{n}$,
where $q_{ij}$ is defined in \eqref{eqn:q:i:j}.
\end{theorem}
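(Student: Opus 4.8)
This is Theorem~1.1 of \citet{IP2008}; the plan is to reconstruct its argument, now taking the single-well exit-time result (Proposition~3.1 of \cite{IP2008}) and the transition-probability result (Proposition~4.3 of \cite{IP2008}) as already established. The backbone is the standard ``small jumps versus large jumps'' decomposition of the driving noise, $\varepsilon\,dL_t^{\alpha,\theta}=\varepsilon\,d\xi_t^\varepsilon+\varepsilon\,d\eta_t^\varepsilon$, where $\eta^\varepsilon$ is the compound Poisson process made of the jumps of $L^{\alpha,\theta}$ larger in modulus than an intermediate threshold $\rho_\varepsilon$ (chosen so that $\varepsilon\rho_\varepsilon\to 0$ yet a single jump of $\eta^\varepsilon$, once scaled by $\varepsilon$, can still be of order of the well widths $|m_i-s_j|$), and $\xi^\varepsilon$ is the compensated remainder of small jumps. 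On the macroscopic time scale $1/H(1/\varepsilon)=\varepsilon^{-\alpha}/C_\alpha$, the auxiliary process driven only by $-\nabla f\,dt+\varepsilon\,d\xi_t^\varepsilon$ started anywhere in a well contracts into, and then remains in, a $\kappa$-neighbourhood of that well's minimum with probability tending to $1$; this uses the non-degeneracy $\partial_w^2 f(m_i)>0$ (local strong convexity, hence exponential contraction of the flow of $-\nabla f$), maximal inequalities for the martingale $\varepsilon\xi^\varepsilon$, and the growth assumption $|\partial_w f(w)|>c_1|w|^{1+c_2}$, which rules out escape to infinity and yields tightness.

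Next I would build the embedded skeleton of well visits. Set $T_0=0$ and, inductively, let $J_k$ be the index of the minimum in whose $\Delta$-neighbourhood $w_{T_k}$ sits, and let $T_{k+1}$ be the first time after $T_k$ at which $w_t$ enters $\bigcup_{j\neq J_k}B_\Delta(m_j)$. Using $\lambda^{i}(\varepsilon;\theta)=H(1/\varepsilon)\,q_i$ (immediate from the definitions of $\lambda^i$ and $q_i$ together with $H(1/\varepsilon)=C_\alpha\varepsilon^\alpha$), Proposition~3.1 gives that $H(1/\varepsilon)\,(T_{k+1}-T_k)$ converges in distribution to an exponential random variable of rate $q_{J_k}$, uniformly over the relevant starting points; and Proposition~4.3 gives that, conditionally on the transition, $J_{k+1}=j$ with probability converging to $q_{J_k j}/q_{J_k}$. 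Applying the strong Markov property of $w_t$ at each $T_k$, and using the memorylessness of the limiting exponential together with the stated uniformity of both propositions over $\Omega^i_\varepsilon$ and $B_\Delta(m_i)$, one upgrades the marginal limits to joint convergence in law of $\big(J_k,\,H(1/\varepsilon)(T_{k+1}-T_k)\big)_{k\ge 0}$ to the jump chain and holding times of the continuous-time Markov chain $Y$ with generator $Q$ started at $m_i$.

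It then remains to pass from the skeleton to finite-dimensional distributions. For fixed $0<t_1<\cdots<t_p$, on an event of probability tending to $1$ each $w_{t_\ell/H(1/\varepsilon)}$ lies within $\Delta$ of $m_{J_{N(t_\ell)}}$, where $N(t)$ is the number of transitions completed before rescaled time $t$; hence $\big(w_{t_1/H(1/\varepsilon)},\dots,w_{t_p/H(1/\varepsilon)}\big)$ equals, up to an error $\Delta$, a fixed measurable functional of the skeleton, which converges by the previous step. Letting $\Delta\downarrow 0$ (and $\kappa\downarrow 0$) along a suitable subsequence yields $\big(w_{t_1/H(1/\varepsilon)},\dots,w_{t_p/H(1/\varepsilon)}\big)\Rightarrow\big(Y_{t_1}(m_i),\dots,Y_{t_p}(m_i)\big)$, as claimed.

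The \textbf{main obstacle} is the uniform-in-time confinement estimate for the small-jump dynamics underpinning the first paragraph: one must show that between consecutive large jumps the process relaxes into a $\kappa$-ball around the current minimum and stays there, so that the state $J_k$ is unambiguous, a large jump landing away from the saddles deterministically selects the next well, and the $\nu$-mass of jump sizes landing the process in the ambiguous region near a basin boundary is $o(H(1/\varepsilon))$ and therefore negligible in the limit. Making this precise forces a careful coupling of the exponential stability of $-\nabla f$ near each $m_i$ with moment and maximal bounds on $\sup_{t\le \varepsilon^{-\alpha}}|\varepsilon\xi^\varepsilon_t|$, and a quantitative choice of the cutoff $\rho_\varepsilon$ trading off ``small jumps remain small'' against ``large jumps remain rare''; once these estimates are in place, the assembly into the Markov jump process is routine given Propositions~3.1 and 4.3.
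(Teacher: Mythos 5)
The paper does not prove this statement: it is imported verbatim as Theorem~1.1 of \citet{IP2008}, alongside the two propositions you invoke, so there is no in-paper argument to compare yours against. Your reconstruction does follow the same route as the original Imkeller--Pavlyukevich proof — small-jump/large-jump decomposition of $\varepsilon L^{\alpha,\theta}$, relaxation of the small-jump dynamics into a neighbourhood of the current minimum between large jumps, exponential exit times on the scale $1/H(1/\varepsilon)$, transition probabilities $q_{ij}/q_i$, and assembly of the skeleton chain via the strong Markov property — and your identity $\lambda^i(\varepsilon;\theta)=H(1/\varepsilon)\,q_i$ is correct, as is your identification of the uniform confinement estimate as the technical heart.

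One caveat worth flagging: as quoted, Proposition~3.1 gives only the marginal limit of the rescaled holding time and Proposition~4.3 only the marginal limit of the destination well. To conclude that the limit is the continuous-time Markov chain with generator $Q$ you additionally need asymptotic independence of the rescaled exit time and the exit location (so that holding times and the jump chain decouple in the limit), together with uniformity strong enough to iterate these limits through the strong Markov property at the random times $T_k$; in \citet{IP2008} this joint statement is established inside the exit-time analysis rather than following formally from the two quoted propositions, so your step "upgrades the marginal limits to joint convergence" hides a genuine (though known) piece of work. With that understood, your sketch is a faithful outline of the external proof rather than an alternative to anything in this paper.
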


The Markov process $Y_{t}(m_{i})$ admits a unique invariant distribution
$\pi$ satisfying $Q^{T}\pi=0$. 
In the case of double well, i.e. $n=2$
and $m_{1}<s_{1}=0<m_{2}$ separated by a local maximum
at $s_{1}=0$, where without loss of generality we assume
that $m_{2}>|m_{1}|$, i.e. the second local minimum lies
in a wider valley. A simple calculation yields that
\begin{equation}
q_{12}=\frac{1+\theta}{2}\frac{1}{|m_{1}|^{\alpha}}=-q_{11},
\quad\text{and}\quad q_{21}=\frac{1-\theta}{2}\frac{1}{m_{2}^{\alpha}}=-q_{22},
\end{equation}
so that it follows from $Q^{T}\pi=0$ and $\pi_{1}+\pi_{2}=1$ that
\begin{align}
&\pi_{1}=\frac{(1+\theta)^{-1}|m_{1}|^{\alpha}}{(1+\theta)^{-1}|m_{1}|^{\alpha}+(1-\theta)^{-1}m_{2}^{\alpha}},
\\
&\pi_{2}=\frac{(1-\theta)^{-1}m_{2}^{\alpha}}{(1+\theta)^{-1}|m_{1}|^{\alpha}+(1-\theta)^{-1}m_{2}^{\alpha}}.
\end{align}
In particular, the ratio $\frac{\pi_{2}}{\pi_{1}}=\frac{1+\theta}{1-\theta}\left(\frac{m_{2}}{|m_{1}|}\right)^{\alpha}$
is increasing in $\frac{m_{2}}{|m_{1}|}$ and $\theta$. 
That reveals that in the equilibrium the process will spend more
time in the second valley if the second valley is wide
and there is a drift towards to the right.
In the symmetric case, i.e. $\theta=0$, $\pi_{2}>\pi_{1}$
since $m_{2}>|m_{1}|$
so that in the equilibrium the process spends more time in the wider valley.
In the asymmetric case, i.e. $\theta\neq 0$, if there is
a strong skewness towards the left, i.e. $\theta<0$ and $|\theta|$
is large, then in the equilibrium the process may spend more
time in the narrower valley. Indeed $\pi_{2}>\pi_{1}$
if and only if $\theta>\frac{|m_{1}|^{\alpha}-m_{2}^{\alpha}}{|m_{1}|^{\alpha}+m_{2}^{\alpha}}$.


\section{Postponed Proofs}\label{sec:technical:proofs}

\subsection{Proof of Lemma~\ref{prop:accum_tail_properties}}

Before we proceed to the proof of Lemma~\ref{prop:accum_tail_properties} we present some intermediary results that are required for the proof.

\begin{definition}{(Asymptotic order of magnitude)}
A positive sequence $a_m$ is of the same order of magnitude as another positive sequence $b_m$ ($a_m \asymp b_m$, i.e. `asymptotically equivalent') if there exist some $c,C>0$ such that: $c \leq \frac{a_m}{b_m} \leq C$ for any $m \in \mathbb{N}$.
\end{definition}

\begin{lemma}[Lemma A.1 in \citep{Vladimirova2019UnderstandingLevel}]\label{lem:gaussianmoments}
Let $X$ be a normal random variable such that $X\sim \mathcal{N}(0,\sigma^2)$. Then the following asymptotic equivalence holds\[\|X\|_m \asymp \sqrt{m}.\]
\end{lemma}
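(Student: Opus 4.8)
\textbf{Proof plan for Lemma~\ref{lem:gaussianmoments}.}

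The plan is to compute the absolute moments $\expect[|X|^m]$ of a centred Gaussian exactly and then extract the $m$-th root asymptotics via Stirling's formula. First I would recall the standard closed form for the absolute moments of $X \sim \mathcal{N}(0,\sigma^2)$: for any $m \geq 1$,
\begin{align*}
\expect\left[|X|^m\right] = \sigma^m \, \frac{2^{m/2}}{\sqrt{\pi}} \, \Gamma\!\left(\frac{m+1}{2}\right),
\end{align*}
which follows by substituting $u = x^2/(2\sigma^2)$ in the defining integral $\frac{2}{\sqrt{2\pi}\sigma}\int_0^\infty x^m e^{-x^2/(2\sigma^2)}\,dx$ and recognising a Gamma integral. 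Hence
\begin{align*}
\|X\|_m = \expect\left[|X|^m\right]^{1/m} = \sigma \left(\frac{2^{m/2}}{\sqrt{\pi}}\,\Gamma\!\left(\frac{m+1}{2}\right)\right)^{1/m}.
\end{align*}

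Next I would analyse the growth of the right-hand side. Taking logarithms, $\log \|X\|_m = \log\sigma + \tfrac{1}{2}\log 2 - \tfrac{1}{2m}\log\pi + \tfrac{1}{m}\log\Gamma\!\left(\tfrac{m+1}{2}\right)$. By Stirling's approximation, $\log\Gamma\!\left(\tfrac{m+1}{2}\right) = \tfrac{m}{2}\log\tfrac{m}{2} - \tfrac{m}{2} + O(\log m)$, so $\tfrac{1}{m}\log\Gamma\!\left(\tfrac{m+1}{2}\right) = \tfrac{1}{2}\log m - \tfrac{1}{2}\log 2 - \tfrac{1}{2} + O\!\left(\tfrac{\log m}{m}\right)$. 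Substituting back, the $\pm\tfrac12\log 2$ terms cancel and we obtain $\log\|X\|_m = \log\sigma + \tfrac{1}{2}\log m - \tfrac12 + o(1)$, i.e. $\|X\|_m = \sigma\, m^{1/2} e^{-1/2}(1+o(1))$. This already shows $\|X\|_m \asymp \sqrt{m}$ for large $m$.

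To get the asymptotic equivalence uniformly over \emph{all} $m \in \mathbb{N}$ (as the definition of $\asymp$ in the excerpt demands), I would note that the function $m \mapsto \|X\|_m / \sqrt{m}$ is continuous and strictly positive on $[1,\infty)$ and, by the computation above, converges to the finite positive limit $\sigma e^{-1/2}$ as $m \to \infty$; therefore it is bounded above and below by positive constants on $[1,\infty)$, which furnishes the constants $c, C$ in the definition. I do not anticipate a serious obstacle here; the only mildly delicate point is being careful with the Stirling remainder so that the $\log 2$ terms genuinely cancel and the leading constant is pinned down, but this is a routine calculation. Alternatively, one can avoid Stirling entirely by using the log-convexity of $\Gamma$ together with the crude two-sided bounds $\Gamma(x+1) \le x^x$ and $\Gamma(x+1) \ge (x/e)^x$ to sandwich $\|X\|_m$ between constant multiples of $\sqrt m$ directly.
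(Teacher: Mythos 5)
Your proposal is correct: the exact absolute-moment formula $\expect[|X|^m]=\sigma^m 2^{m/2}\Gamma\bigl(\tfrac{m+1}{2}\bigr)/\sqrt{\pi}$ combined with Stirling's formula gives $\|X\|_m \sim \sigma e^{-1/2}\sqrt{m}$, and your continuity/positivity argument correctly upgrades this large-$m$ asymptotic to the two-sided bound for all $m\geq 1$ required by the definition of $\asymp$. Note that the paper itself offers no proof of this statement — it is imported verbatim as Lemma A.1 of Vladimirova et al. — and your calculation is essentially the standard argument used in that cited reference, so there is nothing further to reconcile.
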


We know that the centering of variables does not change their tail properties \citep{vershynin_2018, Kuchibhotla2018}. 
As such Lemma~\ref{lem:gaussianmoments} also applies to $X\sim \mathcal{N}(\mu,\sigma^2)$, 
as $\|X\| \asymp \|X-\mu\| \asymp \sqrt{m}$.

\begin{lemma}[Lemma~3.1 of \citet{Vladimirova2019UnderstandingLevel}]\label{prop:moment-preservation}

    Let $\kappa: \mathbb{R} \to \mathbb{R} $ be a non-linearity that obeys the extended envelope property. 
    And let $X$ be a variable for which $\|X_+\|_m \asymp \|X_-\|_m$ where $X_-$ and $X_+$ denote the left and right tail of the variable respectively  \footnote{We weaken \citet{Vladimirova2019UnderstandingLevel}'s requirement for $X$ to be symmetric as the proof they give still holds here.}. 
    Then we have: 
\begin{align}
\|\kappa(X)\|_m \asymp \|X\|_m\,,\qquad \text{for any  $m \geq 1$}\,.
\end{align}
\end{lemma}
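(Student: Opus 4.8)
The plan is to establish the two-sided relation $\|\kappa(X)\|_m \asymp \|X\|_m$ by proving matching upper and lower bounds, each with a constant independent of $m$. The upper bound follows directly from the upper half of the extended envelope property (Definition~\ref{defn:extended-envelope}) together with Minkowski's inequality, while the lower bound exploits the lower half of the envelope on whichever half-line it holds, combined with the hypothesis $\|X_+\|_m \asymp \|X_-\|_m$ to transfer a single-sided moment estimate into a two-sided one. Throughout I write $\|X_+\|_m^m = \expect[|X|^m \mathbf{1}\{X \geq 0\}]$ and $\|X_-\|_m^m = \expect[|X|^m \mathbf{1}\{X < 0\}]$, so that $\|X\|_m^m = \|X_+\|_m^m + \|X_-\|_m^m$.

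For the upper bound, the envelope gives $|\kappa(X)| \leq c_2 + d_2 |X|$ pointwise, so Minkowski's inequality yields $\|\kappa(X)\|_m \leq c_2 + d_2\|X\|_m$. To absorb the additive constant into a multiple of $\|X\|_m$ uniformly in $m$, I use that $p \mapsto \|X\|_p$ is nondecreasing on a probability space, whence $\|X\|_m \geq \|X\|_1 = \expect|X| =: c_0 > 0$ (assuming $X \not\equiv 0$). This gives $\|\kappa(X)\|_m \leq (c_2/c_0 + d_2)\|X\|_m$, an $m$-independent constant.

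For the lower bound, assume without loss of generality that the lower envelope holds on $\mathbb{R}^+$ (the case $\mathbb{R}^-$ is identical by symmetry), so $|\kappa(x)| \geq c_1 + d_1|x| \geq d_1|x|$ for $x \geq 0$ with $d_1 > 0$. Restricting the expectation to $\{X \geq 0\}$ gives $\|\kappa(X)\|_m^m \geq d_1^m \expect[|X|^m\mathbf{1}\{X\geq 0\}] = d_1^m \|X_+\|_m^m$, i.e. $\|\kappa(X)\|_m \geq d_1 \|X_+\|_m$. It then remains to show $\|X_+\|_m \geq c''\|X\|_m$ for a constant $c''$ independent of $m$. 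From $\|X_+\|_m \asymp \|X_-\|_m$ there is $c' > 0$ with $\|X_-\|_m \leq \|X_+\|_m / c'$, so
\begin{align*}
\|X\|_m^m = \|X_+\|_m^m + \|X_-\|_m^m \leq \bigl(1 + c'^{-m}\bigr)\|X_+\|_m^m.
\end{align*}
Taking $m$-th roots and using $(1+c'^{-m})^{1/m} \leq 2\max(1, 1/c')$ for all $m \geq 1$ yields $\|X_+\|_m \geq \frac{1}{2\max(1,1/c')}\|X\|_m$, so $\|\kappa(X)\|_m \geq \frac{d_1}{2\max(1,1/c')}\|X\|_m$. Combining this with the upper bound gives the claimed $\asymp$.

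The main obstacle is the uniform-in-$m$ control in the lower bound: a naive one-sided estimate only controls the half-line carrying the envelope's growth, and the factor $(1+c'^{-m})^{1/m}$ could in principle blow up with $m$. The two-sided hypothesis $\|X_+\|_m \asymp \|X_-\|_m$ is exactly what prevents this — it guarantees that the favourable half carries a constant fraction of the total moment at every $m$ — and the elementary bound $(1+a)^{1/m}\leq 2\max(1, a^{1/m})$ converts this into an $m$-independent constant. Implicit here is that $d_1 > 0$ on the relevant half-line; otherwise the envelope's lower bound is vacuous and the conclusion can fail, as for $\kappa \equiv 0$.
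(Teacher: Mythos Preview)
The paper does not supply its own proof of this lemma: it is quoted as Lemma~3.1 of \citet{Vladimirova2019UnderstandingLevel}, with the footnote simply asserting that their proof still goes through under the weakened hypothesis $\|X_+\|_m \asymp \|X_-\|_m$ in place of symmetry. So there is nothing in the paper to compare against beyond that citation.

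Your argument is correct and is the natural one. The upper bound via $|\kappa(x)|\le c_2+d_2|x|$ and Minkowski, with the additive constant absorbed using $\|X\|_m\ge\|X\|_1>0$, is clean. For the lower bound, restricting to the half-line where the lower envelope holds and then invoking $\|X_+\|_m\asymp\|X_-\|_m$ is exactly how the weakened hypothesis replaces symmetry; your control of $(1+c'^{-m})^{1/m}\le 2\max(1,1/c')$ is the key step that makes the constant uniform in $m$, and it is carried out correctly. Your closing caveat that $d_1>0$ is implicitly required (since Definition~\ref{defn:extended-envelope} only asks $d_1\ge 0$) is apt and worth stating, as the conclusion indeed fails for bounded $\kappa$.
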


\begin{lemma}\label{lem:expmomentssum}
Let $X_1, \dots, X_N$ be variables that each obeys $\|X_i\|_m \lesssim m^r, p\in R,  i=1,\dots,N$ and $(W_i, \dots, W_N) \in \mathbb{R}^N$.
\[
\left\Vert\sum_{i=1}^N W_iX_i \right\Vert_m  \lesssim m^r\,.
\]
\end{lemma}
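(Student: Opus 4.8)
The plan is to bound the $m$-th norm of the weighted sum directly using the triangle inequality (Minkowski) and then absorb the constants into the $\lesssim$ relation. First I would note that $\|\cdot\|_m = \expect[|\cdot|^m]^{1/m}$ is a genuine norm on $L^m$ for $m \geq 1$, so by Minkowski's inequality
\[
\left\Vert \sum_{i=1}^N W_i X_i \right\Vert_m \leq \sum_{i=1}^N |W_i|\, \|X_i\|_m .
\]
Then I would invoke the hypothesis $\|X_i\|_m \lesssim m^r$ for each $i$: by definition of $\lesssim$ there is a constant $C_i > 0$ with $\|X_i\|_m \leq C_i m^r$ for all $m \in \mathbb{N}$. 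Setting $C := \sum_{i=1}^N |W_i| C_i$, which is finite since $N$ is fixed and the $W_i$ are fixed reals, we obtain $\left\Vert \sum_{i=1}^N W_i X_i \right\Vert_m \leq C m^r$ for every $m$, which is exactly the claim $\left\Vert \sum_{i=1}^N W_i X_i \right\Vert_m \lesssim m^r$.

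One subtlety worth addressing explicitly is that the constant $C$ depends on $N$ and on the weights; this is consistent with the use of $\lesssim$ elsewhere in the paper, where the hidden constant is allowed to depend on fixed structural quantities (layer widths, weights) but not on the running index $m$. I would also remark that no independence or moment-matching assumption on the $X_i$ is needed here — unlike in Lemma~\ref{prop:moment-preservation} — because the triangle inequality handles arbitrary dependence; the lemma is purely a statement about how the sub-Weibull-type growth rate $m^r$ is preserved under finite linear combinations.

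There is essentially no main obstacle here: the only thing to be careful about is that Minkowski's inequality requires $m \geq 1$, which is exactly the standing assumption, so the argument goes through verbatim. (If one wanted the analogous statement for $m \in (0,1)$ one would instead use the quasi-norm inequality $\|\sum a_i\|_m^m \leq \sum \|a_i\|_m^m$ and pick up a factor $N^{(1-m)/m}$, but that case does not arise.) I would therefore present this as a short three-line proof: state Minkowski, apply the hypothesis termwise, collect constants.
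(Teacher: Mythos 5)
Your proposal is correct and follows essentially the same route as the paper's own proof: Minkowski's inequality applied termwise, then the hypothesis $\|X_i\|_m \leq C_i m^r$ with the finitely many constants $|W_i|C_i$ absorbed into the implicit constant of $\lesssim$. The extra remarks (no independence needed, constant may depend on $N$ and the weights, $m\geq 1$ is required for Minkowski) are accurate but not substantively different from what the paper does.
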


\begin{proof}[Proof of Lemma~\ref{lem:expmomentssum}]
By Minkowski's inequality we have that 
\begin{align*}
    \left\Vert\sum_{i=1}^N W_iX_i \right\Vert_m &\leq \sum_{i=1}^N \|W_iX_i\|_m \leq  \sum_{l=1}^{N} |W_iA_i|m^r, \ (A_1,\dots,A_{N}) \in \mathbb{R}^{N}\\
    \Leftrightarrow  \left\Vert\sum_{i=1}^N W_iX_i \right\Vert_m &\lesssim m^r\,.
\end{align*}

The $A_i$ here are constants that upper bound the asymptotics of each norm $\|X_i\|_m$ in the sum. 
\end{proof}

\begin{proof}[Proof of Lemma~\ref{prop:accum_tail_properties}]
\textbf{Additive Noise.} Consider first the noised data, $\widetilde{\v{h}}_0(\v{x}) = \v{x} + \v{\epsilon}_0, \v{\epsilon}_0 \sim \mathcal{N}(0,\sigma_0^2)$. As Lemma~\ref{lem:gaussianmoments} shows, Gaussian random variables have an $m^{\mathrm{th}}$ norm that is asymptotically equivalent to $\sqrt{m}$,
\[\left\Vert\widetilde{h}_{0,l}(\v{x})\right\Vert_m \asymp  \sqrt{m}, \qquad\text{for any $l = 1, \dots , n_0$}\,,\]
where $n_0$ is the dimensionality of data.

Let us now assume that $\left\Vert\widetilde{h}_{i,l}(\v{x})\right\Vert_m \lesssim \sqrt{m}$, for any $l = 1, \dots , n_i$, for some layer $i$. 
The pre-non-linearity at this layer is given by $\tilde{\v{g}}_i = \v{W}_{i+1}\widetilde{\v{h}}_i$. The $j^\mathrm{th}$ element of $\tilde{\v{g}}_i$ is defined as a sum,
\[
g_{i,j}(\v{x}) = \sum_{l=1}^{n_i} W_{i+1,l,j}\widetilde{h}_{i,l}(\v{x})\,,
\]
where $W_{i+1,l,j}$ is the weight that maps from the $l^\mathrm{th}$ neuron in layer $i$ to the $j^\mathrm{th}$ in layer $i+1$. 
By Lemma~\ref{lem:expmomentssum}, 
\begin{align*}
    \left\Vert g_{i,j}(\v{x})\right\Vert_m &\lesssim \sqrt{m}, 
    \qquad m=1,\dots, n_i\,.
\end{align*}
As such if we assume the non-linearities $\phi$ at each layer obey the extended envelope property, then we have by Lemma~\ref{prop:moment-preservation}: 
\begin{align*}
   \left\Vert \phi(g_{i,j}(\v{x})) \right\Vert_m = \left\Vert \widehat{h}_{i+1,j}(\v{x}) \right\Vert_m &\asymp
\left\Vert \kappa(g_{i,j}(\v{x}))\right\Vert_m \\
\Leftrightarrow \left\Vert\widehat{h}_{i+1,j}(\v{x})\right\Vert_m &\lesssim \sqrt{m}, \qquad j=1,\dots, n_{i+1}\,. \nonumber
\end{align*}

Note that $\widetilde{\v{h}}_{i+1} = \widehat{\v{h}}_{i+1} + \v{\epsilon}_{i+1}, \v{\epsilon}_{i+1} \sim \mathcal{N}(0, \sigma_{i+1}^2)$. 
By Lemma~\ref{lem:expmomentssum}, once again $\|\widetilde{h}_{i+1,j}(\v{x})\|_m \lesssim \sqrt{m}$, is Gaussian in its tails. 
By recursion, with $\widetilde{\v{h}}_{0}$ as the base case, we have that 
\begin{align}
        \left\Vert \widetilde{h}_{i,l}(\v{x})\right\Vert _m \lesssim \sqrt{m},  \qquad \text{for any $m \geq 1$}; \ i = 1,\dots, L-1; \ l=1,\dots, n_i\,. \nonumber
\end{align}

\textbf{Multiplicative Noise.} Consider first the noised data, $\widetilde{\v{h}}_0(\v{x}) = \v{x} \circ \v{\epsilon}_0, \v{\epsilon}_0 \sim \mathcal{N}(1,\sigma_0^2)$. As Lemma~\ref{lem:gaussianmoments} shows, Gaussian random variables have an $m^{\mathrm{th}}$ norm that is asymptotically equivalent to $\sqrt{m}$,
\[\left\Vert \widetilde{h}_{0,l}(\v{x}) \right\Vert_m \asymp  \sqrt{m}, \qquad\text{for any $l = 1, \dots , n_0$}\,,\]
where $n_0$ is the dimensionality of data.

Let us now assume that $\left\Vert \widetilde{h}_{i,l}(\v{x}) \right\Vert_m \lesssim m^r, \ \forall l = 1, \dots , n_i$, for some layer $i$. 
The pre-non-linearity at this layer is given by $\tilde{\v{g}}_i = \v{W}_{i+1}\widetilde{\v{h}}_i$. The $j^\mathrm{th}$ element of $\tilde{\v{g}}_i$ is defined as a sum,
\[
g_{i,j}(\v{x}) = \sum_{l=1}^{n_i} W_{i+1,l,j}\widetilde{h}_{i,l}(\v{x})\,,
\]
where $W_{i+1,l,j}$ is the weight that maps from the $l^\mathrm{th}$ neuron in layer $i$ to the $j^\mathrm{th}$ in layer $i+1$. 
By Lemma~\ref{lem:expmomentssum}, 
\begin{align*}
    \left\Vert g_{i,j}(\v{x}) \right\Vert_m &\lesssim m^r, \qquad m=1,\dots, n_i\,.
\end{align*}

As such if we assume the non-linearities $\phi$ at each layer obey the extended envelope property, then we have by Lemma~\ref{prop:moment-preservation}: 
\begin{align*}
    \left\Vert \phi(g_{i,j}(\v{x})) \right\Vert_m = \left\Vert \widehat{h}_{i+1,j}(\v{x}) \right\Vert_m &\asymp
\left\Vert \kappa(g_{i,j}(\v{x})) \right\Vert_m \\
\Leftrightarrow \left\Vert \widehat{h}_{i+1,j}(\v{x}) \right\Vert_m &\lesssim m^r, \qquad j=1,\dots, n_{i+1}\,. \nonumber
\end{align*}

Note that $\widetilde{\v{h}}_{i+1} = \widehat{\v{h}}_{i+1} \circ \v{\epsilon}_{i+1}, \v{\epsilon}_{i+1} \sim \mathcal{N}(1, \sigma_{i+1}^2)$. 
By H\"{o}lder's inequality we have that, 
\[
\left\Vert\widetilde{\v{h}}_{i+1,j}(\v{x})\right\Vert_m \lesssim m^{r+\frac{1}{2}},\qquad  j=1,\dots, n_{i+1}\,.
\]

By recursion, with $\widetilde{\v{h}}_{0}$ as the base case, we have that 
\begin{align}
        \left\Vert \widetilde{h}_{i,l}(\v{x}) \right\Vert_m \lesssim m^{\frac{i+1}{2}},  \qquad \text{for any $m \geq 1$}; \ i = 1,\dots, L-1; \ l=1,\dots, n_i\,. \nonumber
\end{align}

\end{proof}

\subsection{Proof of Theorem~\ref{thm:gn_tail_properties}}

Before we proceed to the proof of Theorem~\ref{thm:gn_tail_properties} we present some intermediary results that are required for the proof.

\begin{lemma}\label{prop:boundedmoments}
    Let $X$ be a bounded random variable such that $|X| \leq C$,
    then $X$ is sub-Weibull with parameter $\theta=0$,
    \begin{align}
        \left\Vert X \right\Vert_m
        \lesssim m^{0},  \qquad\text{for every $m \geq 1$}.
    \end{align}
\end{lemma}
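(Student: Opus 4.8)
The plan is to prove the claim directly from the definition of the sub-Weibull property, Definition~2, which characterises a sub-Weibull random variable with tail parameter $r$ by the condition $\|X\|_m \lesssim m^r$. Here we want to show that a bounded random variable satisfies this with $r=0$, i.e.\ $\|X\|_m \lesssim m^0 = 1$.

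First I would recall that $\|X\|_m := \expect[|X|^m]^{1/m}$. Since $|X| \leq C$ almost surely, we have $|X|^m \leq C^m$ almost surely, and hence $\expect[|X|^m] \leq C^m$. Taking the $m$-th root gives $\|X\|_m \leq C$ for every $m \geq 1$.

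The final step is to translate this uniform bound into the asymptotic-order statement. By Definition~1, $\|X\|_m \lesssim m^0$ means there exists a constant $C' > 0$ with $\|X\|_m / m^0 = \|X\|_m \leq C'$ for all $m \in \mathbb{N}$. Taking $C' = C$ (or $C' = \max(C,1)$ to be safe if one wants $C' > 0$ strictly when $C = 0$) completes the argument.

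There is essentially no obstacle here; the only thing to be careful about is bookkeeping with the definition of $\lesssim$ and making sure the constant is chosen to be strictly positive as required by Definition~1. No probabilistic machinery or the envelope property is needed — this is a purely deterministic pointwise bound followed by monotonicity of the expectation and of the map $t \mapsto t^{1/m}$.
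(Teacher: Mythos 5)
Your proof is correct and follows essentially the same route as the paper's: bound $\expect[|X|^m] \leq C^m$ pointwise, take the $m$-th root, and conclude $\|X\|_m$ is uniformly bounded, i.e.\ $\|X\|_m \lesssim m^0$. Your added care with the definition of $\lesssim$ and the strict positivity of the constant is fine but not a substantive difference.
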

\begin{proof}[Proof of Lemma~\ref{prop:boundedmoments}]
The moments of $X$ obey 

\begin{align*}
    \expect\left[|X|^m\right] \leq C^k\,.
\end{align*}
Taking the root of this we find that $\|X\|_m := \expect[|X|^m]^{\frac{1}{m}}$ is not dependent on $m$ and scales as a constant, $\left\Vert X \right\Vert_m \asymp m^{0}$. 
\end{proof}

\begin{proof}[Proof of Theorem~\ref{thm:gn_tail_properties}]

We first consider the gradient for a single data-label pair of $W_{i,l,j}$ the weight that maps from neuron $l$ in layer $i-1$ to neuron $j$ in layer $i$.
We study this gradient using the chain rule, where we decompose $\partial E_{\mathcal{L}}(\cdot)/\partial W_{i,l,j}$ as  \[\frac{\partial E_{\mathcal{L}}(\cdot)}{\partial \widetilde{h}_{i,j}} \cdot \frac{\partial \widetilde{h}_{i,j}}{ \partial W_{i,l,j}}, \] 
where $\widetilde{h}_{i,j}$ is the (noised) activation of the $j^\mathrm{th}$ neuron in the $i^\mathrm{th}$ layer. 
Thus, the gradient noise on the weights can be described as the product of two random variables.

\textbf{Additive Noise.} Let us first consider the properties of $\partial E_{\mathcal{L}}(\cdot)/\partial \widetilde{h}_{i,j}$ for the additive case. 

\textit{Regression}. In the case of regression we use a mean-square-error (MSE) we have that: 
\[
\Delta\mathcal{L}(\v{x}, \v{y}) = 2(\v{y}-\v{h}_L(\v{x}))\mathcal{E}_L(\v{x}) + \left(\mathcal{E}_L(\v{x})\right)^2\,,
\]
where we imply all terms' dependence on $\v{w}, \v{\epsilon}$ for brevity of notation. 
One can already see that the derivative of this object with respect to each element of $\widetilde{\v{h}}_{L}$ will have tail properties that are asymptotically equivalent to those of $\mathcal{E}_L$, which we know by Lemma~\ref{prop:accum_tail_properties}.  
\[
\left\Vert  \frac{\partial \Delta\mathcal{L}(\v{x}, \v{y})}{\partial \widetilde{h}_{L,j}}\right\Vert_m \lesssim \sqrt{m}, \qquad m = 1, \dots, n_L\,.
\]
If we center this distribution the tail properties of this variable are unchanged \citep{vershynin_2018, Kuchibhotla2018}. 
In particular the asymptotic behaviour of $\|\cdot\|_m$ is unchanged and we have that: 
\begin{align*}
    \left\Vert \frac{\partial \Delta\mathcal{L}(\v{x}, \v{y})}{\partial \widetilde{h}_{L,j}} - \expect_{\v{\epsilon}} \left[\frac{\partial \Delta\mathcal{L}(\v{x}, \v{y})}{\partial \widetilde{h}_{L,j}}\right]
 \right\Vert_m &\lesssim \sqrt{m},  \\
\Leftrightarrow \left\Vert
\frac{\partial E_{\mathcal{L}}(\v{x}, \v{y})}{\partial \widetilde{h}_{L,j}}
 \right\Vert_m &\lesssim \sqrt{m}, \qquad m = 1, \dots, n_L\,.
\end{align*}

\textit{Classification.} In the case of classification we use a cross-entropy (CE) error. 
There is no easy closed-form for $\Delta\mathcal{L}(\v{x}, \v{y})$ here, but we can infer the properties of $ \nabla_{\widetilde{\v{h}}_{L}} \Delta\mathcal{L}(\v{x}, \v{y})$ by studying the properties of the gradient 
$\nabla_{\v{h}_{L}} \mathcal{L}(\v{x}, \v{y})$. 
For CE we know that: 
\[
\nabla_{\v{h}_{L}} \mathcal{L}(\v{x}, \v{y}) = \mathrm{sigmoid}(\v{h}_{L}(\v{x})) - \v{y} 
\]
in the binary label case. In the multi-label classification case we typically use a $\mathrm{softmax}$, which is also a bounded function. 
We can already see that any noise $\mathcal{E}_L$ added to $\v{h}_{L}(\v{x})$ will induce a change in the gradient that is inherently bounded by the $\mathrm{sigmoid}$ non-linearity, meaning that $\Delta\mathcal{L}(\v{x}, \v{y})$ will be bounded. 
As such the centered variable $E_{\mathcal{L}}(\v{x}, \v{y})$ will also be bounded \textit{and} zero mean,. 
By Lemma~\ref{prop:boundedmoments} any bounded and zero mean distribution will be sub-Weibull with parameter $\theta=0$, and will thus \textit{also} be sub-Gaussian
\[
\left\Vert \frac{\partial E_{\mathcal{L}}(\v{x}, \v{y})}{\partial \widetilde{h}_{L,j}}
 \right\Vert_m \lesssim \sqrt{m}, \qquad m = 1, \dots, n_L\,.
\]

Synthesizing the regression and classification settings we can conclude that each constitutive element of $\nabla_{\widetilde{\v{h}}_{L}} E_{\mathcal{L}}(\v{x}, \v{y})$ will be sub-Gaussian and will have zero mean.

We can now turn to the partial derivatives of the form $\partial E_{\mathcal{L}}(\cdot)/\partial \widetilde{h}_{i}^{m}$.
Assume $\nabla_{\widetilde{\v{h}}_{i}} E_{\mathcal{L}}(\v{x}, \v{y})$ is of order $m^r$, 
with, 
\[
\left\Vert \frac{\partial E_{\mathcal{L}}(\v{x}, \v{y})}{\partial \widetilde{h}_{i,j}}
 \right\Vert_m \lesssim m^r, \qquad m = 1, \dots, n_i\,,
\]
which entails for gradients at the previous $(i-1)^\mathrm{th}$ layer we have 
\[
\frac{\partial E_{\mathcal{L}}(\cdot)}{\partial \widetilde{h}_{i-1,l}} = \nabla_{\widetilde{\v{h}}_{i}} E_{\mathcal{L}}(\v{x}, \v{y})
\nabla_{\widetilde{h}_{i-1,l}} \widetilde{\v{h}}_{i}(\v{x})\,,
\]
where 
\[
\nabla_{\widetilde{h}_{i-1,l}} \widetilde{\v{h}}_{i}(\v{x}) =  \kappa'\left(\v{W}_{i, l}\widetilde{h}_{i-1,l}(\v{x})\right) \circ (\v{W}_{i, l})\,, 
\]
where $\circ$ denotes the element wise product and $\v{W}_{i,l}$ is the $l^\mathrm{th}$ column of the weight matrix $\v{W}_i$. 
By definition, activation functions that obey the extended envelope property will have gradients that are bounded in norm, by some constant $d_2$. 
As such, by Lemma~\ref{prop:boundedmoments} $\kappa'$ will be sub-Weibull with $r=0$. 
By H\"{o}lder's inequality we have that \[
\left\|\left(\nabla_{\widetilde{\v{h}}_{i}} E_{\mathcal{L}}(\v{x}, \v{y})\right)_z
\left(\kappa'\left(\v{W}_{i, l}\widetilde{h}_{i-1,l}(\v{x})\right)\right)_j\right\|_m \leq \left\|\left(\nabla_{\widetilde{\v{h}}_{i}} E_{\mathcal{L}}(\v{x}, \v{y})\right)_j\right\|_{2m}\left\|\left(\kappa'\left(\v{W}_{i, l}\widetilde{h}_{i-1,l}(\v{x})\right)\right)_j\right\|_{2m}\,,
\]
where $j$ indexes over the elements of both Jacobians. By definition, we know that there exists $A > 0$ and $B > 0$ such that $\|(\nabla_{\widetilde{\v{h}}_{i}} E_{\mathcal{L}}(\v{x}, \v{y}))_z\|_{2m} \leq A (2m)^{p}$ and $\|(\kappa'(\v{W}_{i, l}\widetilde{h}_{i-1,l}(\v{x})))_j\|_{2m} \leq B$. 
As such \[
\left\|\left(\nabla_{\widetilde{\v{h}}_{i}} E_{\mathcal{L}}(\v{x}, \v{y})\right)_z
\left(\kappa'\left(\v{W}_{i, l}\widetilde{h}_{i-1,l}(\v{x})\right)\right)_j\right\|_m \leq \left\|\left(\nabla_{\widetilde{\v{h}}_{i}} E_{\mathcal{L}}(\v{x}, \v{y})\right)_j\right\|_{2m}\left\|\left(\kappa'\left(\v{W}_{i, l}\widetilde{h}_{i-1,l}(\v{x})\right)\right)_j\right\|_{2m} \leq AB2^pm^r\,.
\]
Thus we know that the product of these two variables will be asymptotically upper-bounded by
\[
\left\Vert \left(
\nabla_{\widetilde{\v{h}}_{i}} E_{\mathcal{L}}(\v{x}, \v{y})\right)_j
\left(\kappa'\left(\v{W}_{i, l}\widetilde{h}_{i-1,l}(\v{x})\right)\right)_j \right\Vert_m \lesssim m^r\,.
\]
We now need to take into account the weighted sum across rows and columns (i.e. over indices $j$) that occurs. 
By Lemma~\ref{lem:expmomentssum} we know that

\begin{align*}
\left\Vert
\sum_{j=1}^{n_i}
\left(\nabla_{\widetilde{\v{h}}_{i}} E_{\mathcal{L}}(\v{x}, \v{y})\right)_j
\left(\kappa'\left(\v{W}_{i, l}\widetilde{h}_{i-1,l}(\v{x})\right)\circ (\v{W}_{i, l})\right)_j \right\Vert_m \lesssim  m^r, \\ 
\Leftrightarrow 
  \left\Vert \frac{\partial E_{\mathcal{L}}(\cdot)}{\partial \widetilde{h}_{i-1,l} }  \right\Vert_m \lesssim m^r, \qquad m = 1, \dots, n_i\,.
\end{align*}

By recursion, with $\widetilde{\v{h}}_L$ as the base case, gradients at layer $i$ bounded in norm by $m^r$ induce gradients at layer $i-1$, also bounded in norm by $m^{r}$. 
By recursion with the $L^\mathrm{th}$ layer as the base case we can claim that,
\begin{align*}
  \left\Vert \frac{\partial E_{\mathcal{L}}(\cdot)}{\partial \widetilde{h}_{i,j}}  \right\Vert_m \lesssim \sqrt{m}\,.
\end{align*}
We have now defined the first constitutive term of  $\partial E_{\mathcal{L}}(\cdot)/\partial W_{i,l,j}$. 
Defining $\partial \widetilde{h}_{i,j} / \partial W_{i,l,j}$ is much simpler:
\[
\frac{\partial \widetilde{h}_{i,j}}{ \partial W_{i,l,j}} =  \kappa'\left(W_{i,l,j}\widetilde{h}_{i-1,l}(\v{x})\right) \left(\widetilde{h}_{i-1,l}(\v{x})\right)\,.
\]
Here $\widetilde{h}_{i-1,l}(\v{x})$, which we know is sub-Gaussian by Lemma~\ref{prop:accum_tail_properties}, is once again multiplied to a bounded variable, $\kappa'$. 
Thus reapplying H\"{o}lder's inequality we obtain that 
\[
\left\Vert \frac{\partial \widetilde{h}_{i,j}}{ \partial W_{i,l,j}}\right\Vert_m  \lesssim \sqrt{m}\,.
\]

We can now bring together the characterisations of the gradients that constitute $\partial E_{\mathcal{L}}(\cdot)/\partial W_{i,l,j}$.
We can re-use H\"{o}lder's inequality to show that the product of these variables will be sub-exponential
\[
\left\Vert\frac{\partial E_{\mathcal{L}}((\v{x}, \v{y}); \v{w}, \v{\epsilon})}{\partial W_{i,l,j}}\right\Vert_m
        \lesssim m,  \qquad\text{for every $m \geq 1$}\,.
\]

\textbf{Multiplicative Noise.} In the case of multiplicative noise we know that by Lemma~\ref{prop:accum_tail_properties}, the accumulated noise at layer $L$ will be of the same order as that at layer $L-1$, because we are not multiplying noise to the final layer, thus 
\[
\left\Vert \frac{\partial E_{\mathcal{L}}(\v{x}, \v{y})}{\partial \widetilde{h}_{L,j}}
 \right\Vert_m \lesssim m^\frac{{L}}{2}, \qquad m = 1, \dots, n_L\,.
\]
Repeating the analysis done for the additive case we can claim that,
\begin{align*}
  \left\Vert \frac{\partial E_{\mathcal{L}}(\cdot)}{\partial \widetilde{h}_{i}^{m}}  \right\Vert_m \lesssim m^\frac{{L}}{2}.
\end{align*}
We have now defined the first constitutive term of  $\partial E_{\mathcal{L}}(\cdot)/\partial W_{i,l,j}$. 
We now define $\partial \widetilde{h}_{i,j} / \partial W_{i,l,j}$.
\begin{equation*}
\frac{\partial \widetilde{h}_{i}^{m}}{ \partial W_{i,l,j}} =  \kappa'\left(W_{i,l,j}\widetilde{h}_{i-1,l}(\v{x})\right) \left(\widetilde{h}_{i-1,l}(\v{x})\right)\,. 
\end{equation*}
Here $\widetilde{h}_{i-1,l}(\v{x})$, which we know is sub-Weibull with a parameter $p=\frac{i}{2}$ by Lemma~\ref{prop:accum_tail_properties}, is once again multiplied to a bounded variable, $\kappa'$. 
Thus reapplying H\"{o}lder's inequality we obtain that 
\[
\left\Vert \frac{\partial \widetilde{h}_{i}^{m}}{ \partial W_{i,l,j}}\right\Vert_m  \lesssim m^\frac{i}{2}. 
\]
We can re-use H\"{o}lder's inequality to show that the product of these variables will be sub-Weibull with $p=\frac{L+i}{2}$
\[
\left\Vert\frac{\partial E_{\mathcal{L}}((\v{x}, \v{y}); \v{w}, \v{\epsilon})}{\partial W_{i,l,j}}\right\Vert_m
        \lesssim m^{\frac{L+i}{2}},  \qquad\text{for every $m \geq 1$}\,.
\]

\textbf{Mean of Noise.} Finally, by definition these gradients are zero mean, \[\frac{\partial E_{\mathcal{L}}((\v{x}, \v{y}); \v{w}, \v{\epsilon})}{\partial W_{i,l,j}} = \frac{\partial \Delta\mathcal{L}(\v{x}, \v{y})}{\partial W_{i,l,j}} - \expect_{\v{\epsilon}} \left[\frac{\partial \Delta\mathcal{L}(\v{x}, \v{y})}{\partial W_{i,l,j}}\right]. \]
\end{proof}

\subsection{Proof of Theorem~\ref{thm:AFLD}}

Before we proceed to the proof of Theorem~\ref{thm:AFLD}, we present
some technical results that will be used
in the proof of Theorem~\ref{thm:AFLD} later.

For the $d$-dimensional asymmetric fractional Langevin dynamics $\v{w}_{t}$,
its infinitesimal generator is given in the following proposition.

\begin{proposition}\label{prop:generator}
The asymmetric fractional Langevin dynamics $\v{w}_{t}$ 
has the infinitesimal generator:
\begin{align}
\mathcal{L}f(\v{w})=\sum_{i=1}^{d}
\left((b(\v{w},\alpha,\theta))_{i}-\varepsilon^{\alpha}\frac{\theta_{i}}{\cos(\alpha\pi/2)}\frac{\alpha}{(\alpha-1)\Gamma(1-\alpha)}\right)\frac{\partial f(\v{w})}{\partial w_{i}}
+\varepsilon^{\alpha}\sum_{i=1}^{d}\mathcal{H}^{\alpha,\theta_{i}}_{w_{i}}f(\v{w}),
\end{align}
where
\begin{align}
\mathcal{H}^{\alpha,\theta_{i}}_{w_{i}}f(\v{w})
&:=\left(\frac{1}{2}+\frac{\theta_{i}}{2}\right)\frac{1}{\cos(\alpha\pi/2)}
\frac{\alpha}{\Gamma(1-\alpha)}\int_{0}^{\infty}\frac{f(\v{w}+\xi\mathbf{e}_{i})-f(\v{w})-\partial_{w_{i}}f(\v{w})\xi}{\xi^{\alpha+1}}d\xi
\nonumber
\\
&\qquad
+\left(\frac{1}{2}-\frac{\theta_{i}}{2}\right)\frac{1}{\cos(\alpha\pi/2)}\frac{\alpha}{\Gamma(1-\alpha)}\int_{0}^{\infty}\frac{f(\v{w}-\xi\mathbf{e}_{i})-f(\v{w})+\partial_{w_{i}}f(\v{w})\xi}{\xi^{\alpha+1}}d\xi,\label{defn:H:operator}
\end{align}
where $\mathbf{e}_{i}$ is the $i$-th basis vector in $\mathbb{R}^{d}$, i.e. a $d$-dimensional unit vector with $i$-th coordinate being $1$ and all the other coordinates being $0$.
\end{proposition}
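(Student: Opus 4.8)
The plan is to read the generator off directly from the L\'evy--It\^o decomposition of the driving noise, together with the standard formula for the infinitesimal generator of an SDE driven by a pure-jump L\'evy process. Writing \eqref{eqn:AFLD} as $d\v{w}_t = b(\v{w}_{t-},\alpha,\theta)\,dt + d\v{Z}_t$ with $\v{Z}_t := \varepsilon\v{L}_t^{\alpha,\theta}$, I would first invoke the classical result (see, e.g., \citet{duan2015} and the references therein): if $\v{Z}$ is a L\'evy process with characteristic triplet $(\v{\beta},0,\nu_{\v{Z}})$ relative to the truncation function $\v{y}\mapsto \v{y}\mathbf{1}_{\{|\v{y}|\le 1\}}$, then for $f\in C_b^2(\mathbb{R}^d)$ the generator of $\v{w}_t$ is
\begin{align*}
\mathcal{L}f(\v{w}) &= \big(b(\v{w},\alpha,\theta)+\v{\beta}\big)\cdot\nabla f(\v{w}) \\
&\quad + \int_{\mathbb{R}^d\setminus\{0\}}\Big[f(\v{w}+\v{y})-f(\v{w})-\v{y}\cdot\nabla f(\v{w})\,\mathbf{1}_{\{|\v{y}|\le 1\}}\Big]\,\nu_{\v{Z}}(d\v{y}).
\end{align*}
So the task reduces to identifying $\v{\beta}$ and $\nu_{\v{Z}}$ and then massaging the jump integral into the form \eqref{defn:H:operator}.

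Next I would use the fact that $\v{L}^{\alpha,\theta}$ has independent components, so its L\'evy measure is carried by the coordinate axes, $\nu_{\v{L}^{\alpha,\theta}} = \sum_{i=1}^d \nu_i\otimes\delta_0^{\otimes(d-1)}$, where $\nu_i$ is the scalar $\alpha$-stable L\'evy measure with skewness $\theta_i$ recorded in Section~\ref{sec:fet_metastability}, i.e. $\nu_i(dy)=\big(\tfrac{1-\theta_i}{2}\mathbf{1}_{y<0}+\tfrac{1+\theta_i}{2}\mathbf{1}_{y>0}\big)c_\alpha|y|^{-1-\alpha}\,dy$ with $c_\alpha=\alpha/(\Gamma(1-\alpha)\cos(\pi\alpha/2))$. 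The $\alpha$-self-similarity of stable processes gives $\nu_{\v{Z}}=\varepsilon^\alpha\sum_i\nu_i\otimes\delta_0^{\otimes(d-1)}$ and shows that the canonical drift $\v{\beta}$ is the ($\varepsilon$-independent) one fixed by the normalisation $\v{L}_t^{\alpha,\theta}\sim\mathcal{S}_\alpha(t^{1/\alpha},\theta,0)$. Since jumps occur only along the axes, the jump integral then splits coordinate-wise into $\varepsilon^\alpha\sum_i\int_{\mathbb{R}}\big[f(\v{w}+y\mathbf{e}_i)-f(\v{w})-y\,\partial_{w_i}f(\v{w})\mathbf{1}_{\{|y|\le1\}}\big]\nu_i(dy)$.

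The third step is to pass from this $\mathbf{1}_{\{|y|\le1\}}$-compensated integrand to the fully compensated one that defines $\mathcal{H}^{\alpha,\theta_i}_{w_i}$. Because $1<\alpha<2$, one has $\int_{|y|>1}|y|\,\nu_i(dy)<\infty$, so dropping the truncation turns each coordinate integral into $\varepsilon^\alpha\mathcal{H}^{\alpha,\theta_i}_{w_i}f(\v{w})+\varepsilon^\alpha\,\partial_{w_i}f(\v{w})\int_{|y|>1}y\,\nu_i(dy)$, and substituting $y=\pm\xi$ on the two half-lines identifies the first term with exactly \eqref{defn:H:operator}. A one-line computation gives $\int_{|y|>1}y\,\nu_i(dy)=\tfrac{c_\alpha}{2}\big[(1+\theta_i)-(1-\theta_i)\big]\int_1^\infty y^{-\alpha}\,dy=\tfrac{\theta_i c_\alpha}{\alpha-1}=\tfrac{\theta_i}{\cos(\alpha\pi/2)}\tfrac{\alpha}{(\alpha-1)\Gamma(1-\alpha)}$. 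Collecting the SDE drift $b$, the canonical drift $\v{\beta}$ of $\v{Z}$, and this correction term, I expect the coefficient of $\partial_{w_i}f$ to come out as $(b(\v{w},\alpha,\theta))_i-\varepsilon^\alpha\tfrac{\theta_i}{\cos(\alpha\pi/2)}\tfrac{\alpha}{(\alpha-1)\Gamma(1-\alpha)}$ with the nonlocal part $\varepsilon^\alpha\sum_i\mathcal{H}^{\alpha,\theta_i}_{w_i}$, which is the claimed identity.

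The hard part will be the sign bookkeeping in this last step: one has to pin down the precise convention under which $\v{L}_t^{\alpha,\theta}\sim\mathcal{S}_\alpha(t^{1/\alpha},\theta,0)$ (the parametrisation of \citet{ST1994,duan2015}), extract the resulting canonical drift $\v{\beta}$, and check that $\v{\beta}$ combines with the compensation correction to leave exactly $-\varepsilon^\alpha\tfrac{\theta_i}{\cos(\alpha\pi/2)}\tfrac{\alpha}{(\alpha-1)\Gamma(1-\alpha)}$ and not, say, its negative or a shifted constant. This amounts to matching the characteristic exponent of $\nu_i$ (compensated as in $\mathcal{H}^{\alpha,\theta_i}_{w_i}$) against the stated law while carefully tracking the signs of $\Gamma(1-\alpha)<0$ and $\cos(\pi\alpha/2)<0$ on $\alpha\in(1,2)$; once the triplet of $\v{L}^{\alpha,\theta}$ is nailed down, the generator formula, the coordinate-wise splitting, and the elementary tail integral are all routine.
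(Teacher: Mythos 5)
Your overall route---read the generator off the L\'evy triplet of $\varepsilon\v{L}^{\alpha,\theta}$, use independence of the components to split the jump integral coordinate-wise, and trade the truncated compensation for the full one---is in spirit the same computation the paper performs (the paper also reduces to $d=1$, works with the $\mathbf{1}_{\{|z|\le 1\}}$-compensated jump integral, and then adds the SDE drift $b\cdot\nabla f$), and your identification of the coordinate L\'evy measures and of the tail integral $\int_{|y|>1}y\,\nu_i(dy)=\theta_i c_\alpha/(\alpha-1)$, $c_\alpha=\alpha/(\Gamma(1-\alpha)\cos(\pi\alpha/2))$, is correct. The problem is that your proposal stops exactly where the entire content of the proposition lies: you never determine the canonical drift $\beta$ of $\varepsilon\v{L}^{\alpha,\theta}$, you only ``expect'' that it combines with the compensation correction to leave the constant $-\varepsilon^{\alpha}\theta_i\,\alpha/\big((\alpha-1)\Gamma(1-\alpha)\cos(\alpha\pi/2)\big)$ in front of $\partial_{w_i}f$. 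This is not deferrable sign bookkeeping: under the paper's own convention ($\mu=0$ in the parametrisation of \citet{ST1994}, hence mean zero for $\alpha\in(1,2)$), the triplet relative to the truncation $y\mathbf{1}_{\{|y|\le 1\}}$ has $\beta_i=-\int_{|y|>1}y\,\nu_{Z,i}(dy)$, i.e.\ exactly the negative of the correction you computed, so in your decomposition the two contributions to the drift cancel and the coefficient of $\partial_{w_i}f$ comes out as $(b(\v{w},\alpha,\theta))_i$ alone, with the nonlocal part $\varepsilon^{\alpha}\sum_i\mathcal{H}^{\alpha,\theta_i}_{w_i}f$. In other words, the completion of your argument that is forced by your own setup does not deliver the displayed drift constant; asserting that it will is the gap.

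The paper handles this decisive step differently: it writes the scalar generator as the $\mathbf{1}_{\{|z|\le 1\}}$-compensated jump integral plus an undetermined drift $a f'(w)$, pins down $a=-\theta c_\alpha/(\alpha-1)$ by imposing $\mathcal{G}^{\alpha,\theta}w=0$ (the analytic expression of the zero-location normalisation), and only then re-expresses the result through the fully compensated operator $\mathcal{H}^{\alpha,\theta}$, carrying that constant into the statement; the multidimensional case and the addition of $b$ are then immediate, as in your sketch. If you want to rescue your triplet-based argument you must carry out this normalisation explicitly rather than postpone it, and you will then have to confront head-on the cancellation noted above, i.e.\ track whether the term $\partial_{w_i}f\int_{|y|>1}y\,\nu_{Z,i}(dy)$ created by dropping the truncation survives or is absorbed by $\beta$---this is precisely the point where the paper's passage from the truncated form to $\mathcal{H}^{\alpha,\theta}$ requires the most care, and the reconciliation cannot be waved off as routine. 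As written, your proposal establishes (modulo an unverified claim about $\beta$) the identity $\mathcal{L}f=b\cdot\nabla f+\varepsilon^{\alpha}\sum_i\mathcal{H}^{\alpha,\theta_i}_{w_i}f$, not the formula in the statement.
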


\begin{proof}[Proof of Proposition~\ref{prop:generator}]
Since the asymmetric fractional Langevin dynamics
is driven by the $d$-dimensional $\v{L}_{t}^{\alpha,\theta}$,
it suffices to show that
the infinitesimal generator of $d$-dimensional $\v{L}_{t}^{\alpha,\theta}$ is given by
\begin{align}
\sum_{i=1}^{d}\mathcal{G}^{\alpha,\theta_{i}}_{w_{i}}f(\v{w})
=\sum_{i=1}^{d}\mathcal{H}^{\alpha,\theta_{i}}_{w_{i}}f(\v{w})
-\sum_{i=1}^{d}\frac{\theta_{i}}{\cos(\alpha\pi/2)}
\frac{\alpha}{(\alpha-1)\Gamma(1-\alpha)}\frac{\partial}{\partial w_{i}}f(\v{w}).
\end{align}

We start the proof by considering the dimension $d=1$ first.
The one-dimensional $\alpha$-stable L\'{e}vy motion with tail-index $1<\alpha<2$
and skewness $\theta\in (-1,1)$ has the infinitesimal generator given by:
\begin{align}
\mathcal{G}^{\alpha,\theta}f(w)
&:=\frac{1+\theta}{2}\frac{1}{\cos(\alpha\pi/2)}
\frac{\alpha}{\Gamma(1-\alpha)}
\int_{z>0}[f(w+z)-f(w)-1_{|z|\leq 1}f'(w)z]\frac{dz}{|z|^{1+\alpha}}\nonumber
\\
&\quad
+\frac{1-\theta}{2}\frac{1}{\cos(\alpha\pi/2)}
\frac{\alpha}{\Gamma(1-\alpha)}
\int_{z<0}[f(w+z)-f(w)-1_{|z|\leq 1}f'(w)z]\frac{dz}{|z|^{1+\alpha}}
+af'(w),
\end{align}
where $a\in\mathbb{R}$ is chosen so that $\mathcal{G}^{\alpha,\theta}w=0$
to be consistent with $\mu=0$ in $\v{L}_{t}^{\alpha,\theta}-\v{L}_{s}^{\alpha,\theta}\sim\mathcal{S}_{\alpha}((t-s)^{1/\alpha},\theta,\mu)$ for any $t>s$.
Thus, we can compute that
\begin{align}
\mathcal{G}^{\alpha,\theta}w
=\frac{1+\theta}{2}\frac{1}{\cos(\alpha\pi/2)}
\frac{\alpha}{\Gamma(1-\alpha)}
\int_{z>1}\frac{dz}{z^{\alpha}}
-\frac{1-\theta}{2}\frac{1}{\cos(\alpha\pi/2)}
\frac{\alpha}{\Gamma(1-\alpha)}
\int_{z>1}\frac{dz}{z^{\alpha}}+a=0,
\end{align}
which yields that
\begin{align}
a=-\theta\frac{1}{\cos(\alpha\pi/2)}
\frac{\alpha}{(\alpha-1)\Gamma(1-\alpha)}.
\end{align}
Therefore, with $1<\alpha<2$,
\begin{align}
\mathcal{G}^{\alpha,\theta}f(w)
=\mathcal{H}^{\alpha,\theta}f(w)
-\theta\frac{1}{\cos(\alpha\pi/2)}
\frac{\alpha}{(\alpha-1)\Gamma(1-\alpha)}f'(w),
\end{align}
where
\begin{align}
\mathcal{H}^{\alpha,\theta}f(w)
&:=\left(\frac{1}{2}+\frac{\theta}{2}\right)\frac{1}{\cos(\alpha\pi/2)}
\frac{\alpha}{\Gamma(1-\alpha)}\int_{0}^{\infty}\frac{f(w+\xi)-f(w)-f'(w)\xi}{\xi^{\alpha+1}}d\xi
\nonumber
\\
&\qquad\qquad
+\left(\frac{1}{2}-\frac{\theta}{2}\right)\frac{1}{\cos(\alpha\pi/2)}\frac{\alpha}{\Gamma(1-\alpha)}\int_{0}^{\infty}\frac{f(w-\xi)-f(w)+f'(w)\xi}{\xi^{\alpha+1}}d\xi.
\end{align}
Similarly, for the multi-dimensional case, we can show
that the infinitesimal generator for $\v{L}_{t}^{\alpha,\theta}$ 
is given by:
\begin{align}
\sum_{i=1}^{d}\mathcal{G}^{\alpha,\theta_{i}}_{w_{i}}f(\v{w})
=\sum_{i=1}^{d}\mathcal{H}^{\alpha,\theta_{i}}_{w_{i}}f(\v{w})
-\sum_{i=1}^{d}\frac{\theta_{i}}{\cos(\alpha\pi/2)}
\frac{\alpha}{(\alpha-1)\Gamma(1-\alpha)}\frac{\partial}{\partial w_{i}}f(\v{w}),
\end{align}
where
\begin{align}
\mathcal{H}^{\alpha,\theta_{i}}_{w_{i}}f(\v{w})
&:=\left(\frac{1}{2}+\frac{\theta_{i}}{2}\right)\frac{1}{\cos(\alpha\pi/2)}
\frac{\alpha}{\Gamma(1-\alpha)}\int_{0}^{\infty}\frac{f(\v{w}+\xi\mathbf{e}_{i})-f(\v{w})-\partial_{w_{i}}f(\v{w})\xi}{\xi^{\alpha+1}}d\xi
\nonumber
\\
&\qquad\qquad
+\left(\frac{1}{2}-\frac{\theta_{i}}{2}\right)\frac{1}{\cos(\alpha\pi/2)}\frac{\alpha}{\Gamma(1-\alpha)}\int_{0}^{\infty}\frac{f(\v{w}-\xi\mathbf{e}_{i})-f(\v{w})+\partial_{w_{i}}f(\v{w})\xi}{\xi^{\alpha+1}}d\xi,
\end{align}
where $\mathbf{e}_{i}$ is the $i$-th basis vector in $\mathbb{R}^{d}$, i.e. a $d$-dimensional unit vector with $i$-th coordinate being $1$ and all the other coordinates being $0$.
\end{proof}

We recall that
for any $\theta_{i}\in(-1,1)$, $1\leq i\leq d$, and $1<\alpha<2$, we have
\begin{align}
(b(\v{w},\alpha,\theta))_{i}
=\frac{\varepsilon^{\alpha}}{\varphi(\v{w})}\mathcal{D}^{\alpha-2,-\theta_{i}}_{w_{i}}(\partial_{w_{i}}\varphi(\v{w})),
\qquad
\varphi(\v{w})=e^{-\varepsilon^{-\alpha}f(\v{w})},
\end{align}
where
\begin{align}
\mathcal{D}^{\alpha-2,-\theta_{i}}_{w_{i}}(\partial_{w_{i}}\varphi(\v{w}))
:=\frac{-1}{2\cos(\alpha\pi/2)}\left[(1-\theta_{i})\mathcal{I}^{2-\alpha}_{+,w_{i}}(\partial_{w_{i}}\varphi(\v{w}))
+(1+\theta_{i})\mathcal{I}^{2-\alpha}_{-,w_{i}}(\partial_{w_{i}}\varphi(\v{w}))\right],
\end{align}
and
\begin{align}
\mathcal{I}^{2-\alpha}_{\pm,w_{i}}(\partial_{w_{i}}\varphi(\v{w}))
:=\frac{1}{\Gamma(2-\alpha)}\int_{0}^{\infty}\frac{\partial_{w_{i}}\varphi(\v{w}\pm\xi\mathbf{e}_{i})}{\xi^{\alpha-1}}d\xi.
\end{align}

In the next result, we provide an alternative formula
for $b(\v{w},\alpha,\theta)=((b(\v{w},\alpha,\theta))_{i},1\leq i\leq d)$
that is defined in \eqref{b:defn}. 

\begin{proposition}\label{prop:b:D}
For any $\theta_{i}\in(-1,1)$, $1\leq i\leq d$, and $1<\alpha<2$, we have
\begin{align}
(b(\v{w},\alpha,\theta))_{i}
&:=\frac{\varepsilon^{\alpha}}{\varphi(\v{w})}\left(\frac{1}{2}-\frac{\theta_{i}}{2}\right)\frac{1}{\cos(\alpha\pi/2)}
\frac{\alpha}{\Gamma(1-\alpha)}\int_{0}^{\infty}\frac{\int_{w_{i}}^{w_{i}+\xi}\varphi(\v{w}+(y-w_{i})\mathbf{e}_{i})dy-\varphi(\v{w})\xi}{\xi^{\alpha+1}}d\xi
\nonumber
\\
&\qquad
+\frac{\varepsilon^{\alpha}}{\varphi(\v{w})}\left(\frac{1}{2}+\frac{\theta_{i}}{2}\right)\frac{1}{\cos(\alpha\pi/2)}\frac{\alpha}{\Gamma(1-\alpha)}\int_{0}^{\infty}\frac{\int_{w_{i}}^{w_{i}-\xi}\varphi(\v{w}+(y-w_{i})\mathbf{e}_{i})dy+\varphi(\v{w})\xi}{\xi^{\alpha+1}}d\xi
\nonumber
\\
&\qquad\qquad
+\varepsilon^{\alpha}\theta_{i}\frac{1}{\cos(\alpha\pi/2)}\frac{\alpha}{(\alpha-1)\Gamma(1-\alpha)}\,, 
\end{align}
with $\varphi(\v{w}):=\exp(-\varepsilon^{-\alpha}f(\v{w}))$. 
\end{proposition}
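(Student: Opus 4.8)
The plan is to reduce immediately to the one-dimensional case: since $\v{L}_t^{\alpha,\theta}$ has independent coordinates, it suffices to prove the identity coordinate by coordinate, so fix $i$ and abbreviate $w=w_i$, $\theta=\theta_i$, $\mathcal{I}^{2-\alpha}_\pm=\mathcal{I}^{2-\alpha}_{\pm,w_i}$. By the definition recalled just before the statement, $(b(\v{w},\alpha,\theta))_i=\frac{\varepsilon^\alpha}{\varphi(w)}\cdot\frac{-1}{2\cos(\alpha\pi/2)}\big[(1-\theta)\mathcal{I}^{2-\alpha}_+(\partial_w\varphi)+(1+\theta)\mathcal{I}^{2-\alpha}_-(\partial_w\varphi)\big]$, where $\mathcal{I}^{2-\alpha}_\pm(\partial_w\varphi)(w)=\frac{1}{\Gamma(2-\alpha)}\int_0^\infty \xi^{1-\alpha}\,\partial_w\varphi(w\pm\xi)\,d\xi$. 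So the whole task is to rewrite each Weyl fractional integral of $\partial_w\varphi$ purely in terms of $\varphi$ itself, and the natural tool is to integrate by parts twice, moving the two ``missing'' orders of differentiation off $\partial_w\varphi$ and onto the power-law kernel $\xi^{1-\alpha}$ (turning it into $\xi^{-\alpha-1}$ and producing an inner antiderivative $\int_w^{w\pm\xi}\varphi$).

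Concretely, for the ``$+$'' piece I would first integrate by parts with primitive $\varphi(w+\xi)-\varphi(w)$ of $\partial_w\varphi(w+\xi)$ (the constant being forced by the requirement that the $\xi\to0$ boundary term vanish), giving $\int_0^\infty\xi^{1-\alpha}\partial_w\varphi(w+\xi)\,d\xi=(\alpha-1)\int_0^\infty\xi^{-\alpha}\big(\varphi(w+\xi)-\varphi(w)\big)d\xi$; then integrate by parts again with primitive $\int_w^{w+\xi}\varphi(y)\,dy-\varphi(w)\xi$ (again fixed to vanish at $\xi=0$) to obtain $\alpha(\alpha-1)\int_0^\infty\xi^{-\alpha-1}\big(\int_w^{w+\xi}\varphi(y)dy-\varphi(w)\xi\big)d\xi$. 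Using $\Gamma(2-\alpha)=(1-\alpha)\Gamma(1-\alpha)$, i.e. $\frac{\alpha(\alpha-1)}{\Gamma(2-\alpha)}=\frac{-\alpha}{\Gamma(1-\alpha)}$, and repeating verbatim for the ``$-$'' piece (where $\int_w^{w-\xi}\varphi(y)dy=-\int_0^\xi\varphi(w-s)ds$, so the subtracted term becomes $+\varphi(w)\xi$), I would reassemble with the weights $(1\mp\theta)$ and the prefactor $\frac{-1}{2\cos(\alpha\pi/2)}$; writing $(1\mp\theta)=2(\tfrac12\mp\tfrac\theta2)$ reproduces the two integral terms in the statement. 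The remaining $\theta$-proportional constant $\varepsilon^\alpha\theta_i\frac{1}{\cos(\alpha\pi/2)}\frac{\alpha}{(\alpha-1)\Gamma(1-\alpha)}$ has to be carried along from the affine normalisation of $L^{\alpha,\theta}$ — it is precisely (minus $\varepsilon^\alpha$ times) the drift coefficient $a$ appearing in the generator of Proposition~\ref{prop:generator}, which enters because the process is normalised to zero location parameter — and I would cross-check the whole bookkeeping against the $\theta=0$ case of \citet{FLMC}.

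The step I expect to be the main obstacle is the boundary-term analysis, because the antiderivative constants are not free: every boundary term generated by the two integrations by parts must be shown to vanish, and this is exactly where the standing hypotheses are used. Near $\xi=0$ one Taylor-expands $\varphi$ around $w$, so the successive primitives are $O(\xi)$ and $O(\xi^2)$ and, against $\xi^{1-\alpha}$ and $\xi^{-\alpha}$, leave $O(\xi^{2-\alpha})$ boundary terms that vanish precisely because $\alpha<2$ — and any other choice of primitive would leave a genuinely divergent $\xi^{1-\alpha}\varphi(w)$ term, which is why the ``$-\varphi(w)\xi$''-style corrections in the statement are dictated rather than cosmetic. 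At $\xi\to\infty$ one uses that $\varphi\in L^1(\mathbb{R})$ forces $\varphi(w\pm\xi)\to0$, together with the integrability and exponential decay of $\partial_w\varphi$ from Assumption~\ref{H0}, so the boundary products decay like $\xi^{1-\alpha}\to0$ since $\alpha>1$. Getting all of these, together with the $\Gamma$-function and $\cos(\alpha\pi/2)$ constants and the $\theta$-linear term, consistent and sign-correct is the delicate part of the proof.
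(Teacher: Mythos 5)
Your reduction to one dimension and the two integrations by parts are sound, and in fact they supply precisely the computation that the paper's own proof omits: the paper's proof of this proposition is essentially definitional (it unfolds $\mathcal{D}^{\alpha-2,-\theta_{i}}$ into the one-sided fractional integrals $\mathcal{I}^{2-\alpha}_{\pm,w_{i}}$ and stops), whereas you carry out the passage from $\int_0^\infty \xi^{1-\alpha}\partial_w\varphi(w\pm\xi)\,d\xi$ to $\alpha(\alpha-1)\int_0^\infty \xi^{-\alpha-1}\bigl(\int_w^{w\pm\xi}\varphi(y)\,dy \mp \varphi(w)\xi\bigr)d\xi$, with the correct compensated primitives, the correct use of $1<\alpha<2$ at both boundaries, and the identity $\Gamma(2-\alpha)=(1-\alpha)\Gamma(1-\alpha)$ together with $\cos((\alpha-2)\pi/2)=-\cos(\alpha\pi/2)$ to land exactly on the two integral terms of the statement with the right weights $(\tfrac12\mp\tfrac{\theta_i}{2})$. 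That part is correct and complete.

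The genuine gap is the last, $\theta$-proportional constant $\varepsilon^{\alpha}\theta_{i}\frac{1}{\cos(\alpha\pi/2)}\frac{\alpha}{(\alpha-1)\Gamma(1-\alpha)}$. Your own computation shows it cannot come out of the integration by parts: with the compensated primitives all boundary terms vanish, and any surviving boundary contribution would in any case involve $\varphi$ and $w$, whereas this constant is independent of both. Hence, starting from \eqref{b:defn} with the Riesz--Feller operator as written in \eqref{D:gamma:theta}, the identity you can actually prove is the one \emph{without} the constant. Your proposal papers over this by saying the constant ``has to be carried along from the affine normalisation of $L^{\alpha,\theta}$'', but the definition of $b$ makes no reference to the driving process, so this is an assertion, not a derivation. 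What is really going on is a convention issue: the constant is exactly $-\varepsilon^{\alpha}a$, where $a$ is the compensator-induced drift of the zero-mean-normalised stable process appearing in the generator of Proposition~\ref{prop:generator}, and it is needed there so that the Fokker--Planck computation in the proof of Theorem~\ref{thm:AFLD} closes (without it, stationarity of the Gibbs measure would fail for $\theta\neq 0$). To make your proof complete you must either fix the convention for $\mathcal{D}^{\gamma,-\theta}$ (Feller parametrisation including the linear correction) so that the constant is part of the operator and show this explicitly, or state that the two-integral formula is what follows from \eqref{D:gamma:theta} and that the additional constant is the drift correction required by Proposition~\ref{prop:generator}; as written, your final step does not establish the formula in the statement. (Minor point: $\varphi\in L^1$ alone does not give $\varphi(w\pm\xi)\to 0$; you should invoke the coercivity of $f$, and note that the decisive decay at infinity is $\xi^{1-\alpha}\to 0$, which only needs $\alpha>1$.)
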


\begin{proof}[Proof of Proposition~\ref{prop:b:D}]
Let us first consider the case
when $\theta_{i}=0$, $1\leq i\leq d$. 
We have
\begin{align}
\mathcal{D}^{\alpha-2}_{w_{i}}(\partial_{w_{i}}\varphi(\v{w}))
&:=-\mathcal{I}^{2-\alpha}_{w_{i}}(\partial_{w_{i}}\varphi(\v{w}))
\nonumber
\\
&=\frac{-1}{2\cos(\alpha\pi/2)}\left[\mathcal{I}^{2-\alpha}_{+,w_{i}}(\partial_{w_{i}}\varphi(\v{w}))
+\mathcal{I}^{2-\alpha}_{-,w_{i}}(\partial_{w_{i}}\varphi(\v{w}))\right],
\end{align}
where
\begin{align}
&\mathcal{I}^{2-\alpha}_{+,w_{i}}(\partial_{w_{i}}\varphi(\v{w}))
:=\frac{1}{\Gamma(2-\alpha)}\int_{0}^{\infty}\frac{\partial_{w_{i}}\varphi(\v{w}+\xi\mathbf{e}_{i})}{\xi^{\alpha-1}}d\xi,
\\
&\mathcal{I}^{2-\alpha}_{-,w_{i}}(\partial_{w_{i}}\varphi(\v{w}))
:=\frac{1}{\Gamma(2-\alpha)}\int_{0}^{\infty}\frac{\partial_{w_{i}}\varphi(\v{w}-\xi\mathbf{e}_{i})}{\xi^{\alpha-1}}d\xi.
\end{align}

Similarly, when $\theta_{i}\in(-1,1)$, $1\leq i\leq d$, and $1<\alpha<2$, we have
\begin{align}
(b(\v{w},\alpha,\theta))_{i}
=\frac{\varepsilon^{\alpha}}{\varphi(\v{w})}\mathcal{D}^{\alpha-2,-\theta_{i}}_{w_{i}}(\partial_{w_{i}}\varphi(\v{w})),
\qquad
\varphi(\v{w})=e^{-\varepsilon^{-\alpha}f(\v{w})},
\end{align}
where
\begin{align}
\mathcal{D}^{\alpha-2,-\theta_{i}}_{w_{i}}(\partial_{w_{i}}\varphi(\v{w}))
:=\frac{-1}{2\cos(\alpha\pi/2)}\left[(1-\theta_{i})\mathcal{I}^{2-\alpha}_{+,w_{i}}(\partial_{w_{i}}\varphi(\v{w}))
+(1+\theta_{i})\mathcal{I}^{2-\alpha}_{-,w_{i}}(\partial_{w_{i}}\varphi(\v{w}))\right].
\end{align}
\end{proof}

Now, we are ready to prove Theorem~\ref{thm:AFLD}.

\begin{proof}[Proof of Theorem~\ref{thm:AFLD}]
We recall from Proposition~\ref{prop:generator}
that the asymmetric fractional Langevin dynamics $\v{w}_{t}$ 
has the infinitesimal generator:
\begin{align} \label{eqn:generator}
\mathcal{L}f(\v{w})=\sum_{i=1}^{d}
\left((b(\v{w},\alpha,\theta))_{i}-\varepsilon^{\alpha}\frac{\theta_{i}}{\cos(\alpha\pi/2)}\frac{\alpha}{(\alpha-1)\Gamma(1-\alpha)}\right)\frac{\partial f(\v{w})}{\partial w_{i}}
+\varepsilon^{\alpha}\sum_{i=1}^{d}\mathcal{H}^{\alpha,\theta_{i}}_{w_{i}}f(\v{w}),
\end{align}
where $\mathcal{H}^{\alpha,\theta_{i}}_{w_{i}}f(\v{w})$ is given in \eqref{defn:H:operator}.

It follows that the adjoint operator $\mathcal{L}^{\ast}$
of $\mathcal{L}$ is given by:
\begin{align}
\mathcal{L}^{\ast}f(\v{w})
=-\sum_{i=1}^{d}\frac{\partial}{\partial w_{i}}\left(\left((b(\v{w},\alpha,\theta))_{i}
-\varepsilon^{\alpha}\frac{\theta_{i}}{\cos(\alpha\pi/2)}\frac{\alpha}{(\alpha-1)\Gamma(1-\alpha)}\right)f(\v{w})\right)
+\varepsilon^{\alpha}\sum_{i=1}^{d}\mathcal{H}^{\alpha,-\theta_{i}}_{w_{i}}f(\v{w})\,.
\end{align}

The probability density function $p(\v{w},t)$ of the 
L\'{e}vy-driven SDE
satisfies the Fokker-Planck equation \cite{SLDYL}:
\begin{align}
\partial_t p(\v{w},t)
&=\mathcal{L}^{\ast}p(\v{w},t)
\nonumber
\\
&=-\sum_{i=1}^{d}\frac{\partial}{\partial w_i} \left[\left((b(\v{w},\alpha,\theta))_{i}-\varepsilon^{\alpha}\theta_{i}\frac{1}{\cos(\alpha\pi/2)}\frac{\alpha}{(\alpha-1)\Gamma(1-\alpha)}\right) p(\v{w},t)\right] 
+\varepsilon^{\alpha}\sum_{i=1}^{d}\mathcal{H}_{w_{i}}^{\alpha,-\theta_{i}}p(\v{w},t).
\end{align}
We can compute that
\begin{align}
&\sum_{i=1}^{d}\frac{\partial}{\partial w_i} \left[\left((b(\v{w},\alpha,\theta))_{i}-\varepsilon^{\alpha}\theta_{i}\frac{1}{\cos(\alpha\pi/2)}\frac{\alpha}{(\alpha-1)\Gamma(1-\alpha)}\right)\varphi(\v{w})\right] 
+\varepsilon^{\alpha}\sum_{i=1}^{d}\mathcal{H}_{w_{i}}^{\alpha,-\theta_{i}}\varphi(\v{w})
\nonumber
\\
&=-\varepsilon^{\alpha}\sum_{i=1}^{d}\frac{\partial}{\partial w_i}\left[\left(\frac{1}{2}-\frac{\theta_{i}}{2}\right)\frac{1}{\cos(\alpha\pi/2)}
\frac{\alpha}{\Gamma(1-\alpha)}\int_{0}^{\infty}\frac{\int_{w_{i}}^{w_{i}+\xi}\varphi(\v{w}+(y-w_{i})\mathbf{e}_{i})dy-\varphi(\v{w})\xi}{\xi^{\alpha+1}}d\xi\right]
\nonumber
\\
&\quad
-\varepsilon^{\alpha}\sum_{i=1}^{d}\frac{\partial}{\partial w_i}\left[\left(\frac{1}{2}+\frac{\theta_{i}}{2}\right)\frac{1}{\cos(\alpha\pi/2)}\frac{\alpha}{\Gamma(1-\alpha)}\int_{0}^{\infty}\frac{\int_{w_{i}}^{w_{i}-\xi}\varphi(\v{w}+(y-w_{i})\mathbf{e}_{i})dy+\varphi(\v{w})\xi}{\xi^{\alpha+1}}d\xi\right]
\nonumber
\\
&\qquad
+\varepsilon^{\alpha}\sum_{i=1}^{d}\mathcal{H}_{w_{i}}^{\alpha,-\theta_{i}}\varphi(\v{w})
\nonumber
\\
&=
-\varepsilon^{\alpha}\sum_{i=1}^{d}\left(\frac{1}{2}-\frac{\theta_{i}}{2}\right)\frac{1}{\cos(\alpha\pi/2)}
\frac{\alpha}{\Gamma(1-\alpha)}\int_{0}^{\infty}\frac{\varphi(\v{w}+\xi\mathbf{e}_{i})-\varphi(\v{w})-\partial_{w_{i}}\varphi(\v{w})\xi}{\xi^{\alpha+1}}d\xi
\nonumber
\\
&\qquad
-\varepsilon^{\alpha}\sum_{i=1}^{d}\left(\frac{1}{2}+\frac{\theta_{i}}{2}\right)\frac{1}{\cos(\alpha\pi/2)}\frac{\alpha}{\Gamma(1-\alpha)}\int_{0}^{\infty}\frac{\varphi(\v{w}-\xi\mathbf{e}_{i})-\varphi(\v{w})+\partial_{w_{i}}\varphi(\v{w})\xi}{\xi^{\alpha+1}}d\xi
\nonumber
\\
&\qquad\qquad\qquad\qquad\qquad
+\varepsilon^{\alpha}\sum_{i=1}^{d}\mathcal{H}_{w_{i}}^{\alpha,-\theta_{i}}\varphi(\v{w})
=0.
\nonumber
\end{align}
Hence, we conclude that 
$\pi(d\v{w})=\exp(-\varepsilon^{-\alpha}f(\v{w}))d\v{w}/\int_{\mathbb{R}^{d}}\exp(-\varepsilon^{-\alpha}(\v{w}))d\v{w}$
is an invariant distribution of the asymmetric fractional Langevin dynamics \eqref{eqn:AFLD}.
Finally, if $b(\v{w},\alpha,\theta)$ is Lipschitz continuous in $\v{w}$, 
then $\pi(d\v{w})$ is the unique invariant distribution of \eqref{eqn:AFLD}, 
see e.g. \citet{SLDYL}.
\end{proof}



\subsection{Proof of Theorem~\ref{thm:approx}}

Theorem~\ref{thm:approx} provides
a first-order approximation of
the fractional derivative $\mathcal{D}^{\gamma,-\theta}$ when $d=1$.

Based on the work of \citet{MT2004}, we will show a first-order approximation for the asymmetric fractional derivative $\mathcal{D}^{-\gamma,-\theta}$ when $-1 < \gamma < 0$ by using the shifted Gr\"unwald-Letnikov difference operators defined in~\eqref{eqn:left:A} and ~\eqref{eqn:right:B}. Before we proceed to the proof of Theorem~\ref{thm:approx}, 
we will first present the Fourier transform property from equations (1) and (12) in~\citet{TZD2015}.

\begin{property}[\cite{TZD2015}]
Let $-1<\gamma<0$ and $f \in L^1(\mathbb{R})$. The Fourier transform of $\mathcal{I}_{-}^{-\gamma}f$
and $\mathcal{I}_{+}^{-\gamma}f$ satisfy the following identities:
\begin{equation} \label{eqn:FT}
\mathcal{F}\left[\mathcal{I}^{-\gamma}_{-}f(w)\right](\zeta) = (i\zeta)^{\gamma}\hat{f}(\zeta),\quad \mathcal{F}\left[\mathcal{I}^{-\gamma}_{+}f(w)\right](\zeta) = (-i\zeta)^{\gamma}\hat{f}(\zeta),
\end{equation}
where $\hat{f}(\zeta)$ denotes the Fourier transform of $f$, such that $\hat{f}(\zeta) = \int_{-\infty}^{\infty} e^{-i\zeta w}f(w)dw.$
\end{property}

Now, we are ready to prove Theorem~\ref{thm:approx}.

\begin{proof}[Proof of Theorem~\ref{thm:approx}]
The main idea for the proof of Theorem~\ref{thm:approx} is to use the Fourier transform to estimate the difference between $\mathcal{F}\left[\left((1+\theta)\mathcal{A}^{\gamma}_{h,p} + (1-\theta)\mathcal{B}^{\gamma}_{h,q}\right)f(w)\right](\zeta) $ and $\mathcal{F}\left[\left((1+\theta)\mathcal{I}^{-\gamma}_{-} + (1-\theta)\mathcal{I}^{-\gamma}_{+}\right)f(w)\right](\zeta)$, and then apply the inverse Fourier transform to complete the proof. By the linearity of Fourier transforms, we can apply Fourier transform to~\eqref{eqn:left:A} to obtain
\begin{align}
\mathcal{F}\left[\mathcal{A}^{\gamma}_{h,p}f(w)\right](\zeta) & = \frac{1}{h^{\gamma}} \sum_{k=0}^{\infty} (-1)^k\binom{-\gamma+k-1}{k}e^{-i\zeta(k-p)h}\hat{f}(\zeta) \nonumber \\
& = \frac{1}{h^{\gamma}}e^{i\zeta ph} \left(1-e^{-i \zeta h}\right)^{\gamma} \hat{f}(\zeta) \nonumber 
\\
& = (i \zeta )^\gamma \left(\frac{1-e^{-i \zeta h}}{i\zeta h}\right)^{\gamma} e^{i\zeta ph} \hat{f}(\zeta) 
\nonumber
\\
& = W_p(i\zeta h)(i\zeta)^{\gamma} \hat{f}(\zeta).
\end{align}
Similarly, we can compute that
\begin{align}
\mathcal{F}\left[\mathcal{B}^{\gamma}_{h,q}f(w)\right](\zeta) &
= \frac{1}{h^{\gamma}} \sum_{k=0}^{\infty} (-1)^k\binom{-\gamma+k-1}{k}e^{i\zeta(k-q)h}\hat{f}(\zeta) \nonumber \\
& = (-i \zeta )^\gamma \left(\frac{1-e^{i \zeta h}}{-i\zeta h}\right)^{\gamma}e^{-i\zeta q h} \hat{f}(\zeta) 
\nonumber \\
& = W_{-q}(-i \zeta h)(-i\zeta)^{\gamma}\hat{f}(\zeta).
\end{align}
In addition, since $W_p(z)$ and $W_{-q}(-z)$ are analytic for any complex number $\lvert z \rvert \leq 1$, 
there exist series expansions so that by the first-order Taylor expansion we have
\begin{align}
& W_p(z) := \left(\frac{1-e^{-z}}{z}\right)^\gamma e^{pz} = 1 + \left(p-\frac{\gamma}{2}\right)z + \mathcal{O}\left(\lvert z \rvert ^2\right),
\nonumber \\
& W_{-q}(-z) := \left(\frac{1-e^{z}}{-z}\right)^\gamma e^{-qz} = 1 -\left(q-\frac{\gamma}{2}\right)z + \mathcal{O}\left(\lvert z \rvert ^2\right).
\label{eqn:series}
\end{align}
Next, define a function $\hat{\psi}(h,\zeta)$ as the difference between $\mathcal{F}\left[\left((1+\theta)\mathcal{A}^{\gamma}_{h,p} + (1-\theta)\,\mathcal{B}^{\gamma}_{h,q}\right)f(w)\right](\zeta) $ and $\mathcal{F}\left[\left((1+\theta)\mathcal{I}^{-\gamma}_{-}+ (1-\theta)\,\mathcal{I}^{-\gamma}_{+}\right)f(w)\right](\zeta)$. By the linearity of Fourier transform, we have 
\begin{align}
\hat{\psi}(h,\zeta) & = (1+\theta)\left(\mathcal{F}\left[\mathcal{A}^{\gamma}_{h,p}f(w)\right](\zeta)-\mathcal{F}\left[\mathcal{I}_{-}^{-\gamma}f(w)\right](\zeta)\right) + (1-\theta)\left(\mathcal{F}\left[\mathcal{B}^{\gamma}_{h,q}f(w)\right](\zeta)-\mathcal{F}\left[\mathcal{I}_{+}^{-\gamma}f(w)\right](\zeta)\right) 
\nonumber 
\\
& = (1+\theta) (i\zeta)^{\gamma} \hat{f}(\zeta) \left( W_p(i\zeta h) - 1\right) + (1-\theta) (-i\zeta)^{\gamma}  \hat{f}(\zeta) \left(W_{-q}(-i\zeta h) - 1\right)
\nonumber \\
& = (1+\theta) (i\zeta)^{\gamma} \hat{f}(\zeta) \left( p -\frac{\gamma}{2}\right)(i\zeta h) - (1-\theta) (-i\zeta)^{\gamma}  \hat{f}(\zeta) \left( q -\frac{\gamma}{2}\right)(i\zeta h)
\nonumber \\
& = (1+\theta) (i\zeta)^{\gamma+1} \hat{f}(\zeta) \left( p -\frac{\gamma}{2}\right)h + (1-\theta) (-i\zeta)^{\gamma+1}  \hat{f}(\zeta) \left( q -\frac{\gamma}{2}\right)h
\nonumber \\
& \stackrel{\text{(a)}}{=} \left[(1+\theta) e^{i\pi\gamma/2} \left( p -\frac{\gamma}{2}\right)h - (1-\theta) e^{-i\pi\gamma/2}  \left( q -\frac{\gamma}{2}\right)\right]|\zeta|^{1+\gamma}\hat{f}(\zeta) h
\nonumber \\
& \stackrel{\text{(b)}}{=} \left[\cos\left(\frac{\gamma\pi}{2}\right)\left((p-q) + \theta(p+q-\gamma) \right) + \sin\left(\frac{\gamma\pi}{2}\right)\left((p+q-\gamma) + \theta(p-q) \right)\, i\right]  \, |\zeta|^{1+\gamma}\hat{f}(\zeta) h,
\end{align}
where we used the fact that for any real $x$, $ 0< 1+\gamma < 1$, we have $ix = |x|e^{i\,\sign(x)\pi/2}$ so that $(ix)^{1+\gamma} = |x|^{1+\gamma}e^{i\,\sign(x)\pi(\gamma+1)/2} = \sign(x)|x|^{1+\gamma}e^{i\,\sign(x)\pi\gamma/2}$ which implies equality (a), 
and we applied Euler's formula with $-1 < \gamma < 0$ to get equality (b). 
By our assumption $f \in \mathcal{C}^{4}(\mathbb{R})$, we have 
$$
\lvert \hat{f}(\zeta) \rvert \leq C(1+\lvert \zeta \rvert)^{-4},
$$ 
for a constant $C>0$ that may depend on $f$. Hence, by taking a sufficiently small $h$, we obtain,
\begin{align*}
\lvert \hat{\psi}(h,\zeta) \rvert & \leq \left| \left[\cos^2\left(\frac{\gamma\pi}{2}\right)\left(p-q + \theta(p+q-\gamma) \right)^2 + \sin^2\left(\frac{\gamma\pi}{2}\right)\left(p+q-\gamma + \theta(p-q) \right)^2 \right]\right|^{\frac{1}{2}} \, C(1+\lvert \zeta\rvert)^{\gamma-3}h +  c_0 h^2
\nonumber \\
& \leq \left[\cos\left(\frac{\gamma\pi}{2}\right)\left|p-q + \theta(p+q-\gamma) \right| + \left|\sin\left(\frac{\gamma\pi}{2}\right)\right|\left|p+q-\gamma + \theta(p-q) \right| \right]\, C(1+\lvert \zeta\rvert)^{\gamma-3}h +  c_0 h^2,
\end{align*}
where we also used the inequality that $\lvert \zeta \rvert^{\gamma+1} \leq (1+\lvert \zeta \rvert)^{\gamma+1}$ for $-1<\gamma<0$,
and $c_0 > 0$ is a constant may depend on $p, q$.
When $f \in L^1(\mathbb{R})$, the inverse Fourier transform exists with $-1<\gamma<0$, i.e. $\psi(h,w)=\frac{1}{2\pi i} \int_{-\infty}^{\infty} e^{-i\zeta w}\hat{\psi}(h,\zeta)d\zeta$, and it follows that
\begin{align}
\lvert \psi(h,w) \rvert 
&= \frac{1}{2\pi} \int_{-\infty}^{\infty} \hat{\psi}(h,\zeta) e^{-i \zeta w}  d \zeta 
\nonumber \\
& \leq  \frac{1}{2\pi} \int_{-\infty}^{\infty} \lvert \hat{\psi}(h,\zeta) \rvert d \zeta 
\nonumber 
\\
&\leq\left[\cos\left(\frac{\gamma\pi}{2}\right)\left|p-q + \theta(p+q-\gamma) \right| + \left|\sin\left(\frac{\gamma\pi}{2}\right)\right|\left|p+q-\gamma + \theta(p-q) \right| \right] \, \frac{C}{2\pi(|\gamma|+2)}h + \mathcal{O}\left(h^2\right),
\end{align}
where $\mathcal{O}(\cdot)$ hides the dependence on $p$, $q$ and $\gamma$, and $C>0$ is a constant that may depend on $f \in L^1(\mathbb{R}) \cap \mathcal{C}^{4}(\mathbb{R})$.

Hence, we conclude that
\begin{align}
&\left\lvert \mathcal{D}^{\gamma,-\theta}f(w) - \Delta_{h,p,q}^{\gamma,-\theta}f(w) \right\rvert 
\nonumber \\
&  = \frac{1}{2\cos(\pi\gamma/2)}\left\lvert (1+\theta)\left(\mathcal{A}^{\gamma}_{h,p}f(w) -\mathcal{I}_{-}^{\gamma}f(w)\right) +(1-\theta)\left(\mathcal{B}^{\gamma}_{h,q}f(w) - \mathcal{I}_{+}^{\gamma}f(w)\right)\right\rvert
\nonumber \\
& \leq \left[|p-q| + |\theta|(p+q-\gamma) + \left|\tan\left(\frac{\gamma\pi}{2}\right)\right|\left(p+q-\gamma + |\theta||p-q| \right)\right] \, \frac{C}{4\pi(|\gamma|+2)}h + \mathcal{O}\left(h^2\right) .
\end{align}
The proof is complete.
\end{proof}

\subsection{Proof of Corollary~\ref{cor:K}}
With the definitions of the truncated series $\mathcal{A}_{h,p,K}^{\gamma}$ defined in~\eqref{eqn:left:At} and $\mathcal{B}_{h,q,K}^{\gamma}$ in  \eqref{eqn:right:Bt}, we are now ready to prove Corollary~\ref{cor:K}.

\begin{proof}[Proof of Corollary~\ref{cor:K}]
We will first control the difference $\left\lvert \Delta_{h,p,q}^{\gamma,-\theta}\partial_{w}\varphi(w)  - \Delta_{h,p,q,K}^{\gamma,-\theta}\partial_{w}\varphi(w)  \right\rvert $. Then the triangular inequality can be applied with the fractional derivative approximation error bound in Theorem~\ref{thm:approx} to get the numerical truncation error.  

By using the definitions of $\mathcal{A}_{h,p}, \mathcal{B}_{h,q}$ and $\mathcal{A}_{h,p,K}, \mathcal{B}_{h,q,K}$, under the Assumption~\ref{H0}, there exist two universal constants $C_p > 0$ and $C_q > 0$ so that
\begin{align}
& \left|\Delta_{h,p,q}^{\gamma,-\theta}\partial_{w}\varphi(w)  - \Delta_{h,p,q,K}^{\gamma,-\theta}\partial_{w}\varphi(w)  \right|
\nonumber
\\ 
& = \frac{1}{2\lvert \cos(\pi\gamma/2) \rvert}\left\lvert (1+\theta)\left( \mathcal{A}_{h,p}^{\gamma}\partial_w \varphi(w-(k-p)h) -  \mathcal{A}_{h,p,K}^{\gamma}\partial_w \varphi(w-(k-p)h) \right) 
\right.
\nonumber
\\
& \qquad\qquad\qquad\qquad\qquad\qquad \left. + (1-\theta)\left( \mathcal{B}_{h,q}^{\gamma}\partial_w \varphi(w+(k-q)h) -  \mathcal{B}_{h,q,K}^{\gamma}\partial_w \varphi(w+(k-q)h) \right)    \right\rvert
\nonumber
\\
& \leq \frac{1}{2\lvert \cos(\pi\gamma/2) \rvert}\frac{1}{\Gamma(-\gamma)} \frac{1}{h^{\gamma}} \left(\sum_{k=K+p+1}^{\infty} \frac{\Gamma(-\gamma+k)}{\Gamma(k+1)}(1+\theta) \lvert\partial_{w}\varphi(w-(k-p)h) \rvert 
\right.
\nonumber
\\
& \qquad\qquad\qquad\qquad\qquad\qquad \left. + \sum_{k=K+q+1}^{\infty} \frac{\Gamma(-\gamma+k)}{\Gamma(k+1)}(1-\theta) \lvert \partial_{w}\varphi(w+(k-q)h) \rvert \right)
\nonumber
\\
& \leq \frac{(1+\theta)C_p}{h^{\gamma}} \sum_{k=K+p+1}^{\infty} \frac{\Gamma(-\gamma+k)}{\Gamma(k+1)} e^{-(k-p)h} +  \frac{(1-\theta)C_q }{h^{\gamma}} \sum_{k=K+q+1}^{\infty} \frac{\Gamma(-\gamma+k)}{\Gamma(k+1)}e^{-(k-q)h}.\label{follow:part:1}
\end{align}
Next, by applying Stirling's formula, we have as $k \rightarrow \infty$: 
\begin{align*}
\frac{\Gamma(-\gamma+k)}{\Gamma(k+1)} & \sim \frac{\sqrt{2\pi (k-1-\gamma)} \, (k-1-\gamma)^ {k-1-\gamma} \, e^{- (k-1-\gamma)}}{\sqrt{2\pi k} \,k^k \,e^{-k}} \\
& = \frac{(k-1-\gamma)\,^{k-1/2-\gamma}}{k\,^{k+1/2}}e^{1+\gamma} \\
& = k^{-\gamma-1}\left( 1- \frac{1+\gamma}{k} \right)^k \left( \frac{k-1-\gamma}{k} \right)^{-1/2-\gamma}e^{1+\gamma} \\
& \sim k^{-\gamma-1}.
\end{align*}
Therefore, it follows from \eqref{follow:part:1} that
\begin{align}
& \left|\Delta_{h,p,q}^{\gamma,-\theta}\partial_{w}\varphi(w)  - \Delta_{h,p,q,K}^{\gamma,-\theta}\partial_{w}\varphi(w)  \right|
\nonumber
\\
&\leq (1+\theta)C_ph \sum_{k=K+p+1}^{\infty} (hk)^{-\gamma-1} e^{-(k-p)h} + (1-\theta)C_qh \sum_{k=K+q+1}^{\infty} (hk)^{-\gamma-1} e^{-(k-p)h}
\nonumber
\\
 & \leq \left((1+\theta)C_p + (1-\theta)C_q  \right)\frac{1}{hK},\label{follow:part:2}
 \end{align}
where we abused the notation such that $C_p$, $C_q$ in \eqref{follow:part:2} may differ from
$C_p$, $C_q$ in \eqref{follow:part:1}.
Finally, the triangular inequality yields that
\begin{align*}
& \left\lvert \mathcal{D}^{\gamma,-\theta}\partial_{w}\varphi(w) - \Delta_{h,p,q,K}^{\gamma,-\theta}\partial_{w}\varphi(w)  \right\rvert 
\\
& \quad \leq \left|\mathcal{D}^{\gamma,-\theta}\partial_{w}\varphi(w)  - \Delta_{h,p,q}^{\gamma,-\theta}\partial_{w}\varphi(w)  \right| + \left|\Delta_{h,p,q}^{\gamma,-\theta}\partial_{w}\varphi(w)  - \Delta_{h,p,q,K}^{\gamma,-\theta}\partial_{w}\varphi(w)  \right|
\\
& \quad \leq \left[|p-q| + |\theta|(p+q-\gamma) + \left|\tan\left(\frac{\gamma\pi}{2}\right)\right|\left(p+q-\gamma + |\theta||p-q| \right)\right]\frac{C}{4\pi(|\gamma|+2)} h 
\nonumber \\
& \qquad\qquad\qquad + \left((1+\theta)C_p + (1-\theta)C_q  \right)\frac{1}{hK} +  \mathcal{O}\left(h^2\right),
\end{align*}
where $C_p$ and $C_q$ are two universal constants following Assumption~\ref{H0}. The proof is completed. 
\end{proof}

\subsection{Proof of Theorem~\ref{thm:bias:approx}}
\label{sec:appx_thmbias}

First, let us recall that
$\tilde{\nu}_{N}(g)=\frac{1}{H_{N}}\sum_{k=1}^{N}\eta_{k}g(\v{w}_{k})$ is the sample average, where $\v{w}_{k}$
satisfies the Euler-Maruyama discretisation with the approximated drift $b_{h,K}$:
\begin{align}\label{correspond:SDE}
\tilde{\v{w}}_{n+1} = \tilde{\v{w}}_n + \eta_{n+1}b_{h,K}(\tilde{\v{w}}_n,\alpha,\theta) + \varepsilon \eta_{n+1}^{1/\alpha} \Delta \v{L}^{\alpha,\theta}_{n+1},
\end{align}
The corresponding SDE of \eqref{correspond:SDE} is given as
\begin{align} \label{SDE2}
d\tilde{\v{w}}_t=b_{h,K}(\tilde{\v{w}}_{t-},\alpha,\theta)dt +\varepsilon d\v{L}_t^{\alpha,\theta},
\end{align}
and we define $\tilde{\nu}(g)=\int g(\v{w})\tilde{\pi}(d\v{w})$, where $\tilde{\pi}$ is the stationary distribution of \eqref{SDE2}. 

Next, let us introduce the following assumption that is
needed for Theorem~\ref{thm:bias:approx}.

\begin{assumption} \label{H1}
(i) Assume that the step sizes are decreasing and the sum diverges such that $
\lim_{n \rightarrow \infty} \eta_n=0\,,\lim_{N \rightarrow \infty} H_N = \infty$ .

(ii) Let $V: \mathbb{R} \rightarrow \mathbb{R}^*_+$ be a function in $\mathcal{C}^2$, if $\,\lim_{|x| \rightarrow \infty} V(w) = \infty$, $|\partial_w V| \leq C\sqrt{V}$ with some constant $C>0$ and $\partial^2_x V$ is bounded. Then there exists $a \in (0,1]$, $\delta>0$ and $\beta \in \mathbb{R}$, such that $|b|^2 \leq CV^a$ and $b(\partial_w V) \leq \beta - \delta V^a$ with $b$ defined in~\eqref{b:defn}. And the statement also holds for $\tilde{b}$. 

(iii) The SDEs defined in \eqref{eqn:AFLD} and \eqref{SDE2} are geometrically ergodic with their unique invariant measures.
\end{assumption} 

Before we proceed to the proof of Theorem~\ref{thm:bias:approx}, let us state a 
technical lemma bounding the error 
of $\left\lvert \mathbb{E}[g(\v{w}_{t})] - \mathbb{E}[g(\tilde{\v{w}}_t)] \right\rvert$, where $(\v{w}_{t})_{t \geq 0}$ and $(\tilde{\v{w}}_t)_{t \geq 0}$ follow SDEs in~\eqref{eqn:AFLD} and~\eqref{SDE2}.

\begin{lemma} \label{expErr}
Let $(\v{w}_{t})_{t \geq 0}$ and $(\tilde{\v{w}}_t)_{t \geq 0}$ follow SDEs in~\eqref{eqn:AFLD} and~\eqref{SDE2} and $g$ be a given test function with bounded $|\partial_w g|$. Suppose $K \in \mathbb{N}\cup\{0\}$ is a constant satisfying Assumption~\ref{H0} with respect to $\partial_x \varphi$ and Assumption~\ref{H1} holds, then the following bound holds:
\begin{align}
\left\lvert \mathbb{E}\left[g\left(\v{w}_{t}\right)\right] - \mathbb{E}\left[g\left(\tilde{\v{w}}_t\right)\right] \right\rvert 
& \leq\frac{\tilde{C}}{4\pi(|\gamma|+2)}\left[|p-q| + |\theta|(p+q-\gamma) + \left|\tan\left(\frac{\gamma\pi}{2}\right)\right|\left(p+q-\gamma + |\theta||p-q| \right)\right] h 
\nonumber \\
& \qquad\qquad\qquad+ \left((1+\theta)C'_p + (1-\theta)C'_q  \right)\frac{1}{hK}+\mathcal{O}\left(h^{2}\right)\,,
\end{align}
where the constants $\tilde{C},C_p',C_q'>0$ may depend on the function $\partial_w \varphi$ and the bound for $\lvert \partial_w g \rvert$.
\end{lemma}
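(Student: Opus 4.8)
The plan is to compare the two semigroups generated by \eqref{eqn:AFLD} and \eqref{SDE2} through a standard Gr\"onwall-type argument, where the discrepancy between the generators is exactly the fractional-derivative approximation error controlled by Corollary~\ref{cor:K}. First I would fix the test function $g$ and set $u(s,\v{w}) := \mathbb{E}[g(\tilde{\v{w}}_{t}) \mid \tilde{\v{w}}_s = \v{w}]$, the solution of the backward Kolmogorov equation $\partial_s u + \tilde{\mathcal{L}} u = 0$ with terminal condition $u(t,\cdot) = g$, where $\tilde{\mathcal{L}}$ is the generator of \eqref{SDE2} (i.e., the generator of Proposition~\ref{prop:generator} with drift $b_{h,K}$ in place of $b$). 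Applying It\^o's formula for jump processes to $s \mapsto u(s,\v{w}_s)$ along the \emph{exact} process \eqref{eqn:AFLD} with generator $\mathcal{L}$, and taking expectations, all the martingale terms vanish and the two boundary terms give $\mathbb{E}[g(\v{w}_t)] - \mathbb{E}[g(\tilde{\v{w}}_t)]$ after using stationarity/ergodicity appropriately; what remains is
\begin{align*}
\mathbb{E}[g(\v{w}_t)] - \mathbb{E}[g(\tilde{\v{w}}_t)] = \int_0^t \mathbb{E}\!\left[ (\mathcal{L} - \tilde{\mathcal{L}}) u(s, \v{w}_s) \right] ds.
\end{align*}
Since $\mathcal{L}$ and $\tilde{\mathcal{L}}$ differ only in their drift coefficients, $(\mathcal{L} - \tilde{\mathcal{L}})u(s,\v{w}) = \big(b(\v{w},\alpha,\theta) - b_{h,K}(\v{w},\alpha,\theta)\big)\,\partial_w u(s,\v{w})$ in the one-dimensional case.

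The second step is to bound the two factors in the integrand. For the drift difference, recall $b(w,\alpha,\theta) = \tfrac{\varepsilon^\alpha}{\varphi(w)} \mathcal{D}^{\gamma,-\theta}(\partial_w\varphi)(w)$ and $b_{h,K}(w,\alpha,\theta) = \tfrac{\varepsilon^\alpha}{\varphi(w)} \Delta^{\gamma,-\theta}_{h,K}(\partial_w\varphi)(w)$ with $\gamma = \alpha-2$; Corollary~\ref{cor:K} (with $p=q=0$) gives exactly
\begin{align*}
\left| \mathcal{D}^{\gamma,-\theta}\partial_w\varphi(w) - \Delta^{\gamma,-\theta}_{h,K}\partial_w\varphi(w) \right| \leq \frac{C}{4\pi(|\gamma|+2)}\Big(|\theta||\gamma| + \big|\tan(\gamma\pi/2)\big||\gamma|\Big) h + \big((1+\theta)C_{p} + (1-\theta)C_{q}\big)\frac{1}{hK} + \mathcal{O}(h^2),
\end{align*}
and dividing by $\varphi(w) = e^{-\varepsilon^{-\alpha}f(w)}$ is harmless on the compact region where Assumption~\ref{H0} is exercised (and the tail decay in Assumption~\ref{H0} is what made the truncation bound $1/(hK)$ valid in the first place), absorbing the $\varepsilon^\alpha/\varphi$ prefactor into the renamed constants $\tilde C, C_p', C_q'$. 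For the factor $\partial_w u(s,\v{w})$, I would invoke the regularity half of Assumption~\ref{H1}: the Lyapunov/ergodicity conditions on $b_{h,K}$ guarantee that the semigroup of \eqref{SDE2} is smoothing and that $|\partial_w u(s,\cdot)|$ stays bounded uniformly in $s \in [0,t]$ by a constant depending only on $\sup|\partial_w g|$ (this is the standard gradient-estimate consequence of geometric ergodicity plus the Lyapunov drift condition $b(\partial_w V) \le \beta - \delta V^a$). Combining the two bounds, the integrand is bounded uniformly in $s$, and one then has to check that the time integral does not blow up — here one uses that the difference of invariant measures is what is really being estimated, so the effective integration is over a bounded horizon (or one passes to the stationary regime where $\mathbb{E}[(\mathcal{L}-\tilde{\mathcal{L}})u]$ is time-independent), yielding the stated bound without an extra factor of $t$.

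The main obstacle I anticipate is making the gradient estimate $\sup_{s \le t}\|\partial_w u(s,\cdot)\|_\infty \le C\,\|\partial_w g\|_\infty$ rigorous for a jump SDE driven by an \emph{asymmetric} $\alpha$-stable process: unlike the Brownian case there is no clean Bismut-type formula, and one must either quote a regularization result for stable-driven SDEs with the Lyapunov structure of Assumption~\ref{H1}(ii)-(iii), or argue via the coupling/ergodicity hypothesis directly. A secondary but more technical nuisance is justifying the It\^o expansion and the vanishing of the compensated-jump martingale, which requires $u$ to be in the domain of both generators and enough integrability of $g(\v{w}_t)$ — both of which follow from the Lyapunov function $V$ in Assumption~\ref{H1} controlling moments of the process, but need to be stated carefully. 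Once these two analytic points are in place, the rest is the bookkeeping of substituting Corollary~\ref{cor:K} and collecting constants, which is routine.
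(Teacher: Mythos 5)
Your overall route is the same as the paper's: both proofs rest on the Duhamel-type identity $\mathbb{E}[g(\v{w}_t)]-\mathbb{E}[g(\tilde{\v{w}}_t)]=\int_0^t P^{\v{w}}_s(\mathcal{L}-\mathcal{L}^{\tilde{\v{w}}})P^{\tilde{\v{w}}}_{t-s}g\,ds$ (the paper obtains it via the Markov semigroup property following Lemma~3 of \citet{FLMC}; your derivation via It\^o's formula applied to the backward Kolmogorov solution $u(s,\cdot)=P^{\tilde{\v{w}}}_{t-s}g$ is the same identity), followed by the observation that the generators differ only in the drift, so the error reduces to $(b-b_{h,K})$ times a first derivative, which is then fed into Corollary~\ref{cor:K} with $p=q=0$. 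Your explicit handling of the $\varepsilon^{\alpha}/\varphi$ prefactor and your flagging of the gradient estimate for the asymmetric stable-driven semigroup are, if anything, more careful than the paper's write-up.

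The genuine gap is in how you dispose of the time integral. You only assert a bound on the integrand that is \emph{uniform} in $s$, namely $\sup_{s\le t}\|\partial_w u(s,\cdot)\|_\infty\le C\|\partial_w g\|_\infty$ together with the pointwise drift-difference bound; integrating this over $[0,t]$ yields a right-hand side proportional to $t$, which is useless precisely where the lemma is needed, since Theorem~\ref{thm:bias:approx} applies it in the limit $t\to\infty$. Your proposed remedies --- ``the effective integration is over a bounded horizon'' or ``pass to the stationary regime'' --- are not arguments; nothing in your setup makes the horizon bounded, and in the stationary regime a time-independent integrand still produces a factor of $t$. The missing ingredient is exactly what Assumption~\ref{H1}(iii) is there for: geometric ergodicity supplies an exponentially decaying factor inside the integrand, which the paper inserts as $|P^{\v{w}}_s F|\le c\,e^{-\lambda_w s}\|F\|_\infty$ and then uses $\int_0^t e^{-\lambda_w s}ds\le 1/\lambda_w$ to obtain a bound uniform in $t$; equivalently, in your formulation one needs a gradient estimate for $P^{\tilde{\v{w}}}_{t-s}$ that decays like $e^{-\lambda(t-s)}$ rather than merely staying bounded. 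Without that decaying factor the constants $\tilde{C},C'_p,C'_q$ in the statement (which in the paper carry the factor $c/\lambda_w$) cannot be taken independent of $t$, and the proof does not close.
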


\begin{proof}
The proof is inspired by the proof of Lemma~3 in \citet{FLMC}. Let $\{ P^{\v{w}}_t \}_{t \geq 0}$ and $\{ P^{\tilde{\v{w}}}_t \}_{t \geq 0}$ be the corresponding Markov semigroups, i.e.
$
P_t^{\v{w}}g(w) = \mathbb{E}_{w}[g(\v{w}_{t})],\, P_t^{\tilde{\v{w}}}g(w) = \mathbb{E}_{w}[g(\tilde{\v{w}}_t)].
$
Using the Markov semigroup property, 
following Lemma~3 in \citet{FLMC}, we have
\begin{align*}
\left\lvert \mathbb{E}\left[g\left(\v{w}_{t}\right)\right] - \mathbb{E}[g(\tilde{\v{w}}_t)] \right\rvert & = \left\lvert  \int_0^t P_s^{\v{w}} \left( \mathcal{L}^{\v{w}} - \mathcal{L}^{\tilde{\v{w}}}\right) P_{t-s}^{\tilde{\v{w}}}g(w) ds \right\rvert\,,
\end{align*}
where $\mathcal{L}^{\v{w}}$ and $\mathcal{L}^{\tilde{\v{w}}}$ are the linear generators of $P_t^{\v{w}}$ and $P_t^{\tilde{\v{w}}}$, such that, for $g \in L^2(\pi)$,\,$ \partial_t P_t g = \mathcal{L} P_t g = P_t \mathcal{L} g $.
The infinitesimal generators $\mathcal{L}^{\v{w}}$ and $\mathcal{L}^{\tilde{\v{w}}}$ are computed in~\eqref{eqn:generator}. By the interchangeability of integration and differentiation, we have
\begin{align*}
\left\lvert  \int_0^t P_s^{\v{w}} \left( \mathcal{L}^{\v{w}} - \mathcal{L}^{\tilde{\v{w}}}\right) P_{t-s}^{\tilde{\v{w}}}g(w) ds \right\rvert = \left\lvert  \int_0^t P_s^{\v{w}} \left( b(w,\alpha,\theta) - b_{h,K}(w,\alpha,\theta)\right) P_{t-s}^{\tilde{\v{w}}}\partial_wg(w) ds \right\rvert.
\end{align*}
By the ergodicity assumptions, for a bounded function $f$, there exist some constants $c>0$ and $\lambda_w, \lambda_{\tilde{w}}>0$ so that
\begin{align}
\lvert P_s^{\v{w}}f\rvert \leq c \, e^{-\lambda_w s}\lVert f \rVert_{\infty}, \qquad \lvert P_{t-s}^{\tilde{\v{w}}}f\rvert \leq c \, e^{-\lambda_{\tilde{w}}(t-s)}\lVert f \rVert_{\infty}.
\end{align}
Using the boundedness assumption for $\lvert \partial_w g \rvert $ and Corollary~\ref{cor:K}, 
and the fact that $\int_0^t e^{-\lambda_w s}ds \leq \frac{1}{\lambda_{w}}$, we have
\begin{align}
\left\lvert \mathbb{E}[g(\v{w}_{t})] - \mathbb{E}[g(\tilde{\v{w}}_t)] \right\rvert
& \leq \frac{\tilde{C}}{4\pi(|\gamma|+2)}\left[|p-q| + |\theta|(p+q-\gamma) + \left|\tan\left(\frac{\gamma\pi}{2}\right)\right|\left(p+q-\gamma + |\theta||p-q| \right)\right] h 
\nonumber \\
& \qquad\qquad\qquad+ \left((1+\theta)C'_p + (1-\theta)C'_q  \right)\frac{1}{hK}+\mathcal{O}\left(h^{2}\right)\,,
\end{align}
where the constants $\tilde{C} = \frac{c}{\lambda_{w}}\,C$, $C'_p = \frac{c}{\lambda_{w}} \, C_p$ and $C'_q = \frac{c}{\lambda_{w}} \, C_q$ may depend on $\partial_w \varphi$ and the bound for $\lvert \partial_w g \rvert$.
\end{proof}

Now we are ready to prove Theorem~\ref{thm:bias:approx}.

\begin{proof}[Proof of Theorem~\ref{thm:bias:approx}]
With the ergodicity assumptions, we have
\begin{align}
\left|\nu(g) - \lim_{N \rightarrow \infty} \tilde{\nu}_N(g) \right|  & = \left\lvert \nu(g)-\tilde{\nu}(g)+\tilde{\nu}(g) - \lim_{N \rightarrow \infty} \tilde{\nu}_N(g) \right\rvert 
\leq \lim_{t \rightarrow \infty} \left| \mathbb{E}\left[g(\v{w}_{t})\right] - \mathbb{E}\left[g(\tilde{\v{w}}_t)\right] \right|  + \left\lvert\tilde{\nu}(g) - \lim_{N \rightarrow \infty} \tilde{\nu}_N(g) \right\rvert .
\end{align}
By Assumption~\ref{H1}(ii),  \cite{panloup2008recursive} and  similar 
arguments as in \cite{FLMC}, we get,
$$
\left\lvert \tilde{\nu}(g) - \lim_{N \rightarrow \infty} \tilde{\nu}_N(g) \right\rvert =0, \quad \text{a.s.}
$$
By applying Lemma~\ref{expErr} as $t \rightarrow \infty$, we obtain:
\begin{align}
\left\lvert \nu(g) - \lim_{N \rightarrow \infty} \tilde{\nu}_N (g) \right\rvert 
&\leq \lim_{t \rightarrow \infty}\left\lvert \mathbb{E}[g(\v{w}_{t})] - \mathbb{E}[g(\tilde{\v{w}}_t)]\right\rvert\nonumber \\
&  \leq \frac{\tilde{C}}{4\pi(|\gamma|+2)}\left[|p-q| + |\theta|(p+q-\gamma) + \left|\tan\left(\frac{\gamma\pi}{2}\right)\right|\left(p+q-\gamma + |\theta||p-q| \right)\right] h 
\nonumber \\
& \qquad\qquad\qquad + \left((1+\theta)C'_p + (1-\theta)C'_q  \right)\frac{1}{hK}+\mathcal{O}\left(h^{2}\right)\,,
\end{align}
where $\tilde{C},\,C_p',\,C_q'>0$ are constants 
that may depend on $\partial_w \varphi$ and the bound of $\lvert \partial_w g \rvert$.
Finally, by taking $p=q=0$, we complete the proof.
\end{proof}
\end{document}